\newtheorem{theorem}{Theorem}
\newtheorem{proposition}{Proposition}
\newtheorem{lemma}{Lemma}
\newtheorem{definition}{Definition}
\newtheorem{proof}{Proof}
\algnewcommand{\LineComment}[1]{\State \(\triangleright\) #1}
\newcites{appx}{References}
\newif\ifunderreview
\newif\ifsubmission
\newif\ifappendix
\newcommand{\todo}[1]{}
\newcommand{\replace}[2]{}
\newcommand{\sw}[1]{}
\newcommand{\fh}[1]{}
\newcommand{\archit}[1]{}
\newcommand{\todo}[1]{\textbf{\textcolor{red}{[TODO: #1]}}}
\newcommand{\replace}[2]{\textbf{\textcolor{red}{[del: \cancel{#1}]}}\textbf{\textcolor{blue}{[new: #2]}}}
\newcommand{\sw}[1]{\textbf{\textcolor{cyan}{[SW: #1]}}}
\newcommand{\fh}[1]{\textbf{\textcolor{brown}{[FH: #1]}}}
\newcommand{\archit}[1]{\textbf{\textcolor{olive}{[AB: #1]}}}
\newcommand{\customlabel}[2]{%
   \protected@write \@auxout {}{\string \newlabel {#1}{{#2}{\thepage}{#2}{#1}{}} }%
   \hypertarget{#1}{}
}
\newcommand{\xv}{\boldsymbol{x}}
\newcommand{\xopt}{\xv^{\text{opt}}}
\newcommand{\B}{\mathcal{B}}
\newcommand{\D}{\mathcal{D}}
\newcommand{\Pcal}{\mathcal{P}}
\newcommand{\Scal}{\mathcal{S}}
\newcommand{\X}{\mathcal{X}}
\newcommand{\Bx}{\B_{\X}}
\newcommand{\Xg}{\X^{\gamma}}
\newcommand{\Xgp}{\X^{\gamma^\prime}}
\newcommand{\ceil}[1]{\lceil #1 \rceil}
\newcommand{\pe}[2]{D_{\mathrm{PE}}(#1 \| #2)}
\newcommand{\indic}[1]{\mathbb{I}[#1]}
\newcommand{\citewithname}[1]{\citeauthor{#1}~\shortcite{#1}}
\newcommand{\wm}{\texttt{Width multiplier}}
\newcommand{\ta}{\texttt{TrivialAugment}}
\begin{document}
\title{
    PED-ANOVA: Efficiently Quantifying Hyperparameter Importance \\ in Arbitrary Subspaces \\
}

\author{
  Shuhei Watanabe
  \and
  Archit Bansal
  \And
  Frank Hutter \\
  \affiliations
  Department of Computer Science, University of Freiburg, Germany \\
  \emails
  \{watanabs,bansala,fh\}@cs.uni-freiburg.de \\
}

\maketitle

\begin{abstract}
  The recent rise in popularity of Hyperparameter Optimization (HPO) for deep learning has highlighted the role that good hyperparameter (HP) space design can play in training strong models.
  In turn, designing a good HP space is critically dependent on
  understanding the role of different HPs.
  This motivates research on HP Importance (HPI), e.g., with the popular method of functional ANOVA {(f-ANOVA)}.
  However, the original f-ANOVA formulation is inapplicable to the subspaces most relevant to algorithm designers, such as those defined by top performance.
  To overcome this issue, we derive a novel formulation of {f-ANOVA} for arbitrary subspaces and propose an algorithm that uses Pearson divergence (PED) to enable a closed-form calculation of HPI.
  We demonstrate that this new algorithm, dubbed \emph{PED-ANOVA}, is able to successfully identify important HPs in different subspaces while also being extremely computationally efficient.
\end{abstract}

\section{Introduction}
Following on the heels of widespread adoption of deep learning models in various industries and areas of research, Hyperparameter Optimization 
(HPO)~\cite{bergstra2012random,snoek2012practical,bergstra2011algorithms,lindauer2021smac3,watanabe2023tpe}
for deep learning has gained increasing prominence as the path forward for making deep learning more accessible and robust. In particular, recent research has highlighted the role that good hyperparameter (HP) space design can play in training strong models~\cite{chen2018bayesian,melis2018on,henderson-aaai18a}.
In practice, while a large search space is necessary to find high-performance models~\cite{zimmer2021auto},
a reduced search space that retains only important HPs is essential for efficiently finding them~\cite{perrone2019learning}.
Therefore, it is crucial to understand the role that different HPs play in a search space.

\begin{figure}
  \centering
  \includegraphics[width=0.48\textwidth]{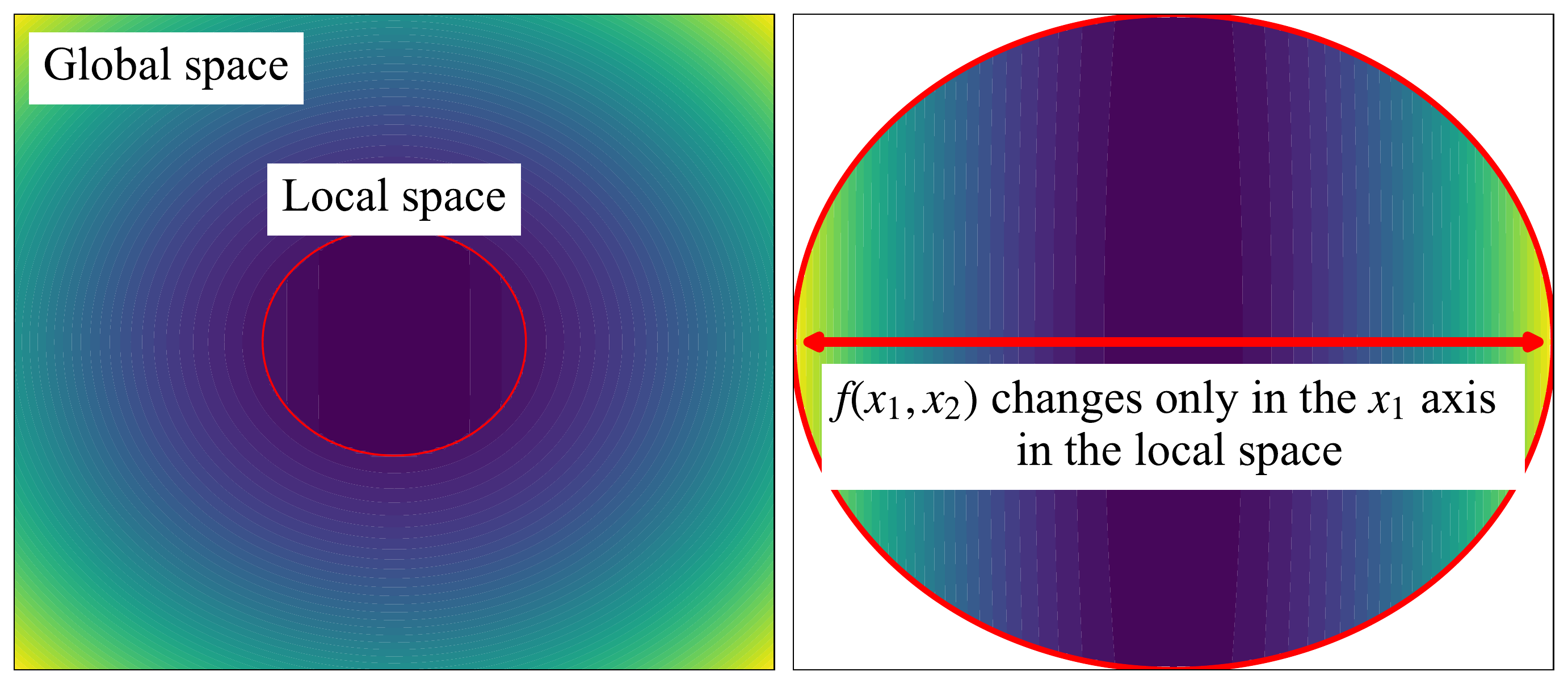}
  \caption{
    An example where the trend of HPI changes in the global and local space (top-$10\%$).
    The horizontal axis is the $x_1$-axis, the vertical axis is the $x_2$-axis,
    and $f(x_1, x_2)$ is the objective function to analyze (darker is better).
    \textbf{Left}: the normalized contour plot of $f(x_1, x_2)$ in global space;
    both $x_1$ and $x_2$ appear to variate $f(x_1, x_2)$ equally.
    The red circle is the promising domain we would like to explore.
    \textbf{Right}: the normalized contour plot of $f(x_1, x_2)$ in the local space
    (the red circle in global space);
    $f(x_1, x_2)$ variates only by $x_1$, and thus $x_1$ is more important.
    Such a trend cannot be captured by existing methods.
  }
  \vspace{-4mm}
  \label{main:intro:fig:when-local-is-important}
\end{figure}

\begin{figure*}[t]
  \begin{center}
    \subfloat[\sloppy The true $\gamma$-set of $f(x_1, x_2) = x_1^2 + x_2^2$]{
      \includegraphics[width=0.35\textwidth]{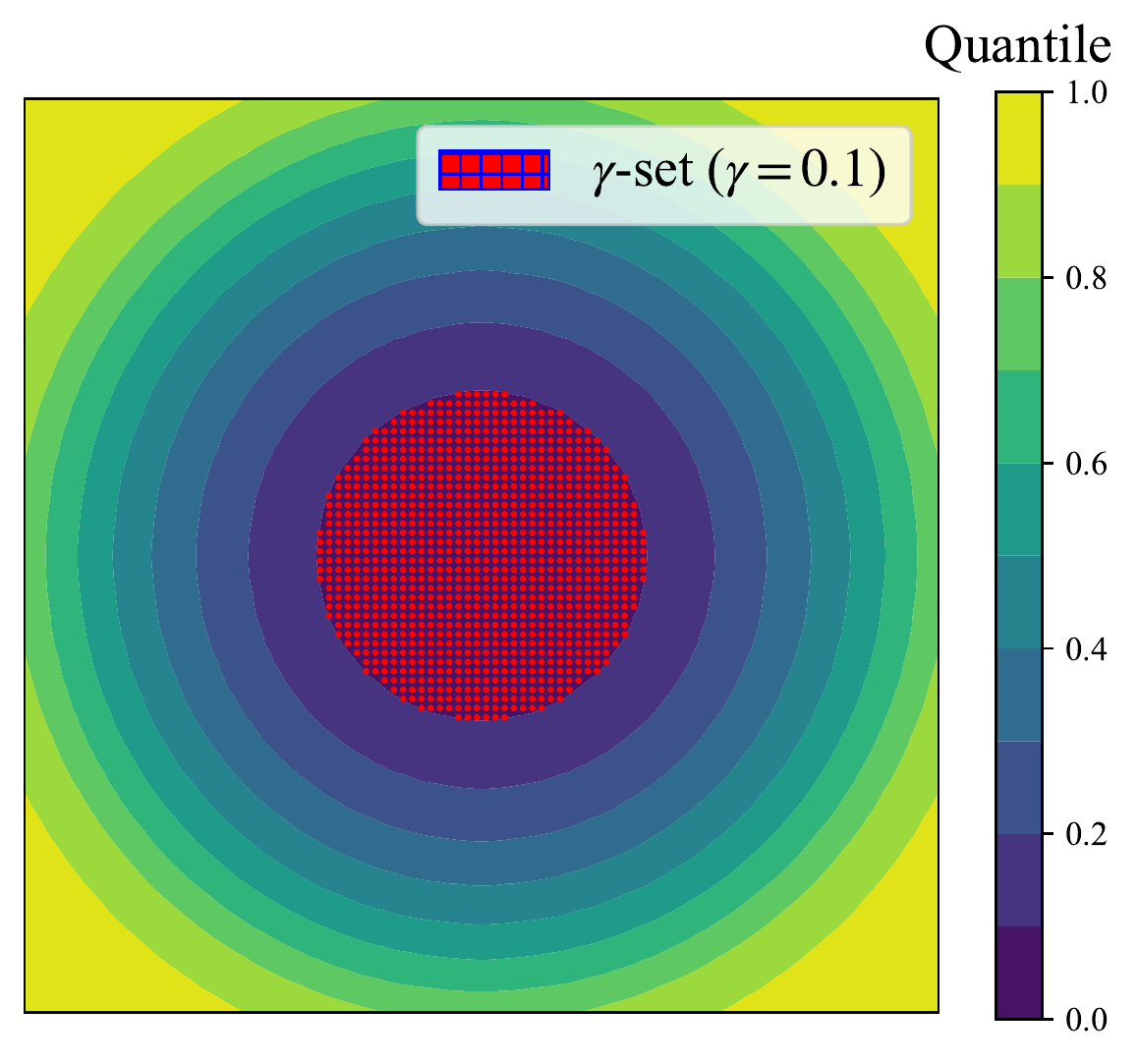}
    }
    \subfloat[\sloppy An empirical $\gamma$-set PDF using KDE of $f(x) = \sin x$]{
      \includegraphics[width=0.60\textwidth]{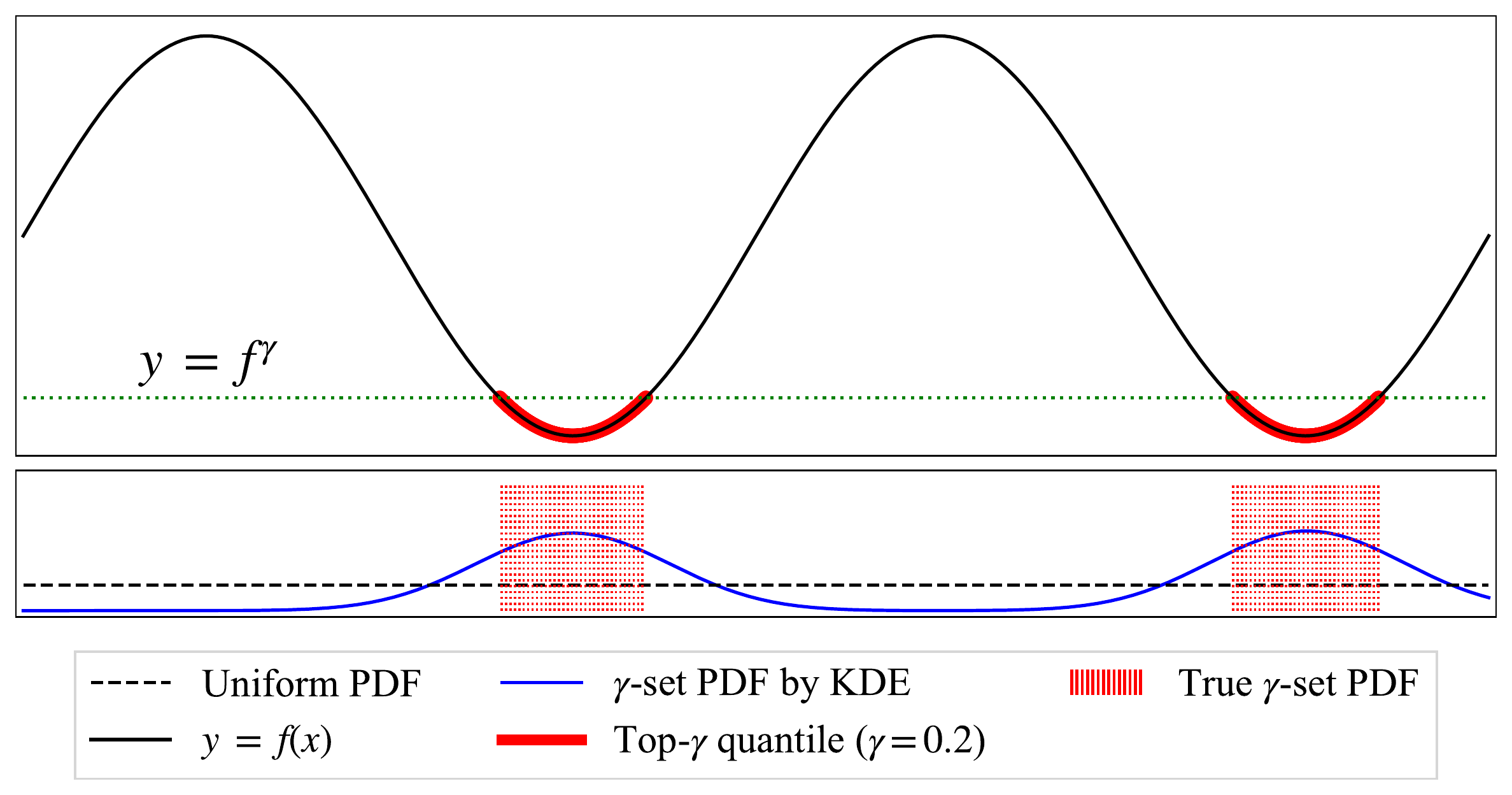}
    }
    \vspace{-2mm}
    \caption{
      Conceptual visualizations of the $\gamma$-set
      and the $\gamma$-set PDF.
      \textbf{Left}:
      the true $\gamma$-set in a 2D example.
      Darker is better in the figure and 
      we colored the top-$10\%$ in red, which is the $\gamma$-set $\Xg$ ($\gamma=0.1$)
      in this example.
      \textbf{Right}:
      the true $\gamma$-set and the $\gamma$-set PDF in a 1D example.
      The green dotted line shows the $\gamma$-quantile value $f^\gamma$,
      which achieves the top-$20\%$ in this example,
      and the red solid lines
      are the $\gamma$-set $\Xg$ ($\gamma=0.2$).
      The red dotted spaces below show the
      true $\gamma$-set PDF $p(\xv|\Xg)$,
      but 
      since we do not have the analytical form
      in practice,
      this PDF is estimated by KDE (the blue solid line).
      \label{main:background:fig:gamma-set-conceptual-viz}
    }
    \vspace{-4mm}
  \end{center}
\end{figure*}

This is the driving force behind previous research into the quantification of HP Importance (HPI)~\cite{hutter2014efficient,biedenkapp2017efficient},
which still remains a largely understudied section of HPO research. 
Several HPO frameworks~\cite{biedenkapp2018cave,akiba2019optuna,sass2022deepcave} have previously utilized functional ANOVA (f-ANOVA)~\cite{hooker2007generalized,hutter2014efficient} to provide a better interpretation of the role of different HPs,
but the original f-ANOVA~formulation is not very practical for the interpretation of specific subspaces of a search space.
Such subspaces are often of particular interest to algorithm developers due to various properties, for example, the ``local space'' visualized in Figure~\ref{main:intro:fig:when-local-is-important} could represent a region of high performance. 
Nevertheless, prior works~\cite{hutter2014efficient,biedenkapp2018cave} have attempted to overcome this and quantify HPI in specific subspaces using f-ANOVA.
However, since their formulation did not constrain the calculations to subspaces of high interest,
we argue that the results are biased towards unimportant subspaces.
At the same time, obtaining an unbiased quantification of HPI in specific subspaces of interest,
which we refer to as \emph{local} spaces in contrast to the full \emph{global} space,
is mathematically non-trivial.

To overcome this issue, we first formally define local HPI as HPI in a local space and
we derive a novel formulation of f-ANOVA to compute local HPI for arbitrary local spaces.
Still, our formulation would require Monte-Carlo sampling in general and it is computationally intractable.
Therefore, we show that local HPI is tractable without a Monte-Carlo sampling under some constraints
and propose an algorithm that uses Pearson divergence (PED, \cite{pearson1900x}) to enable a closed-form computation of HPI.
In a series of experiments,
we first verify that our algorithm correctly provides global and local HPI in a toy function.
Then we demonstrate that our algorithm takes only less than a second for $10^5$ data points
while the prior f-ANOVA~\cite{hutter2014efficient} would take more than a week.

To provide a solid picture of how to use our method,
we perform analysis on JAHS-Bench-201~\cite{bansal2022jahs},
which has one of the largest search spaces among HPO benchmarks.
In the analysis, we find that
it is suboptimal to design a search space relying only on global HPI
because we potentially miss important HPs in a local space if the global HPI of these HPs are dominated by the most important HP.
We demonstrate that local HPI plays a crucial role to avoid this issue.
Furthermore, our method has several other possible applications such as
(1)~post-hoc analysis for HPO,
(2)~adaptive (e.g., meta-learned) search space reductions for faster HPO,
and
(3)~exploratory data analysis.
We discuss these in
more detail in Appendix~\ref{appx:use-cases:section},
along with the advantages and limitations of our method.

In summary, the contributions of this paper are to:
\begin{enumerate}
  \item reformulate local HPI mathematically and derive the general formula of local HPI,
  \item provide a closed-form calculation for local HPI using PED that handles
  even $10^8$ data points in a minute, and
  \item benchmark performance compared with the original f-ANOVA.
\end{enumerate}
To facilitate reproducibility,
our implementation is available at
\ifunderreview
\url{https://anonymous.4open.science/r/local-anova-FBF8/}.
\else
\url{https://github.com/nabenabe0928/local-anova/}.
\fi

\section{Background \& Related Work}

\subsection{Preliminaries}
\label{main:background:sec:preliminaries}
Throughout this paper, we use the following terms:
\begin{enumerate}
  \item \textbf{$\gamma$-quantile value $f^\gamma$}:
  The function value $f^\gamma \in \mathbb{R}$
  that achieves the top-$\gamma$ quantile with respect to the objective function $f: \X \rightarrow \mathbb{R}$ to analyze in the global space
  $\X$,
  \item \textbf{$\gamma$-set $\Xg$}:
  A set of configurations $\Xg$ that achieves the top-$\gamma$ quantile
  in the global space $\X$, and
  \item \textbf{Marginal $\gamma$-set PDF $p_d(x_d|\Xg)$}:
  The marginal PDF of the $\gamma$-set PDF $p(\xv|\Xg)$:
  \begin{equation}
  \begin{aligned}
    p_d(x_d|\Xg) \coloneqq
    \int_{\xv_{-d} \in \X_{-d}}
    p(\xv|\Xg) d\xv_{-d}.
  \end{aligned}
  \end{equation}
\end{enumerate}
We provide the formal definitions in Appendix~\ref{appx:theoretical-details:section}
and the conceptual visualizations in Figure~\ref{main:background:fig:gamma-set-conceptual-viz}.
Note that $\X \coloneqq \X_1 \times \dots \times \X_D$ is the search space,
$\X_d \subseteq \mathbb{R}$ for $d \in [D] \coloneqq \{1,\dots,D\}$ is
the domain of the $d$-th HP,
$\xv_{s} \sim \X_{s} \subseteq \mathbb{R}^{|s|}$ denotes $\xv_{s}$ is sampled from
the uniform distribution on $\X_{s}$ where $s \subseteq [D]$,
and $\X_{-d} \subseteq \mathbb{R}^{D - 1}$
is $\X$ without the $d$-th dimension.
Furthermore, we consistently denote the PDF of the uniform distribution
as \emph{uniform PDF}
and follow the assumptions stated in Appendix~\ref{appx:proofs:section:assumptions}.

\subsection{f-ANOVA}
In this section, we describe f-ANOVA for one-dimensional effects
and refer to more details about the general version in Appendix~\ref{appx:background:section:general-anova}.
Suppose we would like to quantify HPI of
a function $f(\xv)$ defined on $\X$, then global HPI~\cite{hooker2007generalized} requires
(see Figure~\ref{main:background:fig:anova-intuition} for the intuition):
\begin{enumerate}
  \item \textbf{Global mean}:
  \begin{equation}
  \begin{aligned}
    m \coloneqq \mathbb{E}_{\xv \sim \X}[f(\xv)], \\
  \end{aligned}
  \label{main:background:eq:global-mean}
  \end{equation}
  \item \textbf{Marginal mean}:
  \begin{equation}
  \begin{aligned}
    f_d(x_d) \coloneqq \mathbb{E}_{\xv_{-d} \sim \X_{-d}}[f(\xv | x_d)], \\
  \end{aligned}
  \label{main:background:eq:marginal-mean}
  \end{equation}
  \item \textbf{Marginal variance}:
  \begin{equation}
  \begin{aligned}
    v_d \coloneqq \mathbb{E}_{x_d \sim \X_d}[(f_d(x_d) - m)^2].
  \end{aligned}
  \label{main:background:eq:marginal-var}
  \end{equation}
\end{enumerate}
Note that $f(\xv | x_d)$ implies that we fix the $d$-th HP
of $\xv$ to $x_d$.
When we denote the global variance as $v_0$,
the ratio $v_d / v_0$ is the \emph{global HPI} of the $d$-th HP
and in essence, the magnitude of the marginal variance represents
the relative importance.

\begin{figure}[t]
  \begin{center}
    \subfloat[\sloppy The contour plot of $f(x_1,x_2)=x_1^2 + x_2^2 / 10$ and the marginal means for each dimension.]{
      \includegraphics[width=0.2\textwidth]{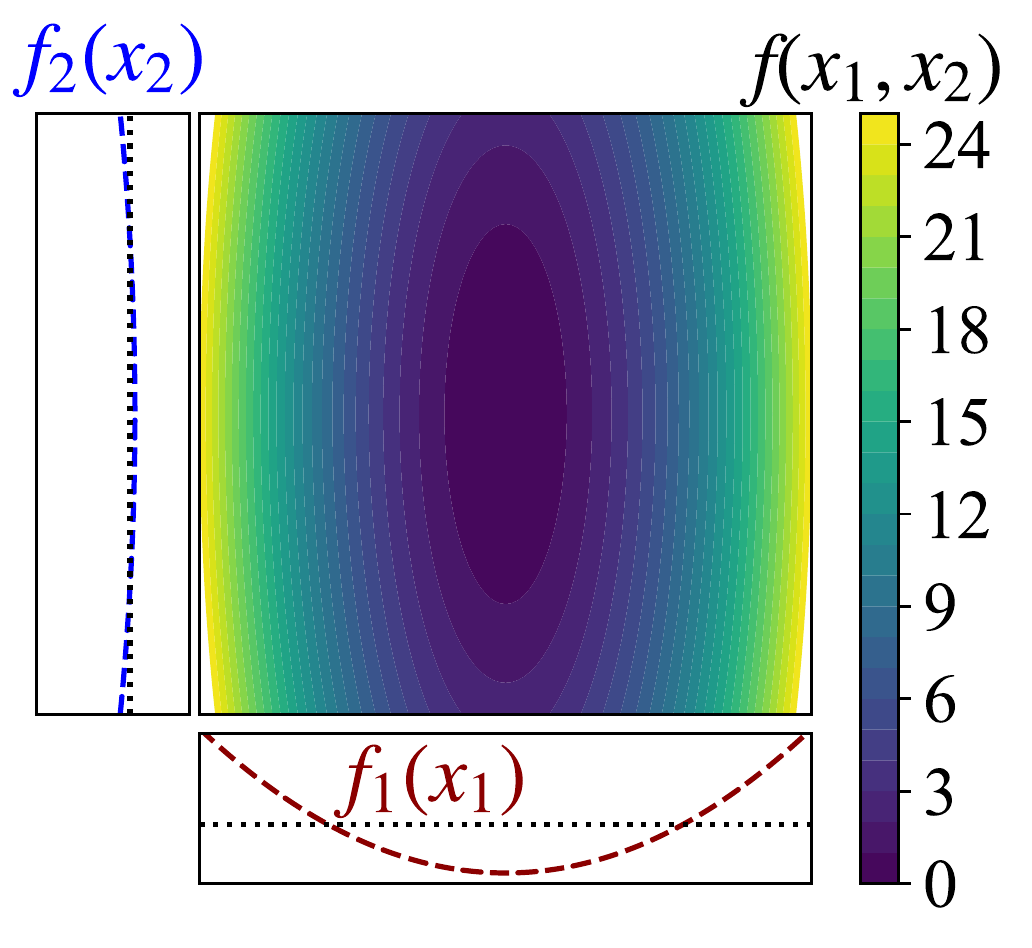}
    }
    \hspace{3mm}
    \subfloat[\sloppy The conceptual visualization of global HPI on $f(x_1, x_2)=x_1^2+x_2^2 / 10$.]{
      \includegraphics[width=0.2\textwidth]{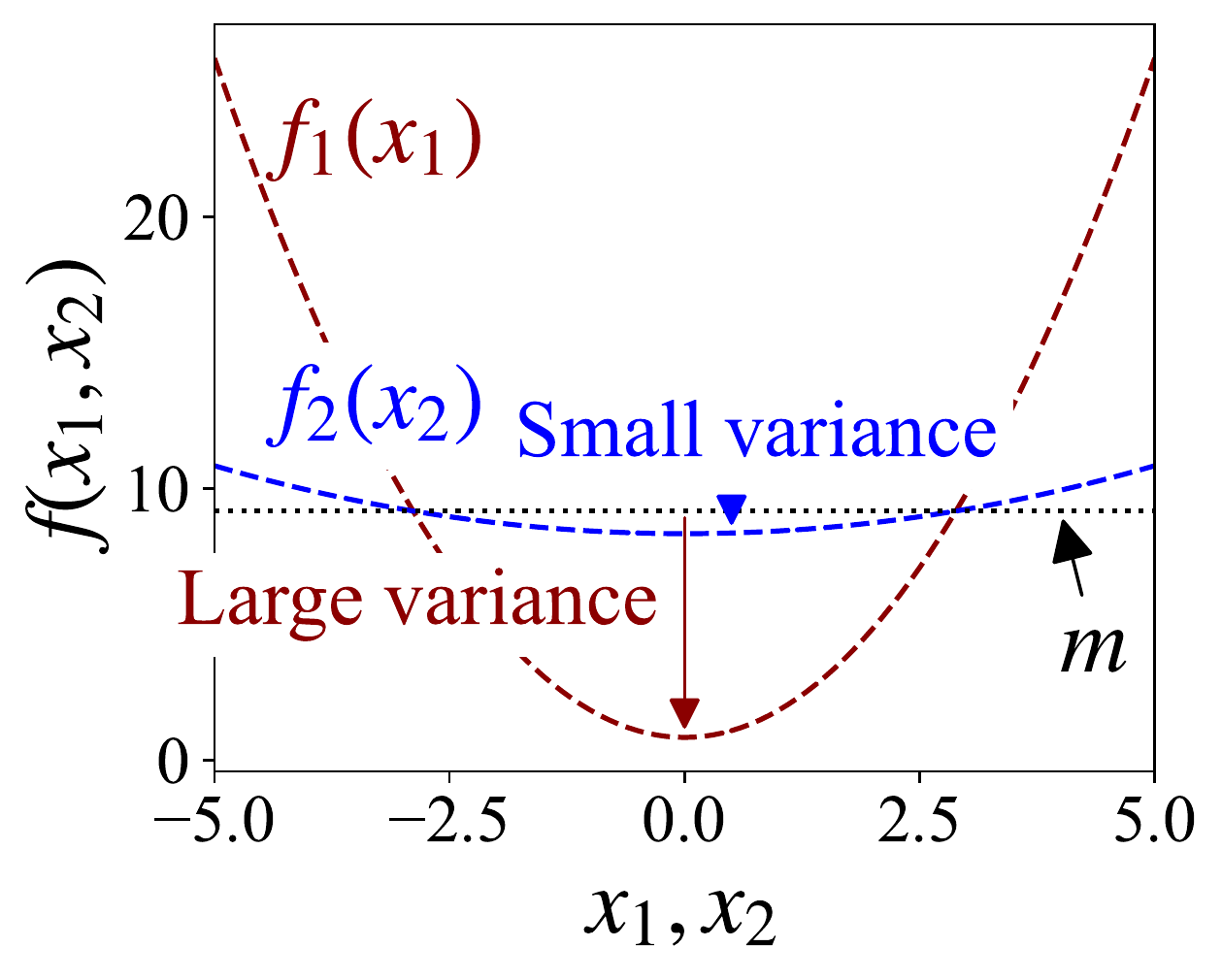}
    }
    \vspace{-1mm}
    \caption{
      The conceptual visualization of global HPI on $f(x_1,x_2)=x_1^2+x_2^2/10$.
      \textbf{Left}: the landscape of $f(x_1, x_2)$ (darker is better)
      and the marginal means $f_1(x_1)$ and $f_2(x_2)$.
      The color does not change a lot in the vertical direction (the $x_2$-axis)
      while it does in the horizontal direction (the $x_1$-axis).
      \textbf{Right}:
      the marginal means and the mean value in the same plane.
      As the marginal mean $f_1$ (the red dashed line)
      has a large variance, $x_1$ is more important.
      On the other hand,
      as the marginal mean $f_2$ (the blue dashed line)
      has a small variance, $x_2$ is less important.
      \label{main:background:fig:anova-intuition}
    }
    \vspace{-4mm}
  \end{center}
\end{figure}

\subsection{Local f-ANOVA in Prior Works}
\label{main:background:section:local-anova}
To the best of our knowledge,
there are two papers that mention \emph{local HPI}
(and both use f-ANOVA).
\citewithname{hutter2014efficient} mentioned
local HPI can be quantified by taking:
\begin{equation}
  \begin{aligned}
    g(\xv) \coloneqq \min(f(\xv), f^\gamma);
  \end{aligned}
  \label{main:background:eq:frank-anova-transformation}
\end{equation}
however, since this measure is biased depending on the global space design
as discussed in Appendix~\ref{appx:theoretical-details:section:analysis-clipped},
we need to consider the integral only over a local space as
stated in Section~\ref{main:methods:section:local-hpi}.
\citewithname{biedenkapp2018cave} proposed
the following HPI measure:
\begin{equation}
  \begin{aligned}
    m_d & \coloneqq \mathbb{E}_{x_d \sim \X_d}[ f(\xv | \xopt_{-d})],          \\
    v_d & \coloneqq \mathbb{E}_{x_d \sim X_d}[ (f(\xv | \xopt_{-d}) - m_d)^2], \\
  \end{aligned}
  \label{main:background:eq:local-hpi-biedenkapp}
\end{equation}
where $\xopt \in \mathbb{R}^D$
is the optimized setting and $\xopt_{-d} \in \mathbb{R}^{D - 1}$
is $\xopt$ without the $d$-th dimension.
The authors mention that this measure is a local HPI measure;
however, this measure is also not a local HPI measure in our definition
and we show that this is the case using a toy example in Appendix~\ref{appx:theoretical-details:section:analysis-on-func}.

\section{Local f-ANOVA Using Pearson Divergence}
\label{main:methods:section:local-anova-detail}
In this section, we first provide the definition
of local HPI and describe how to define
a local space.
For simplicity, we name this local space definition as
\emph{Lebesgue split}~\footnote{
  The name comes from the fact that
  we define a local space by a function value as
  in the Lebesgue integral in contrast to
  the definition of a local space by bounds for each dimension,
  which we name \emph{Riemann split}.
}.
Then we introduce fast algorithm using
PED between two KDEs to compute
local HPI and benchmark the speed of
the algorithm compared to
the f-ANOVA implementation
based on random forests~\cite{hutter2014efficient}.
Notice that since higher orders of HPI
require exponential amounts of computations
and usually lack interpretability,
our discussion does not focus on higher orders;
however, we derive the formula for higher orders
and show them in
Eqs.~(\ref{appx:proofs:eq:general-global-hpi}),
(\ref{appendix:proofs:eq:general-local-importance})
in Appendix~\ref{appx:proofs:section}.
The theoretical details for this section are available
in Appendix~\ref{appx:theoretical-details:section}.

\subsection{Local Hyperparameter Importance}
\label{main:methods:section:local-hpi}
In this section, we assume that we have a set of (sorted) observations
$\D \coloneqq \{(\xv_n, f(\xv_n))\}_{n=1}^N$
such that $f(\xv_1) \leq \dots \leq f(\xv_N)$.
Then, the top-$\gamma$-quantile observations
are $\D^\gamma = \{(\xv_n, f(\xv_n))\}_{n=1}^{\ceil{\gamma N}}$
and the $\gamma$-quantile value is $f^\gamma \coloneqq f(\xv_{\ceil{\gamma N}})$.

\subsubsection{Local Space Defined by Lebesgue Split}
To begin with, we formally define local HPI:
\begin{definition}[Local HPI]
  Given a subspace $\X^\star \subseteq \X$,
  local $\mathrm{HPI}$ is
  global $\mathrm{HPI}$ $v_d/v_0$ in the subspace $\X^\star$.
  \label{main:methods:def:local-hpi}
\end{definition}
Recall that $v_d$ is the marginal variance of the $d$-th
HP and $v_0$ is the global variance.
Based on Definition~\ref{main:methods:def:local-hpi},
the prior works on making f-ANOVA local
discussed in Section~\ref{main:background:section:local-anova}
are not local HPI measures;
see Appendix~\ref{appx:theoretical-details:section:difference-from-prior-works} for more details.
As our local HPI obviously depends on the choice of $\X^\star$,
local HPI is a very general concept;
therefore, we focus on the so-called \emph{Lebesgue split}
to specify a local space in this paper.
In the Lebesgue split, we obtain a local space $\X^\star$ as follows:
\begin{enumerate}
  \item Fix a threshold $f^\star$ (we use the $\gamma$-quantile value $f^\gamma$ in this paper instead), and
        \vspace{-1mm}
  \item Obtain the sublevel set
        $\X^\star \coloneqq \{\xv \in \X | f(\xv) \leq f^\star\}$
        based on $f^\star$ ($\X^\star$ becomes the $\gamma$-set $\Xg$ when $f^\star = f^
          \gamma$).
\end{enumerate}
Recall that the definitions of
the $\gamma$-quantile value and
the $\gamma$-set are available in Section~\ref{main:background:sec:preliminaries}.
More intuitively, the red domains in Figure~\ref{main:background:fig:gamma-set-conceptual-viz}
are the local space of each example.
In this paper, we use $f(\xv_{\ceil{\gamma N}})$ as $f^\gamma$.
The advantages of the Lebesgue split are to:
\begin{enumerate}
  \item require only one parameter $f^\star$ while
        the Riemann split, which we split along each dimension by specifying bounds, requires
        at least $2 \times D$ parameters,
        \vspace{-1mm}
  \item be able to focus on the analysis in promising domains where we are interested, and
        \vspace{-1mm}
  \item be able to remove the sampling bias caused by a non-uniform sampler when using the formula of local HPI.
\end{enumerate}
We further discuss the strengths and drawbacks of the Lebesgue split
compared to the Riemann split in Appendix~\ref{appx:theoretical-details:section:splits}.

\begin{figure}[t]
  \begin{center}
    \subfloat[\sloppy  The contour plot of $f(x_1,x_2)$ and the marginal $\gamma^\prime$-set PDFs.]{
      \includegraphics[width=0.2\textwidth]{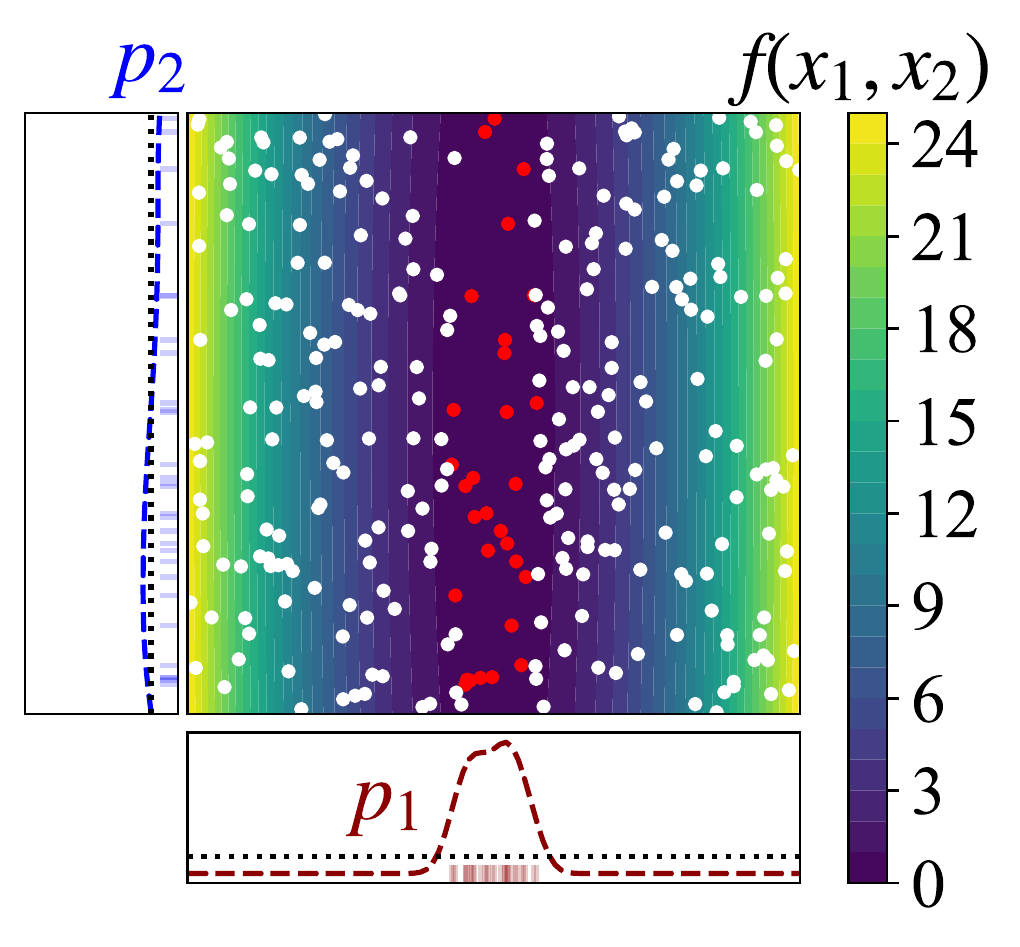}
    }
    \hspace{3mm}
    \subfloat[\sloppy The conceptual visualization of global HPI by PED.]{
      \includegraphics[width=0.2\textwidth]{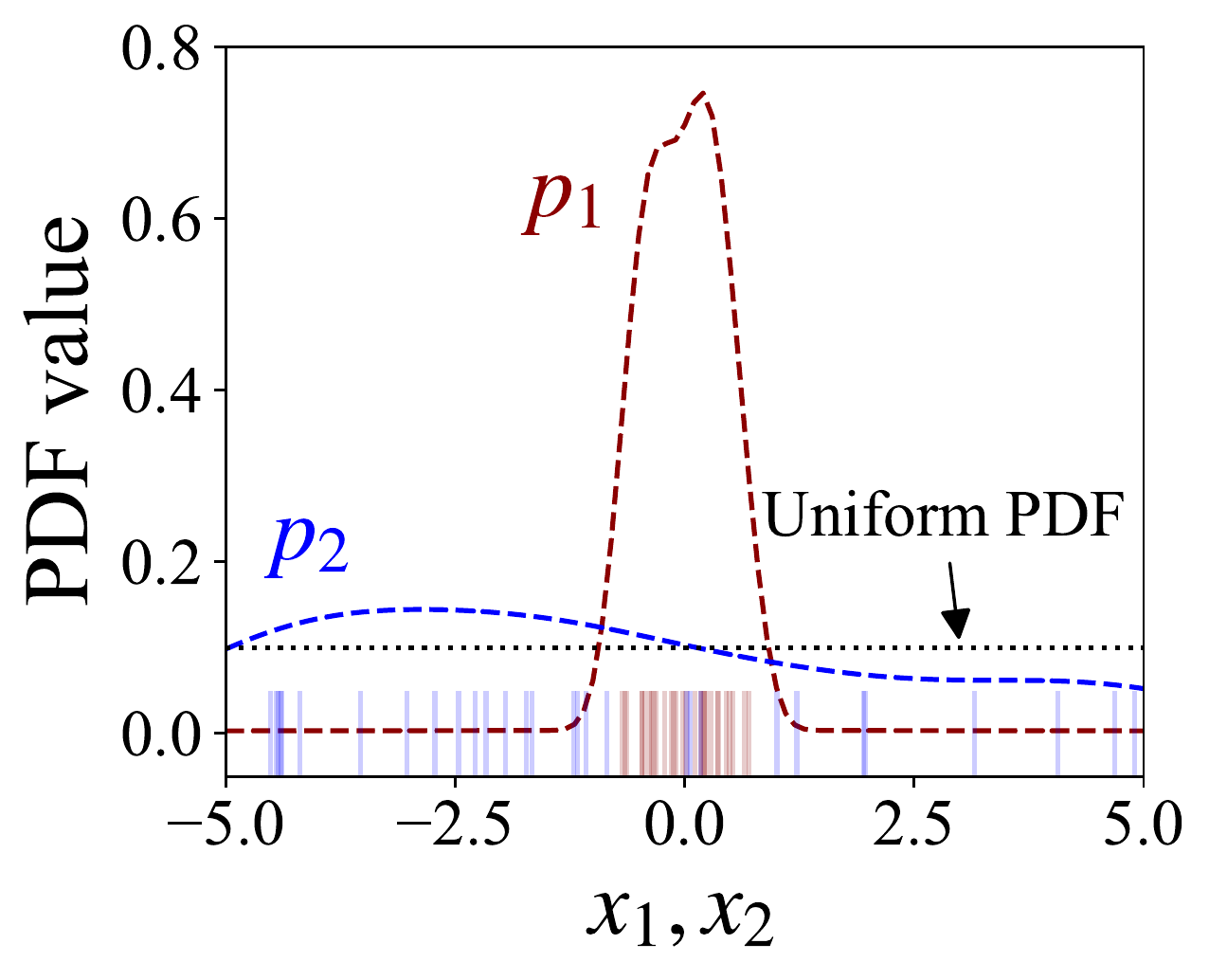}
    }
    \caption{
      The conceptual visualization of global HPI using PED on $f(x_1,x_2)=x_1^2+x_2^2/100$.
      As we consider global HPI in this example, the $\gamma$-set PDF is the uniform PDF.
      \textbf{Left}: the landscape of $f(x_1, x_2)$ (darker is better)
      and the marginal $\gamma^\prime$-set PDFs
      $p_1(x_1|\D^{\gamma^\prime})$ ($=p_1$, the red dashed line) and $p_2(x_2|\D^{\gamma^\prime})$ ($=p_2$, the blue dashed line).
      The dots represent observations (datasets)
      and the red dots achieve the top-$\gamma^\prime$ quantile.
      $p_1, p_2$ are estimated
      by Eq.~(\ref{main:methods:eq:compressed-kde}) using the red dots.
      \textbf{Right}:
      the marginal $\gamma^\prime$-set PDFs in the same plane.
      While $p_1$ (the red dashed line) sharply peaks at the center,
      $p_2$ (blue dashed line) is close to the uniform PDF.
      This implies that the latter is closer to
      the uniform PDF, and thus $x_2$ is less important.
      \label{main:methods:fig:conceptual-ped-anova}
    }
    \vspace{-4mm}
  \end{center}
\end{figure}

\subsubsection{Formula of Local Hyperparameter Importance}
\label{main:methods:section:formula-of-local-anova}
Now we discuss the computation of local HPI.
In Eqs.~(\ref{main:background:eq:global-mean})--(\ref{main:background:eq:marginal-var}),
we take the expectation over the uniform distribution
of the global space $\X$.
In the same vein, it is natural to consider
the expectation over the uniform distribution
of the local space $\Xg$ for local HPI as well.
Although the computation is not obvious, we can compute the expectation of a measurable function $f(\xv)$ over the local space $\Xg$ if we use the following trick:
\begin{equation}
  \begin{aligned}
    \frac{1}{\gamma} \mathbb{E}_{\xv \sim \X}[
    f(\xv)\indic{\xv \in \Xg}
    ],
  \end{aligned}
\end{equation}
where $\gamma = \mathbb{E}_{\xv \sim \X}[\indic{\xv \in \Xg}]$ is a normalization constant.
Recall that $\xv \in \Xg$ and $f(\xv) \leq f^\gamma$
are equivalent.
Similarly, the marginal mean of $f(\xv)$ is computed as follows:
\begin{equation}
  \begin{aligned}
    \frac{1}{V^{\gamma}_d(x_d)} \mathbb{E}_{\xv_{-d} \sim \X_{-d}}[
    f(\xv | x_d)\indic{\xv \in \Xg | x_d}
    ],
  \end{aligned}
\end{equation}
where $V^{\gamma}_d(x_d) \coloneqq \mathbb{E}_{\xv_{-d} \sim \X_{-d}}[
  \indic{\xv \in \Xg | x_d}
  ]$ is a normalization constant.
Then local HPI is generally computed as follows:
\begin{enumerate}
  \item \textbf{Local mean}:
        \begin{equation}
          \begin{aligned}
            m^\gamma \coloneqq \frac{1}{\gamma} \mathbb{E}_{
            \xv \sim \X}[f(\xv)\indic{\xv \in \Xg}], \\
          \end{aligned}
          \label{main:methods:eq:local-mean}
        \end{equation}
  \item \textbf{Local marginal mean}:
        \begin{equation}
          \begin{aligned}
            f_d^\gamma(x_d) \coloneqq \frac{1}{V^{\gamma}_d(x_d)}
            \mathbb{E}_{\xv_{-d} \sim \X_{-d}}[
            f(\xv | x_d)\indic{\xv \in \Xg |x_d}], \\
          \end{aligned}
          \label{main:methods:eq:marginal-local-mean}
        \end{equation}
  \item \textbf{Local marginal variance}:
        \begin{equation}
          \begin{aligned}
            v_d^\gamma \coloneqq
            \mathbb{E}_{x_d \sim V^{\gamma}_d}[
            (f_d^\gamma(x_d) - m^\gamma)^2].
          \end{aligned}
          \label{main:methods:eq:marginal-local-var}
        \end{equation}
\end{enumerate}
Note that $x_d \sim V^{\gamma}_d$ implies that
$x_d$ is sampled from the distribution of
the PDF $V^{\gamma}_d(x_d) / Z$ where $Z \in \mathbb{R}_{+}$ is
the normalization constant.
As the series of computations requires
a Monte-Carlo sampling in a $D - 1$ dimensional space,
the time complexity incurs the curse of dimensionality.
In the next section, we introduce
fast algorithm to compute local HPI
in exchange for the scale ignorance.

\begin{algorithm}[tb]
  \caption{Local PED-ANOVA}
  \label{main:methods:alg:anova}
  \begin{algorithmic}[1]
    \Statex{$\D = \{(\xv_n, f(\xv_n))\}_{n=1}^{N}$ (Dataset to analyze), $\gamma, \gamma^\prime$ (User-defined quantiles of top domains)}
    \LineComment{See Appendices~\ref{appx:use-cases:section:posthoc-hpo}, \ref{appx:use-cases:section:space-reduction} for practical usages}
    \State{Sort $\D$ in ascending order by $f$}
    \LineComment{\textcolor{magenta}{$|\D^{\gamma}| \geq 2$ and $|\D^{\gamma^\prime}| \geq 2$ must hold}}
    \State{Pick the top-$\gamma$ and -$\gamma^\prime$ quantile observations $\D^{\gamma}, \D^{\gamma^\prime}$}
    \For{$d = 1, \dots, D$}
    \State{Count occurrences of unique values $c_d^{(n)}$}
    \State{Build KDEs $p_d(\cdot|\D^\gamma), p_d(\cdot|\D^{\gamma^\prime})$ by Eq.~(\ref{main:methods:eq:compressed-kde})}
    \State{Compute $v_d^\gamma$ by Eq.~(\ref{main:methods:eq:compressed-ped})}
    \EndFor
    \State{\textbf{return} $\{v_d^\gamma\}_{d=1}^D$}
  \end{algorithmic}
\end{algorithm}

\subsection{Fast Algorithm by Pearson Divergence}
If we analyze the binary function
$b(\xv | \Xgp) \coloneqq \indic{\xv \in \Xgp} = \indic{f(\xv) \leq f^{\gamma^\prime}}$
instead of $f(\xv)$,
HPI can be efficiently computed
where $\gamma^\prime (< \gamma)$ is another quantile
to define the binary function in the local space $\Xg$.
First, we prove the following theorem:
\begin{theorem}
  Given the binary function $b(\xv|\Xgp)$ and
  the $\gamma^\prime$- and $\gamma$-set
  $\mathrm{PDFs}$ $p(\xv|\Xgp), p(\xv | \Xg)$
  where $\gamma^\prime < \gamma$,
  the local marginal variance of each dimension $d \in [D]$ is:
  \begin{equation}
    \begin{aligned}
      v_d^{\gamma}  = \biggl(
      \frac{\gamma^\prime}{\gamma}
      \biggr)^2
      ~\pe{p_d(\cdot | \Xgp)}{p_d(\cdot | \Xg)}.
    \end{aligned}
    \label{main:methods:eq:local-importance}
  \end{equation}
  \label{main:methods:theorem:local-importance}
\end{theorem}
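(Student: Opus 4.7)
The plan is to substitute $b(\xv|\Xgp)$ in place of $f$ throughout the local-HPI definitions \eqref{main:methods:eq:local-mean}--\eqref{main:methods:eq:marginal-local-var}, and then recognize the resulting integral as Pearson divergence between the two marginal $\gamma$-set PDFs. The key simplification comes from the inclusion $\Xgp \subseteq \Xg$ (since $\gamma^\prime < \gamma$), which gives the pointwise identity $b(\xv|\Xgp)\indic{\xv\in\Xg}=\indic{\xv\in\Xgp}$ and analogously $b(\xv|\Xgp|x_d)\indic{\xv\in\Xg|x_d}=\indic{\xv\in\Xgp|x_d}$.

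First I would compute the local mean. Applying the identity above to \eqref{main:methods:eq:local-mean} yields $m^\gamma = \gamma^\prime/\gamma$, using that $\mathbb{E}_{\xv\sim\X}[\indic{\xv\in\Xgp}]=\gamma^\prime$ by definition of the $\gamma^\prime$-set. Next I would compute the local marginal mean $f_d^\gamma(x_d)$ from \eqref{main:methods:eq:marginal-local-mean}: the numerator becomes $V_d^{\gamma^\prime}(x_d) \coloneqq \mathbb{E}_{\xv_{-d}\sim\X_{-d}}[\indic{\xv\in\Xgp|x_d}]$. The crucial bookkeeping step is to relate these partial volumes to the marginal $\gamma$-set PDFs defined in Section~\ref{main:background:sec:preliminaries}: since the $\gamma$-set PDF is the uniform density on $\Xg$, integrating out $\xv_{-d}$ gives $p_d(x_d|\Xg) = V_d^\gamma(x_d)/\gamma$ and likewise $p_d(x_d|\Xgp) = V_d^{\gamma^\prime}(x_d)/\gamma^\prime$. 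Hence
\begin{equation*}
f_d^\gamma(x_d) = \frac{V_d^{\gamma^\prime}(x_d)}{V_d^\gamma(x_d)} = \frac{\gamma^\prime}{\gamma}\cdot\frac{p_d(x_d|\Xgp)}{p_d(x_d|\Xg)}.
\end{equation*}

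Third, I would identify the sampling distribution in \eqref{main:methods:eq:marginal-local-var}: since $V_d^\gamma(x_d)$ integrates to $\gamma$, the PDF $V_d^\gamma(x_d)/Z$ in the paper's convention equals $p_d(x_d|\Xg)$. Substituting the expression for $f_d^\gamma$ and $m^\gamma=\gamma^\prime/\gamma$ gives
\begin{equation*}
v_d^\gamma = \Bigl(\frac{\gamma^\prime}{\gamma}\Bigr)^2 \int p_d(x_d|\Xg) \Bigl(\frac{p_d(x_d|\Xgp)}{p_d(x_d|\Xg)} - 1\Bigr)^2 dx_d,
\end{equation*}
which after expanding the square and canceling one factor of $p_d(x_d|\Xg)$ matches the standard $\chi^2$-form definition of $\pe{p_d(\cdot|\Xgp)}{p_d(\cdot|\Xg)}$, completing the proof.

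The main obstacle is not conceptual but notational: one must carefully track the two different normalization constants ($\gamma$ for the outer sampling/expectation, $\gamma^\prime$ for the binary indicator) and correctly identify that both the partial-volume function $V_d^\gamma$ and the sampling law in \eqref{main:methods:eq:marginal-local-var} collapse to the marginal PDF $p_d(\cdot|\Xg)$. Once these identifications are made, the Pearson-divergence structure falls out essentially for free. A minor subtlety worth verifying is the support condition: $p_d(x_d|\Xg)=0$ implies $p_d(x_d|\Xgp)=0$ by the nesting $\Xgp\subseteq\Xg$, so the ratio in the integrand is well-defined almost everywhere on $\X_d$.
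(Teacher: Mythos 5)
Your proposal is correct and follows essentially the same route as the paper's proof in Appendix C.3: exploiting $\Xgp \subseteq \Xg$ to collapse the product of indicators, identifying $f_d^\gamma(x_d) = V_d^{\gamma^\prime}(x_d)/V_d^{\gamma}(x_d) = (\gamma^\prime/\gamma)\,p_d(x_d|\Xgp)/p_d(x_d|\Xg)$ and $m^\gamma = \gamma^\prime/\gamma$, and recognizing the weighting density in the variance as $p_d(\cdot|\Xg)$. (Your normalization $p_d(x_d|\Xg)=V_d^\gamma(x_d)/\gamma$ omits a factor of $\mu(\X_d)$ relative to the paper's bookkeeping, but this cancels in the density ratio and in the sampling law, so the argument is unaffected.)
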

The proof is provided in Appendix~\ref{appx:proofs:section:proof-of-local-hpi}
and higher orders of HPI can be computed by Eq.~(\ref{appendix:proofs:eq:general-local-importance})
in Appendix~\ref{appx:proofs:section:proof-of-local-hpi}.
Note that PED between the PDFs $p, q$
defined on $\X_d$ is computed as:
\begin{equation}
\begin{aligned}
  \pe{p}{q} \coloneqq \mathbb{E}_{x_d \sim q(x_d)}\biggl[
    \biggl(
      \frac{p(x_d)}{q(x_d)} - 1
    \biggr)^2
  \biggr].
\end{aligned}
\end{equation}
As we do not have the ground truth of
the marginal $\gamma^\prime$- and $\gamma$-set PDFs,
we replace them with KDEs.
The tricks of this computation are that
(1) the marginal $\gamma$-set PDF can be easily estimated by (1D) KDE
as follows and
(2) we only need to take the average in 1D space:
\begin{equation}
\begin{aligned}
  p_d(x_d | \D^\gamma) \coloneqq
  \frac{1}{\ceil{\gamma N}}\sum_{n=1}^{\ceil{\gamma N}}
  k(x_{n, d}, x_d).
\end{aligned}
\end{equation}
Note that $x_{n,d} \in \X_d$
is the $d$-th dimension of $\xv_n$
and $k$ is a kernel function.
Although the query of this function still requires $O(N)$,
the time complexity scales down to $O(n_d)$
where $n_d \in \mathbb{Z}_+$ is the number of
unique values in the $d$-th HP if we use the following compression:
\begin{equation}
\begin{aligned}
  p_d(x_d | \D^\gamma) =
  \frac{1}{\ceil{\gamma N}}\sum_{n=1}^{n_d}
  c_d^{(n)} k(x_d^{(n)}, x_d)
\end{aligned}
\label{main:methods:eq:compressed-kde}
\end{equation}
where $x_d^{(n)}$ is the $n$-th unique value in the $d$-th HP
and $c_d^{(n)}$ is the occurrences of this value in $\D^\gamma$.
Note that we discretize a continuous HP $x_d \in [L, R] (L < R)$ (if exists)
as $x_d \in \{L + n (R - L) /(n_d - 1)\}_{n=0}^{n_d - 1}$ to apply Eq.~(\ref{main:methods:eq:compressed-kde})
and the discretization error of marginal variances is bounded by $O(\frac{1}{n_d})$ under some assumptions.
Since we can avoid Monte-Carlo samplings with the discretization and the total time complexity is reduced to $O(N + n_d^2)$,
this is a trade-off;
see Proposition~\ref{appx:proofs:proposition:discretization-error}
in Appendix~\ref{appx:proofs:section:discretization-error} for more details.
Hence Eq.~(\ref{main:methods:eq:marginal-local-var}) is approximated as the following closed-form:
\begin{equation}
\begin{aligned}
  v_d^\gamma
  \simeq \biggl(\frac{\gamma^\prime}{\gamma}\biggr)^2
  \sum_{n=1}^{n_d} \frac{
    p_d(x_d^{(n)} | \D^\gamma)
  }{Z}
  \Biggl(
    \frac{
      p_d(x_d^{(n)} | \D^{\gamma^\prime})
      }{
        p_d(x_d^{(n)} | \D^\gamma)
      } - 1
  \Biggr)^2
\end{aligned}
\label{main:methods:eq:compressed-ped}
\end{equation}
where $Z \coloneqq \sum_{n=1}^{n_d} p_d(x_d^{(n)} | \D^\gamma)$
is a normalization constant and the time complexity of this computation is $O(n_d^2)$.
Algorithm~\ref{main:methods:alg:anova} shows the pseudocode for the local HPI computation.
Note that global HPI, whose computation is detailed in Appendix~\ref{appx:global-hpi:section}, can be computed by replacing $p_d(\cdot | \D^\gamma)$ with the uniform PDF in the $d$-th dimension $u_d(x_d)$ (or $p_d(\cdot | \D^\gamma)$ with $\gamma = 1$, i.e. $p_d(\cdot | \D)$, as discussed in Appendix~\ref{appx:use-cases:section:posthoc-hpo} when collecting $\D$ by a non-uniform sampler).
Figure~\ref{main:methods:fig:conceptual-ped-anova} presents an example of global HPI with our method on a 2D toy function.

\vspace{-1mm}

\begin{figure}
  \centering
  \includegraphics[width=0.49\textwidth]{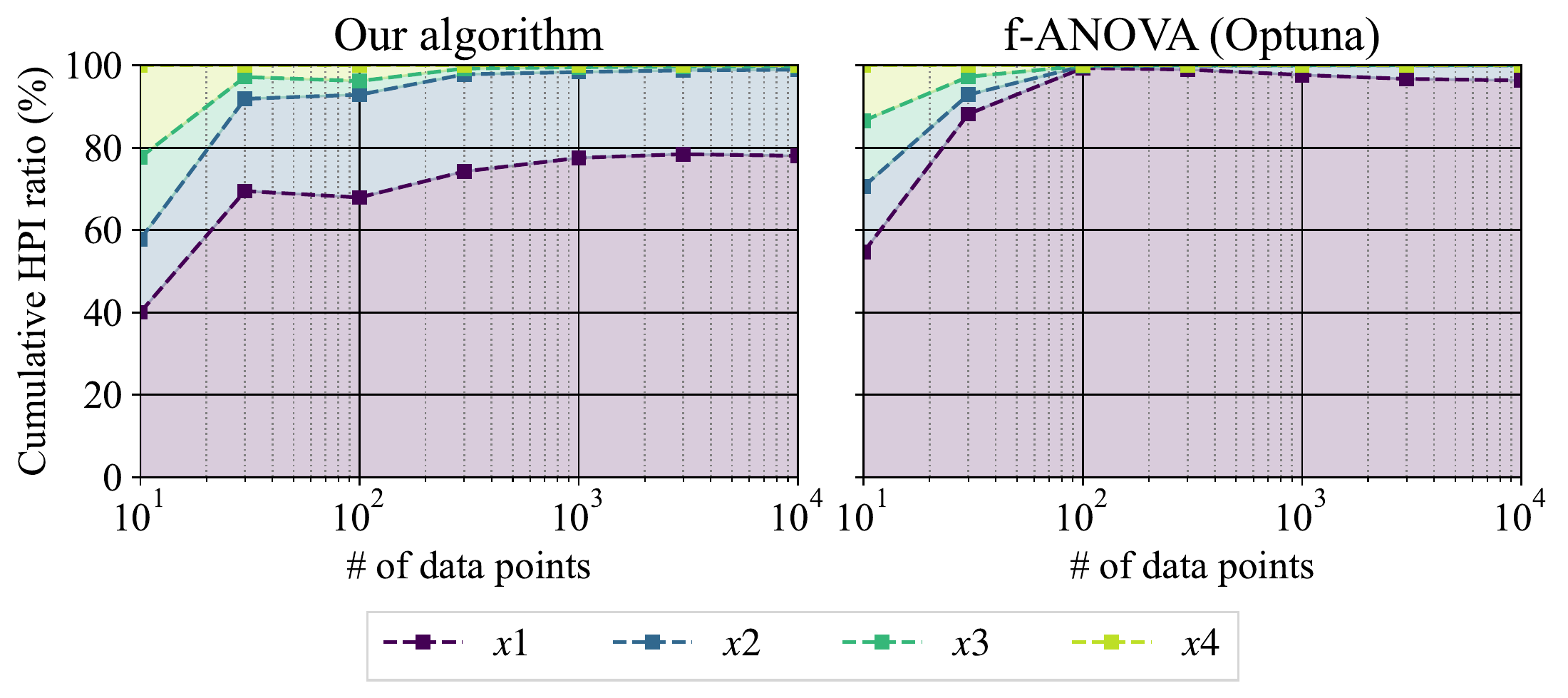}
  \vspace{-6mm}
  \caption{
    The comparison of global HPI between
    our algorithm (\textbf{Left}) and Optuna f-ANOVA (\textbf{Right}).
    HPI is averaged over $10$ runs with different random seeds.
    The horizontal axis shows the number of data points $N$
    and the vertical axis shows the cumulative HPI ratio.
    HPI ratio is computed by $v_d / \sum_{d^\prime=1}^D v_{d^\prime}$
    and the weak color band between each plot
    shows the HPI ratio of each HP.
  }
  \vspace{-3mm}
  \label{main:validation:fig:info-loss-benchmark}
\end{figure}

\section{Performance Validation}
\label{main:validation:section:performance-validation}

\subsection{Setup}
In this section, we consistently use the following function:
\begin{equation}
  \begin{aligned}
    f(x_1, x_2, x_3, x_4) = \sum_{d=1}^4 w_d(x_d) \times x_d^2
  \end{aligned}
  \end{equation}
  where $x_d \in [-5, 5]$ for all $d \in \{1,2,3,4\}$
  and 
  the weights $w_d: \mathbb{R} \rightarrow \boldsymbol{W}$ follow:
  \begin{eqnarray}
    w_d(x) = \left\{
    \begin{array}{ll}
       W_{d - 1} & (|x| \geq 1) \\
       W_{d+2~\mathrm{mod}~4} & (\mathrm{otherwise})
    \end{array}
    \right..
  \end{eqnarray}
  and $\boldsymbol{W} \coloneqq \{W_d\}_{d=0}^3 = \{5^0, 5^{-1},  5^{-2},  5^{-3}\}$.
This function has different trends of HPI in
global and local spaces.
While the order of HPI is $x_1, x_2, x_3, x_4$ in the global space,
it is $x_2, x_3, x_4, x_1$ in the local space $\{\xv \in \X | \forall d \in [4], |x_d| < 1\}$.

In this experiment, we discretized the HPs with $n_d = 1001$
and all samples were drawn from the uniform distribution.
Furthermore, all experiments were run on the hardware with Intel Core i7-10700
and we used the f-ANOVA implementation with the default parameter setting
by Optuna~\footnote{https://github.com/optuna/optuna}.
Note that the Optuna implementation is based on \citewithname{hutter2014efficient}.

\begin{figure}[t]
  \centering
  \includegraphics[width=0.46\textwidth]{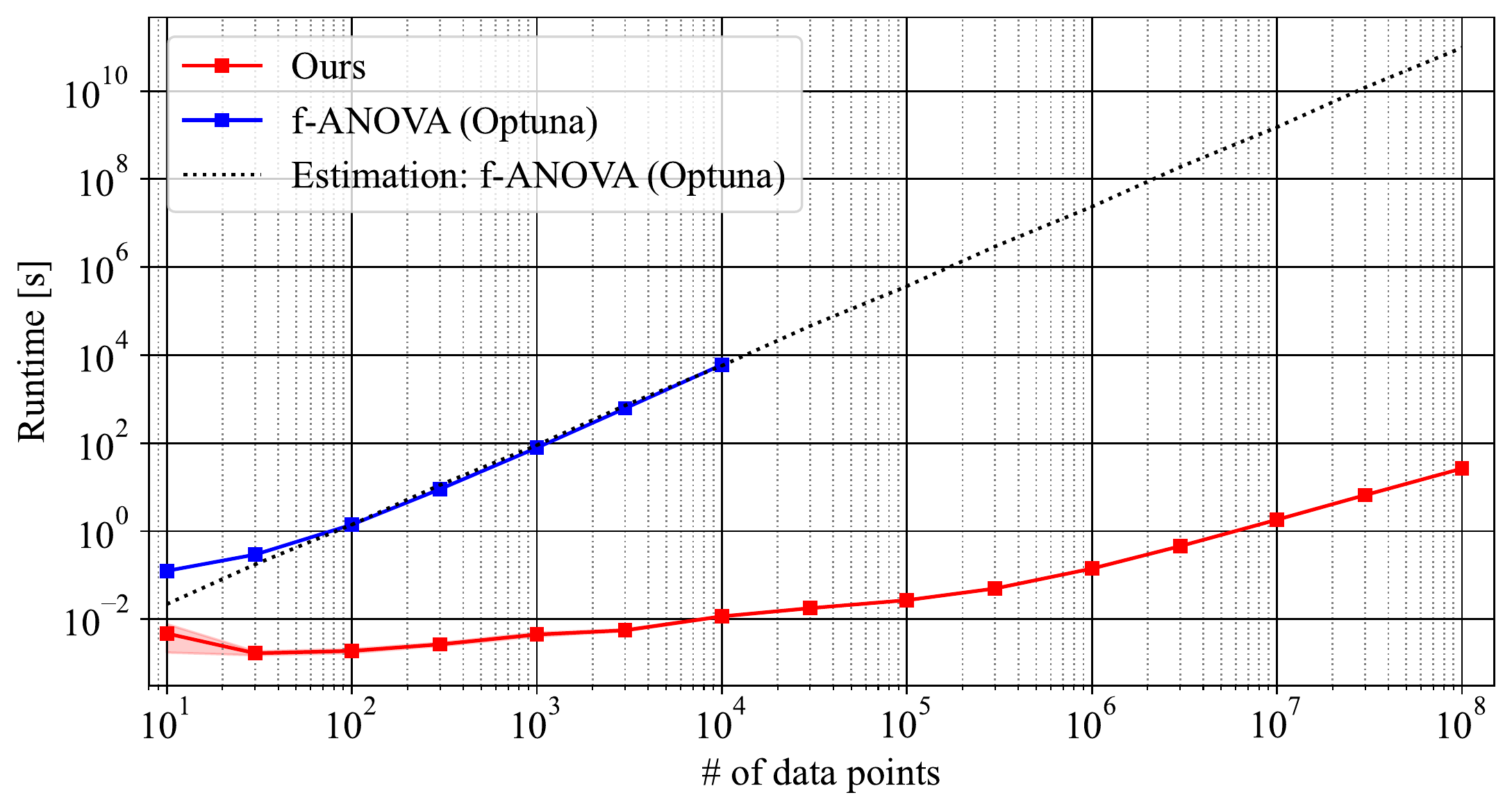}
  \vspace{-3mm}
  \caption{
    The benchmark of query speed of our method and f-ANOVA
    with respect to the number of data points $N$.
    Each setting was run with $10$ different seeds
    and the weak color bands show the standard error.
    As f-ANOVA requires much more computation,
    we estimated the evolution and provided the estimation
    by the black dotted line.
  }
  \vspace{-3mm}
  \label{main:validation:fig:runtime-benchmark}
\end{figure}

\subsection{Effect of Scale Ignorance in Global HPI}
\label{main:validation:section:scale-ignorance}
Since PED-ANOVA uses $\indic{f(\xv) \leq f^\star}$ instead
of $f(\xv)$, it cannot capture scale information.
On the other hand, since our objective is to identify important HPs,
we would like to test if PED-ANOVA can identify important HPs.
In the experiment, we used $\gamma^\prime = 0.1$.
Figure~\ref{main:validation:fig:info-loss-benchmark}
shows the cumulative global HPI ratio of each method.
As seen in the figure, while both methods could identify
the most important HP $x_1$,
we can see the difference in the HPI of $x_2$.
While PED-ANOVA tells us $x_2$ has about $20\%$ of contribution
to achieve the top-$10\%$,
f-ANOVA tells us $x_2$ has about $3\%$ of contribution.
This difference comes from whether we ignore the scale of
the objective function or not.
Since f-ANOVA considers scale and it magnifies the contribution in the tail of the function, it dilutes the HPI of $x_2$, which has less weight in the tail.
Note that ``tail'' refers to the domains that
cause a lot of variations, yet not critical for the final result,
in the objective function $f$
and $|x_d| \geq 1$ is the tail in our case; more details in Appendix~\ref{appx:theoretical-details:section:scale-ignorance}.
On the other hand, the HPI of $x_1$ by PED-ANOVA is not strongly biased by the tail
due to the scale ignorance nature.
This leads to more importance in $x_2$.
Although our method loses scale information,
the ignorance of scale allows us to
abandon the information from the tail
and focus only on the information from the promising
domain, which is $\gamma^\prime$-set in our case.
Furthermore, this remarkable property makes
the meaning of HPI,
which is how important each HP is to achieve the top-$\gamma^\prime$
quantile,
very clear and practitioners can extract the nuance of each HP for specific local spaces.

\subsection{Query Speed}
As mentioned previously, one of the benefits of our method is the query speed,
and we would like to benchmark how quick our method is in this section.
In the experiment, we used $\gamma^\prime = 0.1$.
Figure~\ref{main:validation:fig:runtime-benchmark}
presents the runtime with respect to the number of data points.
While f-ANOVA requires more than a minute with $10^3$ data points
and more than a week with $10^5$ data points,
our method provides the results in a minute even with $10^8$ data points.
In Appendix~\ref{appx:experiments:section:sample-efficiency},
we test our method with higher dimensionality to see the number of data points required for convergence.

\begin{figure*}[t]
  \centering
  \includegraphics[width=0.9\textwidth]{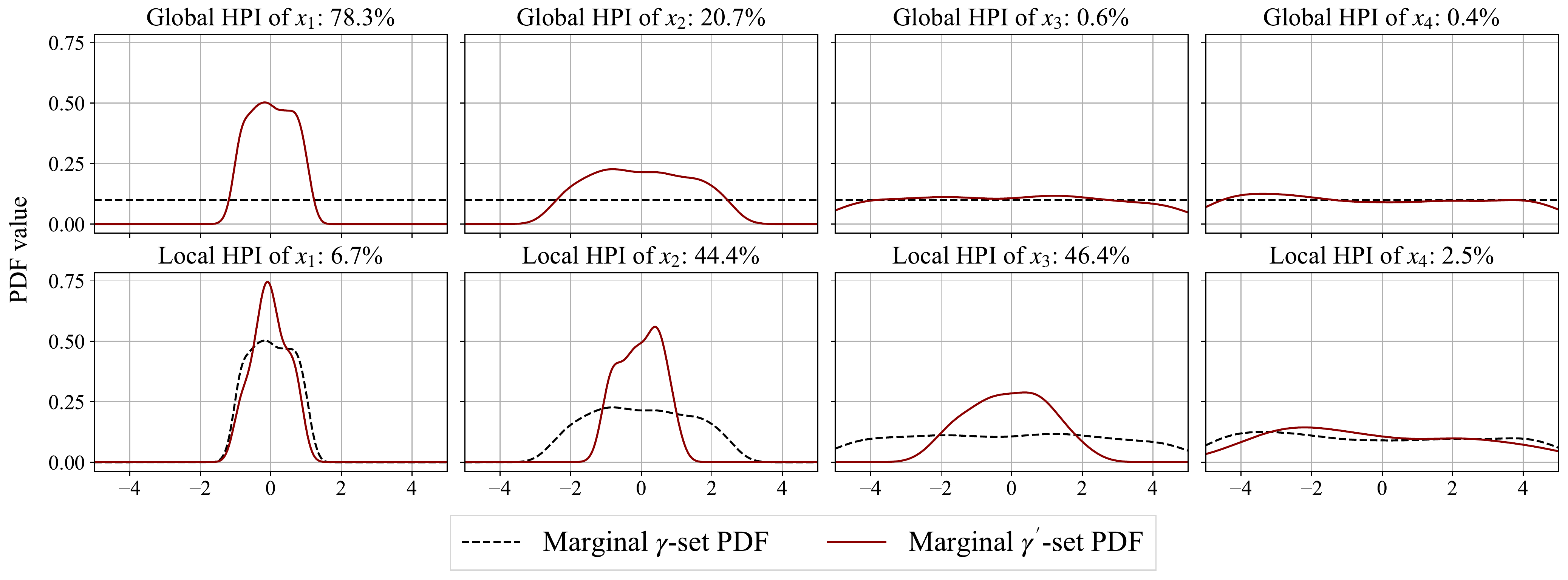}
  \vspace{-3mm}
  \caption{
    The validation of the local importance measure.
    The black dashed lines show the marginal $\gamma$-set PDFs
    and the red lines show the marginal $\gamma^\prime$-set PDFs for each dimension.
    The percentage (\textbf{HPI ratio}) was computed by
    $v_d / \sum_{d^\prime=1}^D v_{d^\prime}$.
    \textbf{Top row}:
    the plots of the uniform PDFs ($\gamma = 1$)
    and the marginal $\gamma^\prime = 0.1$-set PDFs.
    These are used to compute global HPI.
    \textbf{Bottom row}:
    the plots of
    the marginal $\gamma = 0.1$-set PDFs
    and the marginal $\gamma^\prime = 0.01$-set PDFs.
    Those are used to compute local HPI in the top-$10\%$ domain.
  }
  \vspace{-3mm}
  \label{main:validation:fig:local-validation}
\end{figure*}

\subsection{Local Importance Measure}
Finally, we check if our method can successfully
identify important HPs in promising domains.
The objective function $f(\xv)$ is designed
so that while $x_1$ is important and $x_3$ is trivial in the global space,
$x_1$ is less important and $x_3$ is important in the local space.
The goal of this experiment is to check whether our method can
provide this insight.
In the experiment, we used $N = 10^4$.

Figure~\ref{main:validation:fig:local-validation} shows the results.
As discussed already, global HPI could identify
the order of HPI appropriately.
For local HPI, our method could tell us that
$x_2, x_3$ are the most important HPs
in the local space
and $x_1$ is less important as expected.
Note that since the $\gamma = 0.1$-set
already narrows down the domain of $x_2$, but not $x_3$,
this dilutes the HPI of $x_2$ and increases the HPI of $x_3$.
Prior works cannot provide this interpretation
as discussed in Appendix~\ref{appx:theoretical-details:section:difference-from-prior-works}.

\addtolength{\tabcolsep}{0pt}
\begin{table}[t]
  \begin{center}
    \caption{
      HPI of \texttt{CIFAR10} in JAHS-Bench-201.
      The ratio of HPI by percentage (\textbf{HPI ratio}) computed by $v_d/\sum_{d^\prime=1}^{D} v_{d^\prime}$.
      The top-$3$ HPs are bolded.
      \textbf{Cols.~1,3,5} (Original):
      HPI by f-ANOVA on $g(\xv) \coloneqq \min(f(\xv), f^{\gamma^\prime})$.
      \textbf{Cols.~2,4,6} (Ours):
      HPI by PED-ANOVA.
    }
    \vspace{-3mm}
    \label{main:experiments:tab:jahs-hpi-list}
    \makebox[0.49 \textwidth][c]{       
      \resizebox{0.49 \textwidth}{!}{
        \begin{tabular}{lc|cc|cc|c}
          & \multicolumn{6}{c}{HPI ratio ($\%$)} \\
          \toprule
          Hyperparameter      & Normal         & \multicolumn{2}{c|}{Global 0.1} & \multicolumn{2}{c|}{Global 0.01} & Local                                            \\
                              & Original       & Ours                           & Original                        & Ours           & Original       & Ours           \\
          \midrule
          Learning rate       & 1.36           & \textbf{9.11}                  & \textbf{10.20}                  & 6.62           & 3.59           & 4.09           \\
          Weight decay        & 0.96           & 2.19                           & 0.68                            & 2.56           & 0.31           & 3.00           \\
          Activation function & 0.01           & 0.12                           & 0.21                            & 0.26           & 0.41           & 0.40           \\
          TrivialAugment      & 0.00           & 4.33                           & \textbf{3.83}                   & \textbf{13.22} & \textbf{8.27}  & \textbf{28.33} \\
          \midrule
          Depth multiplier    & 0.06           & 0.66                           & 0.58                            & 2.47           & 0.63           & 6.90           \\
          Width multiplier    & 1.60           & \textbf{60.22}                 & \textbf{73.59}                  & \textbf{35.26} & \textbf{71.75} & 9.07           \\
          \midrule
          Operation 1 (Op.1)  & \textbf{11.86} & \textbf{6.65}                  & 3.45                            & \textbf{11.95} & 3.81           & \textbf{13.38} \\
          Operation 2 (Op.2)  & 4.04           & 2.36                           & 1.42                            & 5.00           & 2.51           & 6.97           \\
          Operation 3 (Op.3)  & \textbf{64.73} & 5.63                           & 1.14                            & 5.25           & 1.73           & 5.50           \\
          Operation 4 (Op.4)  & 0.09           & 0.84                           & 0.83                            & 1.62           & 1.09           & 2.09           \\
          Operation 5 (Op.5)  & 4.00           & 2.19                           & 1.04                            & 4.72           & 1.29           & 6.76           \\
          Operation 6 (Op.6)  & \textbf{11.29} & 5.71                           & 3.02                            & 11.06          & \textbf{4.61}  & \textbf{13.52} \\
          \bottomrule
        \end{tabular}
      }
    }
  \end{center}
  \vspace{-4mm}
\end{table}
\addtolength{\tabcolsep}{0pt}

\begin{figure*}[t]
  \centering
  \includegraphics[width=0.9\textwidth]{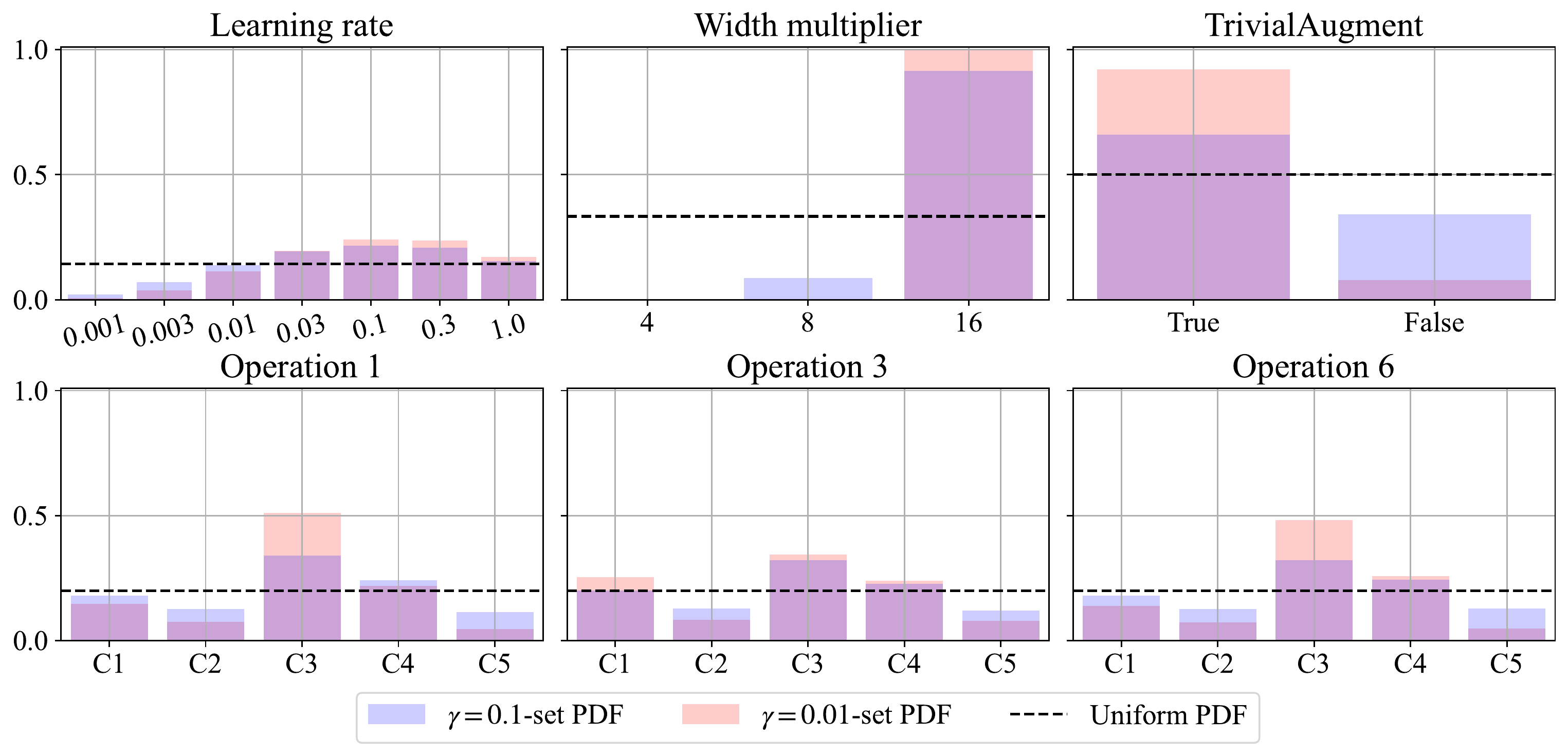}
  \vspace{-3mm}
  \caption{
    The distributions of important HPs of \texttt{CIFAR10} in JAHS-Bench-201.
    The red shadows show the $\gamma=0.01$-set PDFs,
    the blue shadows show the $\gamma=0.1$-set PDFs, and
    the black dashed lines show the uniform PDFs.
    PED between a black line and a blue shadow
    is \texttt{Global 0.1},
    PED between a black line and a red shadow
    is \texttt{Global 0.01},
    and PED between a red shadow and a blue shadow
    is \texttt{Local}
    in Table~\ref{main:experiments:tab:jahs-hpi-list}.
    The titles for each figure show the names of each HP and the details of HPs are available in
    Appendix~\ref{appx:experiments:section:dataset-detail}.
    Notice that C1 -- C5 correspond to the order of Table~\ref{appx:experiments:tab:jahs-search-space}
    and the overlap between the red and the blue shadows looks purple although they are separated shadows.
  }
  \vspace{-3mm}
  \label{main:experiments:fig:jahs-histogram}
\end{figure*}

\vspace{-1mm}
\section{Real-World Usecase by JAHS-Bench-201}
\label{main:experiments:section}

\subsection{Setup}
\label{main:experiments:section:setup}
In order to further verify our proposed algorithm against a real-world application,
we applied PED-ANOVA to analyze the search space of JAHS-Bench-201~\cite{bansal2022jahs},
which is a surrogate benchmark for HPO and has a very large search space in the context of extant HPO benchmarks.
We constructed the dataset $\D$ in Algorithm~\ref{main:methods:alg:anova} by querying JAHS-Bench-201 for the validation accuracy, i.e. $f(\xv)$, of $N$ lattice points,
where $N = 41{,}343{,}750$,
generated by discretizing the JAHS-Bench-201 search space
(see  Table~\ref{appx:experiments:tab:jahs-search-space} of Appendix~\ref{appx:experiments:section:dataset-detail}).
Although JAHS-Bench-201 can be queried for model performance metrics on 3 different image classification datasets,
for the sake of brevity, we discuss only the experiments performed on \texttt{CIFAR10} here
and include the results on the other datasets in Appendix~\ref{appx:experiments:section:additional-results}.
Due to the computational complexity of f-ANOVA, we could use only $10^4$ data points for it, in contrast to PED-ANOVA,
and calculated the mean of HPI over $10$ independent runs.
Since the surrogate models in JAHS-Bench-201 are trained XGBoost models
and XGBoost's outputs are deterministic, we query each lattice point only once.
In the analysis, we would like to answer the following research questions (RQs):
\begin{enumerate}
  \addtolength{\itemindent}{5mm}
  \vspace{-0.0mm}
  \item [\textbf{RQ1}:] Does global HPI of our method provide the same important HPs as f-ANOVA with Eq.~(\ref{main:background:eq:frank-anova-transformation})?
  \vspace{-1mm}
  \item [\textbf{RQ2}:] Is the scale ignorance necessary for matching the intuition of achieving the top-$\gamma^\prime$ quantile?
  \vspace{-1mm}
  \item [\textbf{RQ3}:] Does local HPI help detect potentially important HPs or trivial HPs?
  \vspace{-0.0mm}
\end{enumerate}
In order to answer RQs,
we provide Table~\ref{main:experiments:tab:jahs-hpi-list}
with the HPI of each HP in \texttt{CIFAR10} of JAHS-Bench-201
and Figure~\ref{main:experiments:fig:jahs-histogram} to
visualize the $\gamma=0.1$- and $\gamma=0.01$-set PDFs
as the blue and the red shadows, respectively.
Strictly speaking,
discrete probability distributions are not PDFs due to discrete space;
however, we use the term $\gamma$-set PDF for the sake of consistency.
We applied f-ANOVA to $f(\xv)$ (\texttt{Normal}),
$\min(f(\xv), f^{\gamma^\prime = 0.1})$ (\texttt{Global 0.1}),
and $\min(f(\xv), f^{\gamma^\prime = 0.01})$ (\texttt{Global 0.01}), and
PED-ANOVA with $\gamma^\prime = 0.1$ (\texttt{Global 0.1}),
$\gamma^\prime = 0.01$ (\texttt{Global 0.01}), and
$\gamma = 0.1, \gamma^\prime = 0.01$ (\texttt{Local}).
Recall that \texttt{Global 0.1} and \texttt{Global 0.01} for f-ANOVA are based on Eq.~(\ref{main:background:eq:frank-anova-transformation})
and $f^{\gamma^\prime = 0.1}$ means $f_{\ceil{|\D|/10}}$ given a dataset sorted by $f_n$.
Although we used the uniform PDF to compute global HPI in this experiment,
practitoners should use $p_d(\cdot | \D) (\gamma = 1)$ instead of the uniform PDF
for the post-hoc analysis of HPO when using a non-uniform sampler (e.g. Bayesian optimization) to remove sampling bias 
as discussed in Appendix~\ref{appx:use-cases:section:posthoc-hpo}.

\subsection{Analysis \& Interpretation}
To answer RQ1, we compare the column \texttt{(Global 0.1, Ours)} to \texttt{(Global 0.1, Original)} and
the column \texttt{(Global 0.01, Ours)} to \texttt{(Global 0.01, Original)} in Table~\ref{main:experiments:tab:jahs-hpi-list}.
We observe that both PED-ANOVA and f-ANOVA indicated the same top-2 important HPs although the $3^{\mathrm{rd}}$-best HPs slightly varied.
This result further verifies the validation in Section~\ref{main:validation:section:scale-ignorance}.

To answer RQ2, we discuss the results of \texttt{(Global 0.1, Ours)} and \texttt{(Global 0.01, Ours)} in the context of \texttt{(Normal, Original)}
to assess the impact that the tail of $f(\xv)$ discussed in Section~\ref{main:validation:section:scale-ignorance} has on f-ANOVA.
The most important takeaway from this comparison is the misclassification of \texttt{Op.3} as the most important and
of \ta\ as the least important HP to optimize over by the original f-ANOVA in the global setting.
As can be verified by looking at the $\gamma$-set PDFs, even for $\gamma=0.01$,
\texttt{Op.3}'s values are distributed very evenly even when \ta\ and \wm\ have already shown convergence.
This clearly indicates that \texttt{Op.3} is not very important to optimize for achieving the top-$1\%$ performance and
may or may not become relevant in even higher quantile regimes.
At the same time, both columns' values agree on the importance of \texttt{Op.1} and \texttt{Op.6}.
Therefore, to answer RQ2, scale invariance indeed helps to successfully identify HPI for HPs that would have been misclassified by the \texttt{(Normal, Original)} setting.

Finally, for RQ3, we compare the column \texttt{(Global 0.01, Ours)} to \texttt{(Local, Ours)}. 
We observe that the HPI of \wm\ drops sharply from the \texttt{Global 0.01} setting to the \texttt{Local} setting.
Simultaneously, the HPI of \ta\ increases sharply across the same.
This suggests that optimizing \wm\ is no longer important when moving from the top-$10\%$ to the top-$1\%$ performance
but optimizing \ta\ is very important.
The reason behind this change becomes clear when we observe the change in $\gamma$-set PDFs of the two HPs in Figure~\ref{main:experiments:fig:jahs-histogram}.
Both the $\gamma$-set PDFs for \wm\ are sharply peaked at $16$,
indicating that no further optimization is needed on \wm.
However, the $\gamma$-set PDFs for \ta\ only start peaking at the value \texttt{True} for the $\gamma=0.01$-set PDF.
This clearly demonstrates that local HPI is necessary for deriving the correct interpretation in the top-$\gamma^\prime$ quantiles,
since \texttt{(Local, Ours)} successfully identifies the relative importance of optimizing the two HPs.
Last but not least, if both global and local HPI with wished quantiles $\gamma, \gamma^\prime$ exhibits low values,
removing such HPs, e.g. \texttt{Activation function}, is expected to have a less negative impact although it is insecure to remove HPs, e.g. \texttt{Op.1}, only by looking at global HPI.

\section{Conclusions}
In this paper, we reformulated
f-ANOVA for local HPI
and introduced the fast algorithm to compute
local HPI by PED.
In the series of experiments on a toy function,
we confirmed that our method can quantify
both global and local HPI appropriately,
and efficiently compute HPI in a second with $10^5$ data points
while the prior work takes several days.
In the analysis of JAHS-Bench-201,
we provided a concrete example of how
to use our method on benchmark datasets
and showed that only using global HPI could be misleading.
Due to the space limit, we defer a discussion of practical
usecases and limitations of our method
to Appendix~\ref{appx:use-cases:section}.
Our implementation is available at \url{https://github.com/nabenabe0928/local-anova/}.

\clearpage
\section*{Acknowledgments}
The authors appreciate the valuable contributions of the anonymous reviewers.
Robert Bosch GmbH is acknowledged for financial support.
The authors also acknowledge funding by European Research Council (ERC) Consolidator Grant ``Deep Learning 2.0'' (grant no.\ 101045765).
Views and opinions expressed are however those of the authors only and do not necessarily reflect those of the European Union or the ERC.
Neither the European Union nor the ERC can be held responsible for them.

\begin{center}
  \includegraphics[width=0.3\textwidth]{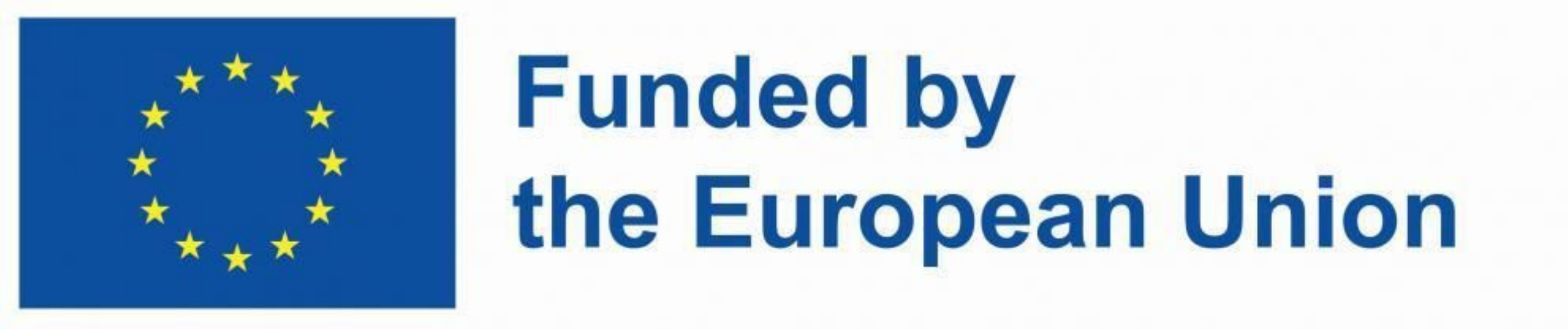}
\end{center}

\bibliographystyle{bib-style}
\bibliography{ref}

\ifappendix
\clearpage
\appendix
\section{Background}
In the main paper, we discussed f-ANOVA
for one-dimensional effects for simplicity
and used concise notations.
In this section, we provide the generalized f-ANOVA
and discuss more strictly and precisely.

\subsection{Notations}

In Appendix, we use the following notations for simplicity:
\begin{itemize}
  \item $[D] \coloneqq \{1, 2, \dots, D\}$:
  a set of integers from $1$ to $D$, 
  \item $s \subseteq [D]$: a subset of $[D]$,
  \item $\Pcal_s \coloneqq 2^s$: the power set of $s$,
  \item $\Pcal_{-s} \coloneqq 2^s \setminus \{s\}$:
  the power set of $s$ without $s$ itself,
  \item $\Scal_{s} \coloneqq \{s\}$:
  a singleton only with $s \subseteq [D]$,
  \item $\Scal_{d} \coloneqq \Scal_{\{d\}}$:
  a singleton only with $\{d\}$,
  \item $\Pcal_d \coloneqq \Pcal_{\Scal_d} = \{\emptyset, \{d\}\}$:
  the power set of $\{d\}$,
  \item $\X \subseteq \mathbb{R}^D$: the search space of the function $f$,
  \item $\X_d \subseteq \mathbb{R}$: the domain of the $d$-th HP,
  \item $\X_s \subseteq \mathbb{R}^{|s|}$: the subspace of $\X$ built with the dimensions in $s$,
  \item $\X_{-s} \subseteq \mathbb{R}^{D - |s|}$: the complementary space of $\X_s$ on $\X$,
  \item $\xv_s \in \X_s$: a vector in $\X_s$,
  \item $\mu$: the Lebesgue measure,
  \item $\B_d$: the Borel body over $\mathbb{R}^d$,
  \item $\B_{\X_s} \coloneqq \B_{|s|} \cap \X_s$: the Borel body over $\X_s$,
  \item $b(\xv | A) \coloneqq \indic{\xv \in A}$: a binary function that returns $1$ if $\xv \in A$,
  \item $u(A) \coloneqq \frac{1}{\mu(A)}$:
  the uniform PDF defined on $A \subseteq \mathbb{R}^d$,
  \item $f(\xv | \xv_s)$:
  a function $f(\xv)$ with the dimensions in $s \subseteq [D]$
  are fixed to $\xv_s$,
  \item $f_s(\xv_s | S)$: 
  the marginal function of $f$ defined on $\X_s$
  that considers the interaction effects of dimensions
  belonging to $S \subseteq \Pcal_s$,
\end{itemize}
where $d \in [D]$.
Notice that the notations in the appendix
are slightly different from those in the main paper
to be more precise.

\subsection{Lebesgue Integral}
We first note that the definition of the Lebesgue measure $\mu$
changes based on the input for simplicity if not specified although this is abuse of notation.
For example, when we take $A \in \B_D \cap \X$ as an input,
$\mu$ is defined on $\X$
and when we take $A \in \B_{|s|} \cap \X_s$ as an input,
$\mu$ is defined on $\X_s$.
In this paper, we consistently use the Lebesgue integral $\int_{\xv \in \X} g(\xv)\mu(d\xv)$
instead of the Riemann integral $\int_{\xv \in \X} g(\xv)d\xv$;
however, $\int_{\xv \in \X} g(\xv)d\xv = \int_{\xv \in \X} g(\xv)\mu(d\xv)$
holds if $g(\xv)$ is Riemann integrable.
The only reason why we use the Lebesgue integral is that
some functions in our discussion cannot be handled by the Riemann integral,
and thus we encourage readers to replace $\mu(d\xv)$ with $d\xv$
if they are not familiar with the Lebesgue integral.

\subsection{Generalized f-ANOVA}
\label{appx:background:section:general-anova}
Suppose we would like to quantify HPI of
a function $f(\xv)$ defined on $\X$, then global HPI~\citeappx{hooker2007generalized} requires:
\begin{enumerate}
  \item \textbf{Marginal mean}:
  \begin{equation}
  \begin{aligned}
    f_s(\xv_s | \Pcal_s) \coloneqq \int_{\xv_{-s} \in \X_{-s}} f(\xv | \xv_s)\frac{\mu(d\xv_{-s})}{\mu(\X_{-s})}, \\
  \end{aligned}
  \label{appx:background:eq:general-marginal-mean}
  \end{equation}
  \item \textbf{Zero-centered marginal mean}:
  \begin{equation}
  \begin{aligned}
    f_s(\xv_s | \Scal_s) \coloneqq f_s(\xv_s | \Pcal_s) - \sum_{s^\prime \in \Pcal_{-s}} f_{s^\prime}(\xv_{s^\prime} | \Scal_{s^\prime}), \\
  \end{aligned}
  \label{appx:background:eq:general-zero-centered-marginal-mean}
  \end{equation}
  \item \textbf{Marginal variance}:
  \begin{equation}
  \begin{aligned}
    v_s \coloneqq \int_{\xv_s \in \X_s} f_s(\xv_s | \Scal_{s})^2 \frac{\mu(d\xv_{s})}{\mu(\X_{s})},
  \end{aligned}
  \label{appx:background:eq:general-marginal-var}
  \end{equation}
\end{enumerate}
where $f_s(\xv_s|\cdot)$ is a mean function defined on the subspace $\X_s$.
For the case of $s = \emptyset$,
as $f_{\emptyset}$ is the mean of $f(\xv)$ on $\X$,
$v_0 \coloneqq v_{\emptyset}$ becomes \emph{global variance}
and it is computed as:
\begin{equation}
\begin{aligned}
  v_0 = \int_{\xv \in \X}
  (
    \underbrace{m}_{
      ~~~~~~~\mathclap{\coloneqq f_{\emptyset}(\cdot | \emptyset)}
    }
    - f(\xv))^2 \frac{\mu(d\xv)}{\mu(\X)}.
\end{aligned}
\end{equation}
Then global HPI
of a set of dimensions $s$
is defined as the fraction 
$v_s/v_0$
as it is guaranteed that:
\begin{equation}
\begin{aligned}
  v_0 =
  \sum_{s \in \Pcal_{[D]} \setminus \Scal_{\emptyset}} v_s.
\end{aligned}
\end{equation}
For more details, see 
``variance decomposition'' in \citewithname{hooker2007generalized}.
Note that we consistently use $f_d$ as $f_{\Scal_d}$
and $v_d$ as $v_{\Scal_d}$.

\section{Theoretical Details of PED-ANOVA}
\label{appx:theoretical-details:section}

\subsection{Preliminaries}
We first provide the following definitions:
\begin{definition}[$\gamma$-quantile value]
  Given a quantile $\gamma \in (0, 1]$
  and a measurable function $f\mathrm{:} \X \rightarrow \mathbb{R}$,
  $\gamma$-quantile value $f^\gamma \in \mathbb{R}$
  is a real number such that$\mathrm{:}$
  \begin{equation}
    \begin{aligned}
      f^\gamma \coloneqq \inf\biggl\{
      f^\star \in \mathbb{R} ~\biggl|~
      \int_{\xv \in \X} \indic{f(\xv) \leq f^\star}
      \frac{\mu(d\xv)}{\mu(\X)}
      \geq \gamma
      \biggr\}.
    \end{aligned}
  \end{equation}
\end{definition}
\begin{definition}[$\gamma$-set]
  Given a quantile $\gamma \in (0, 1]$,
  $\gamma$-set $\Xg$ is defined as
  $\Xg \coloneqq \{\xv \in \X | f(\xv) \leq f^\gamma\} \in \Bx$.
\end{definition}
\begin{definition}[Marginal $\gamma$-set PDF]
  Given the $\gamma$-set $\Xg$ and its $\gamma$-set $\mathrm{PDF}$ $p(\xv | \Xg)$,
  the marginal $\gamma$-set $\mathrm{PDF}$ for the $d$-th dimension is computed as$\mathrm{:}$
  \begin{equation}
    \begin{aligned}
      p_{d}(x_d | \Xg) \coloneqq \int_{\xv_{-d} \in \X_{-d}}
      p(\xv | \Xg) \mu(d\xv_{-d}).
    \end{aligned}
  \end{equation}
\end{definition}
\begin{definition}[Pearson divergence~\citeappx{pearson1900x}]
  The Pearson divergence between two $\mathrm{PDFs}$
  $p(\xv), q(\xv)$
  is defined as$\mathrm{:}$
  \begin{equation}
    \begin{aligned}
      \pe{p}{q} \coloneqq \int_{\xv \in \X} \biggl(
      \frac{p(\xv)}{q(\xv)} - 1
      \biggr)^2 q(\xv) \mu(d\xv).
    \end{aligned}
  \end{equation}
\end{definition}
The approximation is usually performed via either
Monte-Carlo sampling
or the direct estimation~\citeappx{sugiyama2013direct}.
Figure~\ref{appx:theoretical-details:fig:ped-conceptual}
intuitively presents how we approximate PED.
Throughout this paper, we assume that the support of $q$ includes that of $p$.

\begin{figure}[t]
  \centering
  \includegraphics[width=0.46\textwidth]{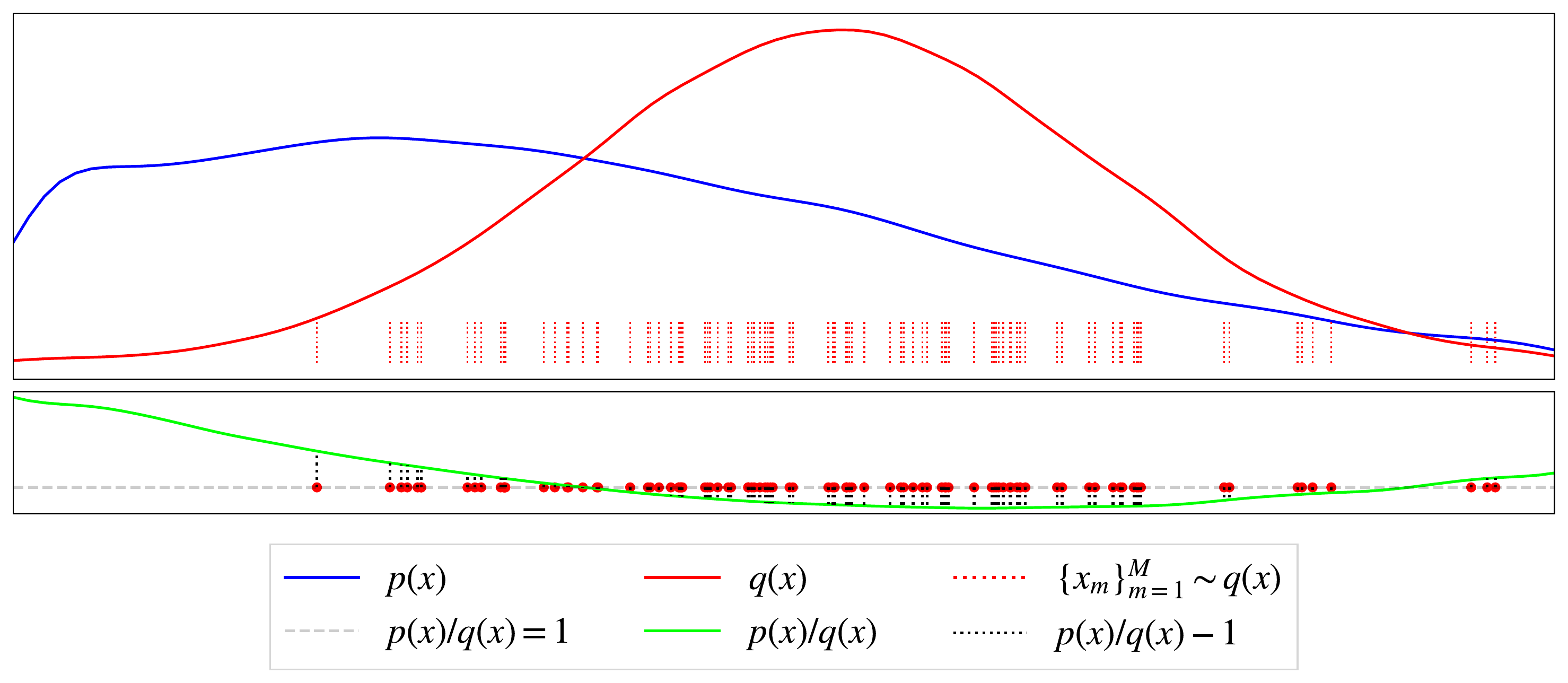}
  \vspace{-2mm}
  \caption{
    The conceptual visualization of Pearson divergence (PED).
    \textbf{Top}:
    the two PDFs (red, blue lines) of which we would like to measure PED.
    \textbf{Bottom}:
    the density ratio of the two PDFs (green line)
    and the baseline (gray dashed line),
    which is $p(x)/q(x) = 1$.
    PED measures the squared average of $p(x)/q(x) - 1$ (the black dotted lines)
    over the samples from $q(x)$
    (red dots) if the density ratio is defined over a continuous space;
    otherwise we can compute PED by a closed-form.
  }
  \vspace{-2mm}
  \label{appx:theoretical-details:fig:ped-conceptual}
\end{figure}

\subsection{Global Hyperparameter Importance}
\label{appx:global-hpi:section}
Suppose we would like to analyze a function $f$,
we first introduce the binary function
$b(\xv | \Xg) = \indic{\xv \in \Xg}$,
which is actually a probability measure.
Then the following proposition is derived from this constraint:
\begin{proposition}
  Given the binary function $b(\xv | \Xg)$
  and its $\gamma$-set $\mathrm{PDF}$ $p(\xv | \Xg)$,
  the marginal variance of each dimension $d \in [D]$ is$\mathrm{:}$
  \begin{equation}
    \begin{aligned}
      v_d & = \gamma^2
      \int_{x_d \in \X_d}
      \Biggl(
      \frac{p_d(x_d | \Xg)}{u(\X_d)} - 1
      \Biggr)^2 u(\X_{d}) \mu(dx_d) \\
                    & =
      \gamma^2 ~\pe{p_d(\cdot | \Xg)}{u(\X_d)}.
    \end{aligned}
    \label{appx:theoretical-details:eq:global-hpi}
  \end{equation}
  \label{appx:theoretical-details:proposition:global-hpi}
\end{proposition}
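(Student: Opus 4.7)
The plan is to instantiate the generalized f-ANOVA machinery of Appendix~\ref{appx:background:section:general-anova} with the binary function $b(\xv|\Xg)$ in the role of $f$, and then recognize the resulting marginal variance as a Pearson divergence. The key identity driving everything is that, up to a normalization constant, the binary indicator of the $\gamma$-set equals the $\gamma$-set PDF: by definition, $p(\xv|\Xg) = \indic{\xv\in\Xg}/\mu(\Xg)$, and since $\mu(\Xg)=\gamma\,\mu(\X)$, we get $b(\xv|\Xg) = \gamma\,\mu(\X)\,p(\xv|\Xg)$.

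First I would compute the global mean of $b$, which is immediate:
\begin{equation*}
  m = \int_{\xv\in\X} b(\xv|\Xg)\,\frac{\mu(d\xv)}{\mu(\X)} = \frac{\mu(\Xg)}{\mu(\X)} = \gamma.
\end{equation*}
Next I would compute the marginal mean $f_d(x_d|\Pcal_d)$ by plugging $b$ into Eq.~(\ref{appx:background:eq:general-marginal-mean}). Using the identity above together with the factorization $\mu(\X) = \mu(\X_d)\mu(\X_{-d})$ (Fubini on the product space) and the definition of the marginal $\gamma$-set PDF, this collapses to
\begin{equation*}
  f_d(x_d|\Pcal_d) = \gamma\,\mu(\X_d)\,p_d(x_d|\Xg) = \gamma\,\frac{p_d(x_d|\Xg)}{u(\X_d)}.
\end{equation*}

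Then I would form the zero-centered marginal mean. For the singleton $s=\{d\}$, the sum in Eq.~(\ref{appx:background:eq:general-zero-centered-marginal-mean}) runs only over $s'=\emptyset$, so $f_d(x_d|\Scal_d) = f_d(x_d|\Pcal_d) - m = \gamma(p_d(x_d|\Xg)/u(\X_d) - 1)$. Plugging this into Eq.~(\ref{appx:background:eq:general-marginal-var}) and factoring out $\gamma^2$ gives
\begin{equation*}
  v_d = \gamma^2 \int_{x_d\in\X_d}\left(\frac{p_d(x_d|\Xg)}{u(\X_d)}-1\right)^2 u(\X_d)\,\mu(dx_d),
\end{equation*}
where I used $1/\mu(\X_d) = u(\X_d)$ for the outer weighting. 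Recognizing the right-hand side as $\pe{p_d(\cdot|\Xg)}{u(\X_d)}$ by the definition of Pearson divergence closes the argument.

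The proof is essentially a substitution, so there is no deep obstacle; the only care required is (i) keeping the uniform-density normalizations $u(\X_d)=1/\mu(\X_d)$ consistent between the role of the ``$q$'' in PED and the role of the outer measure $\mu(dx_d)/\mu(\X_d)$ in the marginal-variance definition, and (ii) making sure the product structure $\mu(\X)=\mu(\X_d)\mu(\X_{-d})$ is invoked so that the integrand reduces cleanly to $p_d(x_d|\Xg)$. Once these bookkeeping points are settled, the three-line chain above yields the claim.
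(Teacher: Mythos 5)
Your proposal is correct and follows essentially the same route as the paper's proof: identify $b(\xv|\Xg)=\gamma\mu(\X)\,p(\xv|\Xg)$, compute $m=\gamma$, reduce the marginal mean to $\gamma\,p_d(x_d|\Xg)/u(\X_d)$ via Fubini on the product measure, and read off the Pearson divergence. The only cosmetic difference is that you fix the normalization constant directly from $\mu(\Xg)=\gamma\mu(\X)$, whereas the paper packages the same Fubini step as its Lemma~\ref{appx:proofs:lemma:integral-of-marginal-dist} to identify $p_d(x_d|\Xg)=V^{\gamma}_d(x_d)/(\gamma\mu(\X))$.
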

The proof is provided in Appendix~\ref{appx:proofs:section:proof-of-global-hpi}.
It measures PED between the uniform PDF $u(\X_d) = p_d(x_d|\X)$ and the marginal $\gamma$-set PDF $p_d(x_d|\Xg)$.
This HPI measure is generalized to higher orders and we show the formulation in Eq.~(\ref{appx:proofs:eq:general-global-hpi}) of Appendix~\ref{appx:proofs:section:proof-of-global-hpi}.
Note that since a non-uniform sampler yields non-uniform $\D$, it is recommended to use $p_d(x_d | \D^\gamma)$ with $\gamma = 1$ instead of $u(\X_d)$, i.e. $p_d(x_d | \D)$, as discussed in detail in Appendix~\ref{appx:use-cases:section:posthoc-hpo}.
The difference between using $u(\X_d)$ and $p_d(x_d | \D)$ is that while the former shows general HPI in the global space, the latter shows HPI during the search.

\subsection{Local Hyperparameter Importance}
\label{appx:theoretical-details:section:local-hpi}
On top of global HPI, our method can quantify
local HPI using the following theorem
(equivalent to Theorem~\ref{main:methods:theorem:local-importance}):
\begin{theorem}
  Given the binary function $b(\xv|\Xgp)$ and
  the $\gamma^\prime$- and $\gamma$-set
  $\mathrm{PDFs}$ $p(\xv|\Xgp), p(\xv | \Xg)$
  where $\gamma^\prime < \gamma$,
  the local marginal variance of each dimension $d \in [D]$ is$\mathrm{:}$
  \begin{align}
       v^\gamma_d 
       & =
      \biggl(
      \frac{\gamma^\prime}{\gamma}
      \biggr)^2
      \int_{x_d \in \X_d}
      \Biggl(
      \frac{p_d(x_d | \Xgp)}{p_d(x_d | \Xg)}
      - 1
      \Biggr)^2
      p_d(x_d | \Xg)
      \mu(dx_d) \nonumber         \\
       & = \biggl(
      \frac{\gamma^\prime}{\gamma}
      \biggr)^2
      ~\pe{p_d(\cdot | \Xgp)}{p_d(\cdot | \Xg)}.
      \label{appx:theoretical-details:eq:local-hpi}
    \end{align}
  \label{appx:theoretical-details:theorem:local-hpi}
\end{theorem}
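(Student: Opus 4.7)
The plan is to specialize the local marginal variance formula in Eq.~(\ref{main:methods:eq:marginal-local-var}) to the binary objective $f(\xv) = b(\xv|\Xgp) = \indic{\xv \in \Xgp}$ and unpack the three quantities $m^\gamma$, $f_d^\gamma(x_d)$, and the outer sampling measure into marginal $\gamma$-set PDFs, after which the Pearson divergence drops out essentially by the algebra of density ratios.

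First I would compute the local mean. The nesting $\Xgp \subseteq \Xg$, which follows from $\gamma^\prime < \gamma$ together with the sublevel-set definition of the $\gamma$-set, lets me collapse $\indic{\xv \in \Xgp}\indic{\xv \in \Xg} = \indic{\xv \in \Xgp}$, so Eq.~(\ref{main:methods:eq:local-mean}) reduces to $m^\gamma = \gamma^\prime/\gamma$ via $\mathbb{E}_{\xv \sim \X}[\indic{\xv \in \Xgp}] = \gamma^\prime$. Second, I would rewrite any slice expectation of the form $\mathbb{E}_{\xv_{-d} \sim \X_{-d}}[\indic{\xv \in \X^\star | x_d}]$ as a constant multiple of the marginal PDF $p_d(x_d | \X^\star)$: using $\mu(\X) = \mu(\X_d)\mu(\X_{-d})$ and $\mu(\X^\star) = \gamma^\star\mu(\X)$ for $\X^\star \in \{\Xgp, \Xg\}$, both the numerator of Eq.~(\ref{main:methods:eq:marginal-local-mean}) and its normalizer $V_d^\gamma(x_d)$ pick up the same factor $\mu(\X_d)$, and the ratio simplifies cleanly to $f_d^\gamma(x_d) = (\gamma^\prime/\gamma)\,p_d(x_d|\Xgp)/p_d(x_d|\Xg)$.

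Third I would identify the outer sampling measure: normalizing $V_d^\gamma(x_d) = \gamma\mu(\X_d)\,p_d(x_d|\Xg)$ by the constant $Z = \gamma\mu(\X_d)$ leaves exactly $p_d(\cdot|\Xg)$ as the sampling PDF, so the expectation in Eq.~(\ref{main:methods:eq:marginal-local-var}) integrates against $p_d(x_d|\Xg)\,\mu(dx_d)$. Substituting the previous steps then gives
\begin{equation*}
  v_d^\gamma = \int_{\X_d} \left(\frac{\gamma^\prime}{\gamma}\frac{p_d(x_d|\Xgp)}{p_d(x_d|\Xg)} - \frac{\gamma^\prime}{\gamma}\right)^2 p_d(x_d|\Xg)\,\mu(dx_d),
\end{equation*}
and factoring $(\gamma^\prime/\gamma)^2$ out of the square yields exactly $(\gamma^\prime/\gamma)^2\,\pe{p_d(\cdot|\Xgp)}{p_d(\cdot|\Xg)}$.

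The main obstacle is the normalization bookkeeping: each of $m^\gamma$, $f_d^\gamma$, and $V_d^\gamma$ carries a different combination of $\mu(\X_d), \mu(\X_{-d}), \gamma,$ and $\gamma^\prime$, and the cancellation works only because (i) the inclusion $\Xgp \subseteq \Xg$ collapses the indicator products, (ii) the outer average is weighted by $V_d^\gamma$ rather than being uniform on $\X_d$, which is what produces the $p_d(\cdot|\Xg)$-weighted integrand characteristic of PED, and (iii) the $\mu(\X_d)$ factors cancel symmetrically between $\mathbb{E}_{\xv_{-d} \sim \X_{-d}}[\indic{\xv \in \Xgp|x_d}]$ and $V_d^\gamma(x_d)$. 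Once these three observations are in place, the identification with Pearson divergence is immediate, and the same scheme extends to the higher-order formula referenced in Eq.~(\ref{appendix:proofs:eq:general-local-importance}).
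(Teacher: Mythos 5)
Your proposal is correct and follows essentially the same route as the paper's proof in Appendix~\ref{appx:proofs:section:proof-of-local-hpi}: collapse $b^\gamma b^{\gamma^\prime}=b^{\gamma^\prime}$ via $\Xgp\subseteq\Xg$, identify $f_d^\gamma(x_d)=\frac{\gamma^\prime}{\gamma}\frac{p_d(x_d|\Xgp)}{p_d(x_d|\Xg)}$ and $m^\gamma=\frac{\gamma^\prime}{\gamma}$ through the relation $V_d^\gamma(x_d)=\gamma\mu(\X)\,p_d(x_d|\Xg)$ (Lemma~\ref{appx:proofs:lemma:integral-of-marginal-dist}), and recognize the normalized weight as $p_d(\cdot|\Xg)$ before factoring out $(\gamma^\prime/\gamma)^2$. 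The normalization bookkeeping you flag is exactly the content of the paper's argument, so no gap remains.
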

This formulation also generalizes to higher orders
as shown in Eq.~(\ref{appendix:proofs:eq:general-local-importance})
of Appendix~\ref{appx:proofs:section:proof-of-local-hpi}.

\begin{figure}[t]
  \centering
  \includegraphics[width=0.49\textwidth]{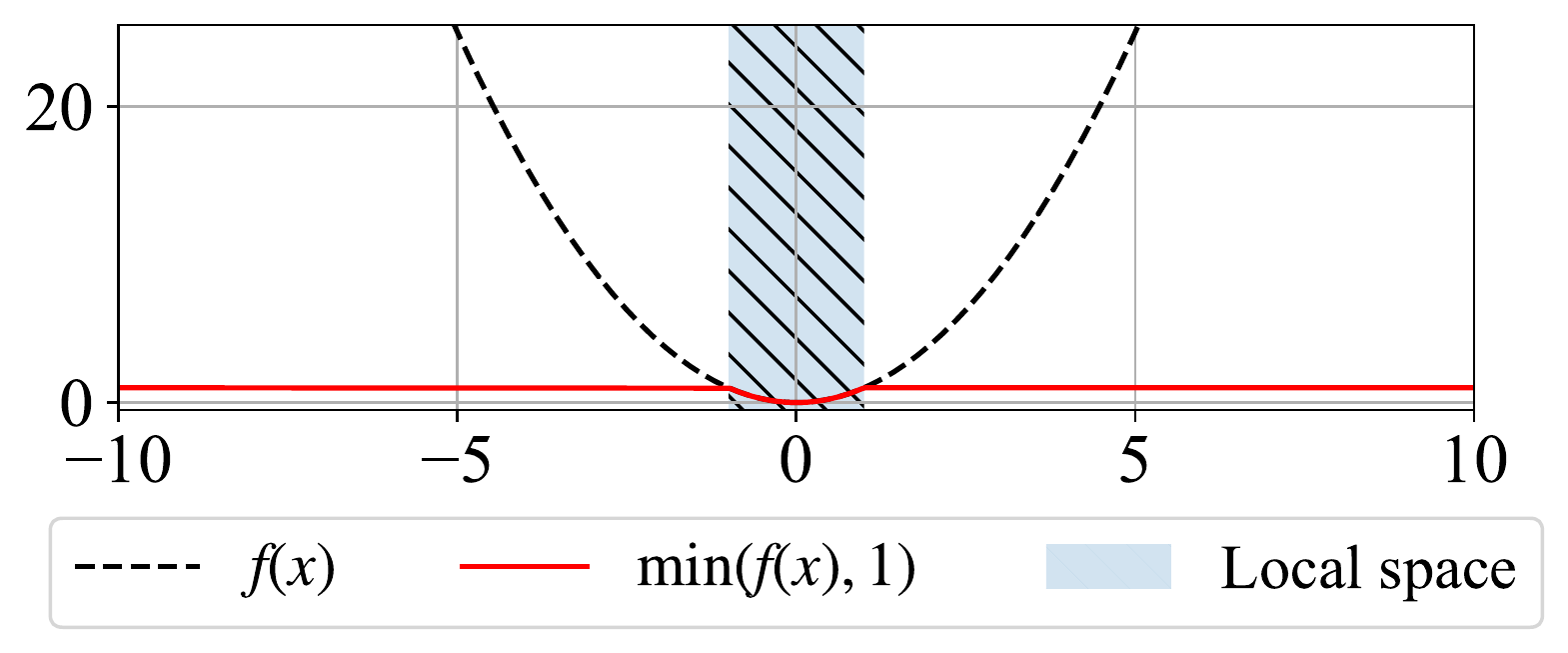}
  \caption{
    The visualization of the clipped function
    $g(x) = \min(f(x), 1) = \min(x^2, 1)$.
    The black dotted line is $f(x)$
    and the red solid line is $g(x)$.
    While global HPI computes the variance over the whole domain
    $[-10, 10]$, local HPI (\textbf{our proposition})
    computes the variance only over the blue-shaded domain.
  }
  \label{appx:theoretical-details:fig:toy-example-viz}
\end{figure}

\begin{figure}[t]
  \centering
  \includegraphics[width=0.49\textwidth]{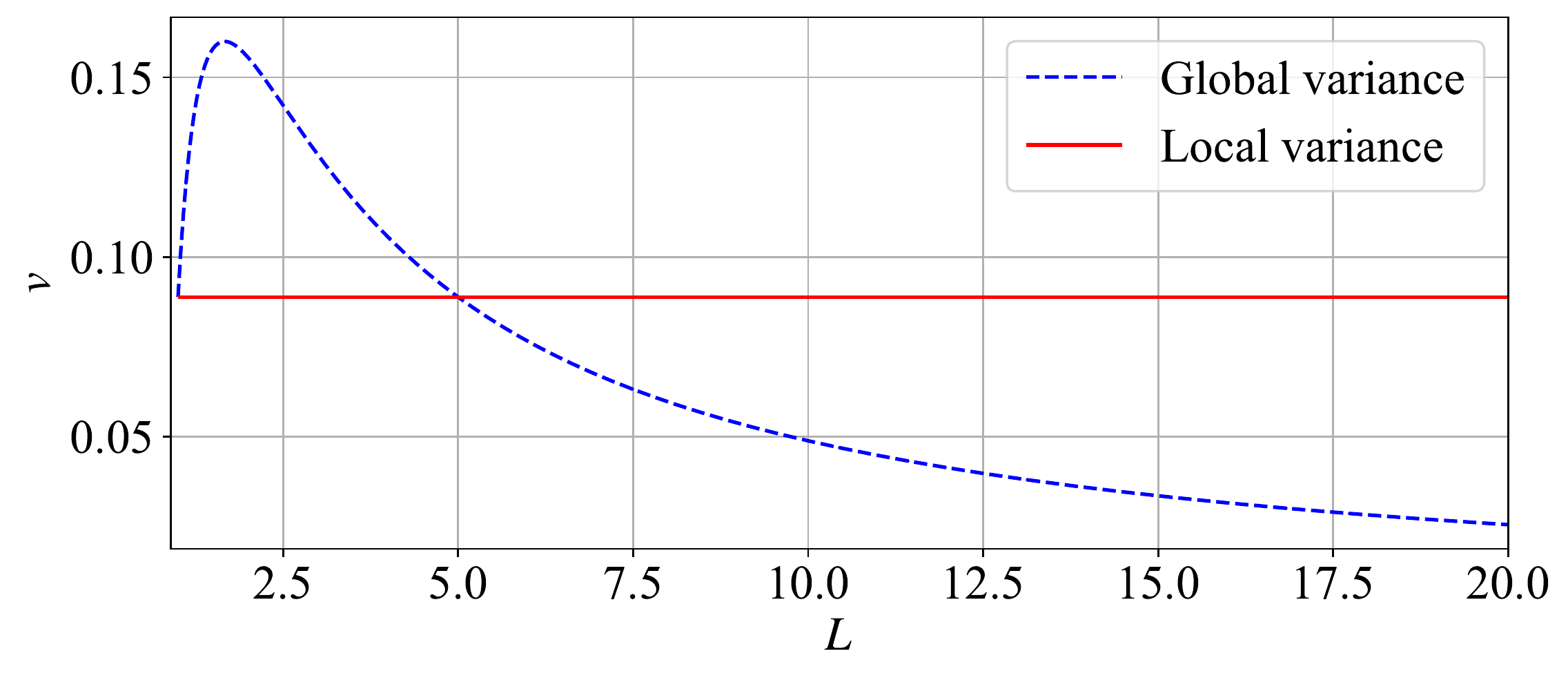}
  \caption{
    The plots of global/local variances of $g(x)$.
    The horizontal axis is the domain size $L$
    and the vertical axis is the variance of $g(x)$.
    The blue dashed line shows the global variance
    and the red solid line shows the local variance.
    Only the global variance changes depending on
    the search space design.
  }
  \label{appx:theoretical-details:fig:length-vs-hpi}
\end{figure}

\begin{figure*}[t]
  \begin{center}
    \subfloat[\sloppy Global space\label{appx:theoretical-details:fig:global-eg-on-toy}]{
      \includegraphics[width=0.4\textwidth]{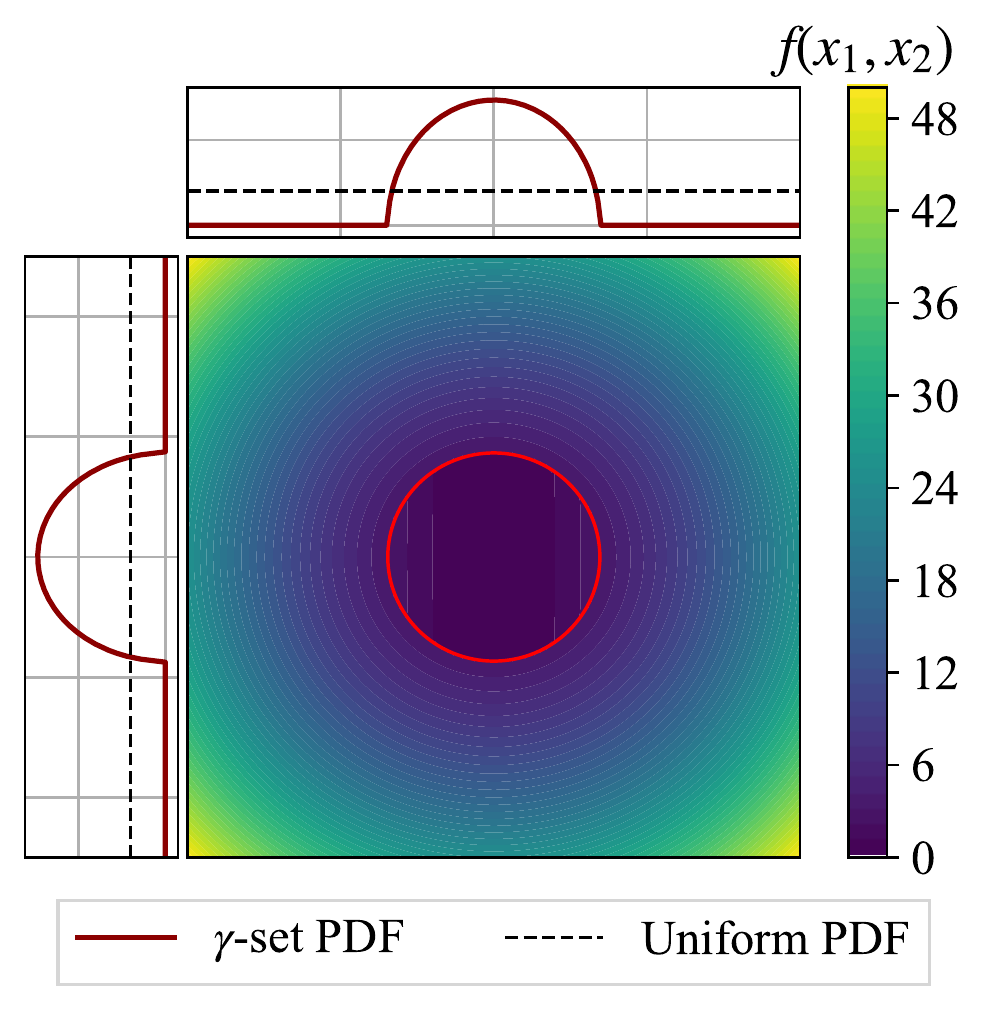}
    }
    \subfloat[\sloppy Local space\label{appx:theoretical-details:fig:local-eg-on-toy}]{
      \includegraphics[width=0.4\textwidth]{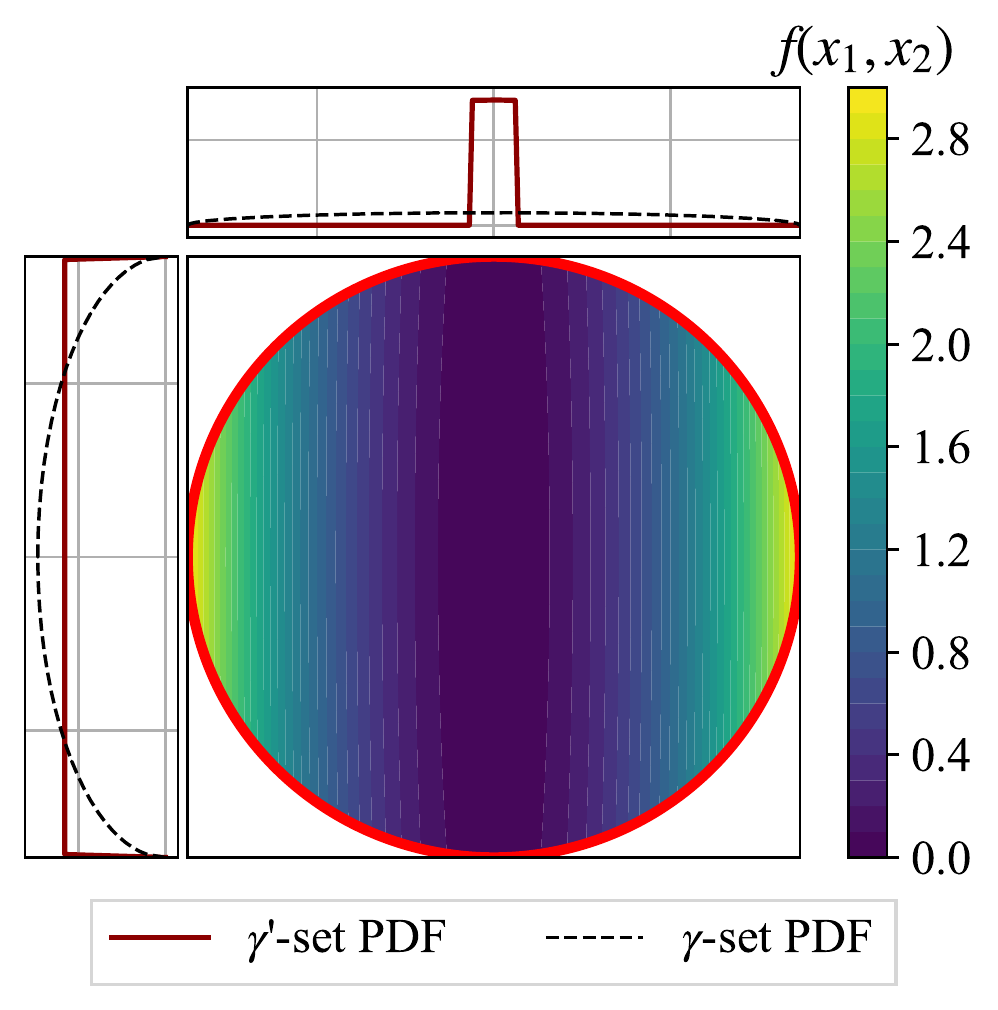}
    }
    \caption{
      The normalized contour plots of the function (darker is better)
      in Eq.~(\ref{appx:theoretical-details:eq:toy-func})
      in both global and local spaces.
      The function is defined on $[-5, 5] \times [-5, 5]$
      and the local space is defined as $f(x_1, x_2) < f^\gamma = 3$,
      which is $x_1^2 + x_2^2 < 3$.
      The red circle on the global space represents the local space.
      The red and black-dashed plots on each side of the figures show
      the marginal $\gamma$-set PDFs
      and the uniform PDFs in each space.
      \textbf{Left}:
      the visualization of the global space.
      Since the marginal $\gamma$-set ($\gamma = (\sqrt{3})^2 \pi / 100 \simeq 0.1$)
      PDFs for each dimension
      have the same shape, both dimensions are equally important.
      \textbf{Right}:
      the visualization of the local space.
      We took $\gamma^\prime = 0.01$.
      While the marginal $\gamma^\prime$-set for $x_1$ sharply peaks at the center (\textbf{Top} in (b)),
      that for $x_2$ does not (\textbf{Left} in (b)).
      It implies that $x_1$ is more important in the local space.
      \label{appx:theoretical-details:fig:global-vs-local}
    }
    \vspace{-4mm}
  \end{center}
\end{figure*}

\subsection{Why Cannot Prior Works Measure Local Importance?}
\label{appx:theoretical-details:section:difference-from-prior-works}

In this section, we analytically describe why
the prior works~\citeappx{hutter2014efficient,biedenkapp2018cave} cannot local HPI correctly.
Note that we take over the notations from the main paper in this section.

\subsubsection{Case I: Analysis of Clipped Functions}
\label{appx:theoretical-details:section:analysis-clipped}
As mentioned in Section~\ref{main:background:section:local-anova},
\citewithname{hutter2014efficient} proposed global HPI on a clipped function $g(\xv) \coloneqq \min(f(\xv), f^\star)$
as local HPI.
However, since global HPI on the clipped function is still affected
by the search space design,
this is also not a local HPI measure.
To illustrate our implication,
we would like to present a simple example.
Suppose we would like to compute global HPI of
$g(x) \coloneqq \min(x^2, 1)$ defined on $[-L, L]$
as visualized in Figure~\ref{appx:theoretical-details:fig:toy-example-viz}.
Then the global mean and the global variance of $g(x)$ are computed as follows:
\begin{equation}
  \begin{aligned}
    m^{\mathrm{global}}(L)
                           & \coloneqq \frac{1}{2L}\int_{-L}^{L} g(x)dx = 1 - \frac{2}{3L},          \\
    v^{\mathrm{global}}(L) & \coloneqq \frac{1}{2L} \int_{-L}^L (g(x) - m^{\mathrm{global}}(L))^2 dx \\
                           & =  \frac{4}{3L} \biggl(\frac{2}{5} - \frac{1}{3L}\biggr).
  \end{aligned}
  \label{appx:theoretical-details:eq:local-hpi-by-hutter}
\end{equation}
On the other hand, if we limit the integral to the local space
(see the blue-shaded domain in Figure~\ref{appx:theoretical-details:fig:toy-example-viz}),
which is $\{x \in [-L, L]\mid f(x) < 1\} = [-1, 1]$,
the local mean and the local variance are computed as follows:
\begin{equation}
  \begin{aligned}
    m^{\mathrm{local}} & \coloneqq m^{\mathrm{global}}(1) = \frac{1}{3},  \\
    v^{\mathrm{local}} & \coloneqq v^{\mathrm{global}}(1) = \frac{4}{45}.
  \end{aligned}
\end{equation}
Note that our method computes the integral of $\indic{x \in [-1, 1]}$
instead of $g(x)$, but we calculated the integral of $g(x)$ here to
show the difference from the original proposition.
Figure~\ref{appx:theoretical-details:fig:length-vs-hpi}
shows the variation in the global variance
(the local f-ANOVA by \citewithname{hutter2014efficient})
in Eq.~(\ref{appx:theoretical-details:eq:local-hpi-by-hutter})
with respect to the search space design variable $L$.
As can be seen, while the local variance is constant,
the global variance dynamically changes.
It implies that the prior work cannot strictly quantify HPI locally.
The problem is that when we compute the global variance of $g(x)$,
the variance $v$ quickly decays as the domain size $L$ becomes larger.
It means that HPI in some dimensions might be severely underestimated
if we use global HPI on the transformed function such as
$g(x)$.
For this reason, it is important to limit the integral to a local space.

\subsubsection{Case II: Analysis of Dynamically Changing HPI}
\label{appx:theoretical-details:section:analysis-on-func}
In this section, we provide a simple example where
local HPI matters, but cannot be quantified by
the prior work~\cite{biedenkapp2018cave}
and analytically show the prior work cannot quantify local HPI
as intended.
We analytically apply Eqs.~(\ref{main:background:eq:global-mean})--(\ref{main:background:eq:marginal-var}) (global HPI)
and Eqs.~(\ref{main:methods:eq:local-mean})--(\ref{main:methods:eq:marginal-local-var}) (our local HPI formula)
to the following function:
\begin{eqnarray}
  f(x_1, x_2) = \left\{
  \begin{array}{ll}
    x_1^2 + x_2^2             & (x_1^2 + x_2^2 \geq 3) \\
    x_1^2 + \frac{x_2^2}{100} & (\mathrm{otherwise})
  \end{array}
  \right.
  \label{appx:theoretical-details:eq:toy-func}
\end{eqnarray}
where $x_1, x_2 \in [-5, 5]$.
The visualization is available in Figure~\ref{appx:theoretical-details:fig:global-vs-local}
and it is clear that while both $x_1, x_2$ is equally important in the global space,
$x_1$ is much more important in the local space $\Xg \coloneqq \{(x_1, x_2) \in [-5,5]\times [-5,5] \mid x_1^2 + x_2^2 \leq 3\}$,
which is the $\gamma \simeq 0.1$-set.

We first derive the global marginal variances of each dimension.
The global marginal means and the global mean of $f(x_1, x_2)$
are as follows:
\begin{eqnarray*}
  f_1(x_1) = \left\{
  \begin{array}{ll}
    x_1^2 + \frac{25}{3}                                    & (|x_1| > \sqrt{3})   \\
    x_1^2 + \frac{25}{3} - \frac{33}{500} (3 - x_1^2)^{3/2} & (\mathrm{otherwise}) \\
  \end{array}
  \right.,
\end{eqnarray*}
\begin{eqnarray*}
  f_2(x_2) = \left\{
  \begin{array}{ll}
    x_2^2 + \frac{25}{3}                                        & (|x_2| > \sqrt{3})   \\
    x_2^2 + \frac{25}{3} - \frac{99}{500}x_2^2 \sqrt{3 - x_2^2} & (\mathrm{otherwise}) \\
  \end{array}
  \right.,
\end{eqnarray*}
\begin{equation}
  \begin{aligned}
    m
     & = \frac{1}{10}\int_{-5}^{5} f_1(x_1) dx_1
    = \frac{1}{10}\int_{-5}^{5} f_2(x_2) dx_2    \\
     & = \frac{50}{3} - \frac{891}{40000}\pi.
  \end{aligned}
\end{equation}
Then we numerically compute the global marginal variances based on the results.
\begin{equation}
  \begin{aligned}
    v_1 & =
    \frac{1}{10}\int_{-5}^5 (f_1(x_1) - m)^2 dx_1 \simeq 56.67,        \\
    v_2 & = \frac{1}{10}\int_{-5}^5 (f_2(x_2) - m)^2 dx_2\simeq 56.53.
  \end{aligned}
\end{equation}
Note that although those values could be analytically computed,
we only show the numerical solutions
to avoid additional complexities.
When using global HPI of PED-ANOVA in Eq.~(\ref{appx:theoretical-details:eq:global-hpi}),
we obtain $2.11$ for both dimensions.
The results show that both dimensions are almost equally important
in the global space as expected.
In fact, the marginal $\gamma$-set PDFs
for each dimension in Figure~\ref{appx:theoretical-details:fig:global-eg-on-toy} coincide
and this confirms the results.

Now, we derive local HPI of $f(x_1, x_2)$ in $\Xg$.
We first calculate the scaling factor:
\begin{equation}
  \begin{aligned}
    V^{\gamma}_1(x_1) & \coloneqq
    \int_{-\sqrt{3 - x_1^2}}^{\sqrt{3 - x_1^2}}
    dx_2 = 2\sqrt{3 - x_1^2}                                                             \\
    V^{\gamma}_2(x_2) & \coloneqq
    \int_{-\sqrt{3 - x_2^2}}^{\sqrt{3 - x_2^2}}
    dx_1 = 2\sqrt{3 - x_2^2}                                                             \\
                      & (\because \indic{\xv \in \Xg} \Rightarrow x_1^2 + x_2^2 \leq 3).
  \end{aligned}
\end{equation}
Then we compute the local marginal means as follows:
\begin{equation}
  \begin{aligned}
    f_1^\gamma(x_1) & =
    \frac{1}{2\sqrt{3-x_1^2}}\int_{-\sqrt{3 - x_1^2}}^{\sqrt{3 - x_1^2}}
    \biggl(x_1^2 + \frac{x_2^2}{100}\biggr) dx_2               \\
                    & = \frac{1}{100} + \frac{299}{300} x_1^2,
  \end{aligned}
\end{equation}
\begin{equation}
  \begin{aligned}
    f_2^\gamma(x_2)
     & =
    \frac{1}{2\sqrt{3-x_2^2}}\int_{-\sqrt{3 - x_2^2}}^{\sqrt{3 - x_2^2}}
    \biggl(x_1^2 + \frac{x_2^2}{100}\biggr) dx_1 \\
     & = 1 - \frac{97}{300} x_2^2,
  \end{aligned}
\end{equation}
Using the local marginal means, the global mean is computed as follows:
\begin{equation}
  \begin{aligned}
    m^\gamma
     & = \int_{-\sqrt{3}}^{\sqrt{3}} f_1^\gamma(x_1)
    \underbrace{\frac{2\sqrt{3-x_1^2}}{3\pi}}_{
    \mathclap{= V^{\gamma}_1(x_1) / Z}
    } dx_1                                           \\
     & = \int_{-\sqrt{3}}^{\sqrt{3}} f_2^\gamma(x_2)
    \underbrace{\frac{2\sqrt{3-x_2^2}}{3\pi}}_{
    \mathclap{= V^{\gamma}_2(x_2) / Z}
    } dx_2
    = \frac{303}{400}.
  \end{aligned}
\end{equation}
Note that $2\sqrt{3 - x_1^2} / 3\pi, 2\sqrt{3 - x_2^2} / 3\pi$
are the marginal $\gamma$-set PDFs and
it refers to $V^{\gamma}_d(x_d)/Z$ of Eq.~(\ref{main:methods:eq:marginal-local-var})
in the main paper.
Then we yield the marginal variances for each dimension as follows:
\begin{align}
  v_1^\gamma & =
  \int_{-\sqrt{3}}^{\sqrt{3}} (f_1^\gamma(x_1) - m^\gamma)^2
  \frac{2\sqrt{3 - x_1^2}}{3\pi} dx_1 \nonumber               \\
             & = \frac{89401}{160000} \simeq 0.559, \nonumber \\
  v_2^\gamma & =
  \int_{-\sqrt{3}}^{\sqrt{3}} (f_2^\gamma(x_2) - m^\gamma)^2
  \frac{2\sqrt{3 - x_2^2}}{3\pi} dx_2               \nonumber \\
             & = \frac{9409}{160000}
  \simeq 0.0588.
\end{align}
Additionally, local HPI of PED-ANOVA in Eq.~(\ref{appx:theoretical-details:eq:local-hpi})
yields $v_1^\gamma \simeq 8.97$ and $v_1^\gamma \simeq 0.181$.
This shows that
$x_1$ is more important in the local space
and this interpretation can be seen in Figure~\ref{appx:theoretical-details:fig:local-eg-on-toy} as well.
In fact, while the marginal $\gamma$-set PDF
for $x_2$ is close to uniform,
that for $x_1$ sharply peaks at the center.
This indicates that $x_1$ is indeed more important in the local space.

In contrast to our method,
the local f-ANOVA based on Eq.~(\ref{main:background:eq:local-hpi-biedenkapp})
obtains the following results:
\begin{equation}
  \begin{aligned}
    m_1 & = \frac{1}{10}\int_{-5}^{5} f(x_1, 0)dx_1 = \frac{25}{3},                          \\
    m_2 & = \frac{1}{10}\int_{-5}^{5} f(0, x_2)dx_2 = \frac{25}{3} - \frac{99\sqrt{3}}{500}, \\
    v_1 & = \frac{1}{10}\int_{-5}^{5} (f(x_1, 0) - m_1)^2 dx_1 = \frac{500}{9} \simeq 55.6,  \\
    v_2 & = \frac{1}{10}\int_{-5}^{5} (f(0, x_2) - m_2)^2 dx_2 \simeq 60.5,
  \end{aligned}
\end{equation}
where we plugged in
the optimal solution $\xopt = [0, 0]^\top$
for $f(x_1, x_2)$
to Eq.~(\ref{main:background:eq:local-hpi-biedenkapp}).
The conclusion drawn from the results is that both dimensions are almost equally important in the local space;
however, it clearly contradicts
the intuition drawn from Figure~\ref{appx:theoretical-details:fig:local-eg-on-toy}.

\subsection{Riemann Split vs Lebesgue Split}
\label{appx:theoretical-details:section:splits}
As discussed in Section~\ref{main:methods:section:local-hpi}, the Lebesgue split reduces the number of control parameters and allows us to focus on the analysis in promising domains.
Furthermore, it provides more reliable quantification because the subspaces obtained by the Lebesgue split guarantee to have observations while those by the Riemann split do not.
This also leads to the de-bias in the quantification incurred by sampling bias~\citeappx{moosbauer2021explaining}.
In principle, the sampling bias is caused when observations are not generated from a uniform sampler, e.g. when using Bayesian optimization.
It may lead to incorrect quantification of HPI because the integral of f-ANOVA in Eqs.~(\ref{main:background:eq:global-mean})--(\ref{main:background:eq:marginal-var}) taken over the uniform PDF.
However, as the local HPI in Eqs.~(\ref{main:methods:eq:local-mean})--(\ref{main:methods:eq:marginal-local-var}) takes the integral over the distribution built by the observations, the sampling bias can be removed.
Note that the difference between using $u(\X_d)$ and $p_d(x_d | \D)$ is that while the former shows general HPI in the global space, the latter shows HPI during the search and we discuss more details in Appendix~\ref{appx:use-cases:section:posthoc-hpo}. 
The sampling bias is a big problem in the Riemann split because it does not guarantee to have observations in the specified subspace.
It implies that when we do not have no observations near or inside the specified subspace, the surrogate model could be very unreliable and we could even get zero variances for all the parameters, which means that all the parameters are trivial.
The drawback of the Lebesgue split is to not be able to provide an easy-to-understand subspace compared to the Riemann split.
Especially when practitioners have a specific subspace in their mind, the Riemann split could be a better choice.

\begin{figure*}[t]
  \centering
  \includegraphics[width=0.9\textwidth]{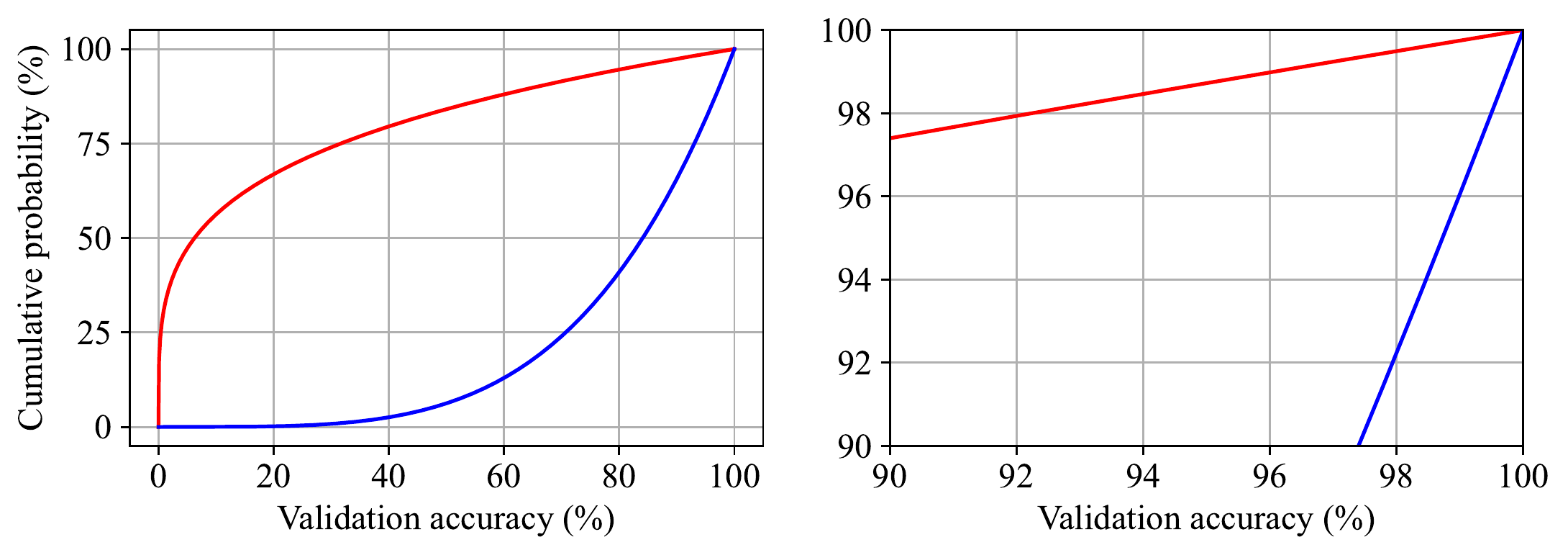}
  \vspace{-3mm}
  \caption{
    The distributions of two different functions.
    The horizontal axis shows the function value of $f(\xv)$
    (we consider the validation accuracy of image classification)
    and the vertical axis shows the cumulative probability
    of a certain validation accuracy value.
    For example,
    the validation accuracy of $60\%$
    in the blue line of the left figure is
    about the bottom $15\%$
    and that of $80\%$
    the blue line of the left figure is about the bottom $40\%$.
    \textbf{Left}: the whole range of distributions.
    The red line shows that the bottom-$25\%$ configurations
    still exhibits the validation accuracy of about $0\%$
    and
    the blue line shows that the bottom-$25\%$ configurations
    already exhibits the validation accuracy of about $70\%$.
    \textbf{Right}:
    the magnified figure of the left figure.
    While the blue line can improve only about $3\%$
    from the top-$10\%$ to the top (i.e. $\Delta f_1 \simeq 0.03$),
    the blue line can improve $97\%$ from the bottom to the top-$10\%$ (i.e. $\Delta f_2 \simeq 0.97$).
    For the red line, $\Delta f_1 \simeq 0.65$
    and $\Delta f_2 = 0.35$.
  }
  \vspace{-3mm}
  \label{appx:theoretical-details:fig:scale-ignorance}
\end{figure*}

\subsection{Benefits of Scale Ignorance}
\label{appx:theoretical-details:section:scale-ignorance}
We first note that a function $f(\xv)$ is higher-is-better in this section,
but a function $f(\xv)$ is lower-is-better in the other sections.
As discussed in Section~\ref{main:validation:section:scale-ignorance},
the benefit of scale ignorance is to be able to ignore the contribution
from the tail.
We prepared two functions with different function value distributions
in Figure~\ref{appx:theoretical-details:fig:scale-ignorance}.
More formally, the distributions show:
\begin{equation}
\begin{aligned}
  \mathbb{P}(f^\star) \coloneqq \frac{1}{\mu(\X)}\int_{\xv \in \X} \indic{f(\xv) \leq f^\star} \mu(d\xv).
\end{aligned}
\end{equation}
While the red line shows a quick evolution of the validation accuracy
at the high cumulative probability domain,
the blue line shows a slow evolution.
In principle, when a function exhibits
a slow evolution at the high cumulative probability domain
as in the blue line,
the function exhibits a quick evolution
at the low cumulative probability domain,
and thus the variance is biased toward the variation
in the low cumulative probability domain.
For example, we are intrinsically interested in
what makes difference in the last few percent
of the validation accuracy,
but not the first trivial $80 \sim 90\%$ variation.
More formally, we need to pay attention to:
\begin{enumerate}
  \item \textbf{Variation from bottom}: $\Delta f_1 \coloneqq \mathbb{P}^{-1}(1 - \gamma) - \mathbb{P}^{-1}(0)$,
  \item \textbf{Variation from top}: $\Delta f_2 \coloneqq \mathbb{P}^{-1}(1) - \mathbb{P}^{-1}(1 - \gamma)$.
\end{enumerate}
When $\Delta f_1 \gg \Delta f_2$ as in the blue line,
the original f-ANOVA is likely to yield a global variance similar
to the scale of $\Delta f_1^2$ rather than $\Delta f_2^2$
although we cannot mathematically guarantee it.
In Section~\ref{main:validation:section:scale-ignorance}, we called \textbf{variation from bottom} ``tail''.
As discussed in Appendix~\ref{appx:experiments:section:dataset-detail},
JAHS-Bench-201 has similar distributions as the blue line in Figure~\ref{appx:theoretical-details:fig:scale-ignorance}.
PED-ANOVA will not be affected by \emph{variation from bottom}
due to the scale ignorance nature
and it captures HPI for better performance in the last effort more clearly.

\section{Proofs}
\label{appx:proofs:section}

\subsection{Assumptions}
\label{appx:proofs:section:assumptions}
We assume the following:
\begin{enumerate}
  \item Objective function $f: \X \rightarrow \mathbb{R}$
  is a square-integrable measurable function defined over the compact convex
  measurable subset $\X \subseteq \mathbb{R}^D$, and
  \item The $\gamma$-set PDF always exists and
  $p(\xv|\Xg) \propto b(\xv|\Xg) \coloneqq b^\gamma(\xv) = \indic{\xv \in \Xg}$
  holds.
\end{enumerate}
Strictly speaking, we cannot guarantee that the $\gamma$-set PDF
always exists;
however, we formally assume that the $\gamma$-set PDF
exists by considering 
the empirical distribution and
the formal derivative of the step function
as the Dirac delta function.
As defined above, we use $b^\gamma(\xv)$ for simplicity in this section.

\subsection{Proof of Proposition \ref{appx:theoretical-details:proposition:global-hpi}}
\label{appx:proofs:section:proof-of-global-hpi}
We first compute the marginal
mean of the binary function $b^\gamma(\xv)$:
\begin{equation}
  \begin{aligned}
     f_d(x_d | \Pcal_d) \coloneqq 
    \frac{V^{\gamma}_d(x_d)}{\mu(\X_{-d})}
    &= \int_{\xv_{-d} \in \X_{-d}} b^\gamma(\xv | x_d) \frac{\mu(d\xv_{-d})}{\mu(\X_{-d})}
  \end{aligned}
  \label{appx:proofs:eq:def-of-local-volume}
\end{equation}
where $\xv_{-d} \in \mathbb{R}^{D - 1}$ is $\xv$
without the $d$-th dimension.
We also note that the following holds by definition of the binary function:
\begin{equation}
  \begin{aligned}
    m = \gamma = \int_{\xv \in \X} b^\gamma(\xv) \frac{\mu(d\xv)}{\mu(\X)}.
  \end{aligned}
  \label{appx:proofs:eq:binary-expectation-is-gamma}
\end{equation}
Notice that $m$ is the second term of RHS in Eq.~(\ref{appx:background:eq:general-zero-centered-marginal-mean})
because $\Pcal_{-d} = \{\emptyset\}$ holds.
Then we first prove the following lemma:
\begin{lemma}
  The following holds under the assumptions of this section$\mathrm{:}$
  \begin{equation}
    \begin{aligned}
      \gamma = \int_{x_d \in \X_d} \frac{V^{\gamma}_d(x_d)}{\mu(\X_{-d})}
      \frac{\mu(dx_d)}{\mu(\X_d)} 
      = \int_{x_d \in \X_d} \frac{V^{\gamma}_d(x_d)}{\mu(\X)} \mu(dx_d).
    \end{aligned}
  \end{equation}
  \label{appx:proofs:lemma:integral-of-marginal-dist}
\end{lemma}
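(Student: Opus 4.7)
The plan is to derive both equalities directly from the definition of $V^\gamma_d(x_d)$ in Eq.~(\ref{appx:proofs:eq:def-of-local-volume}) together with the identity $\gamma = \int_{\xv \in \X} b^\gamma(\xv) \mu(d\xv)/\mu(\X)$ from Eq.~(\ref{appx:proofs:eq:binary-expectation-is-gamma}), using Fubini's theorem to split the integral over $\X$ into iterated integrals over $\X_d$ and $\X_{-d}$. Since $\X = \X_d \times \X_{-d}$ is a compact convex measurable product space and $b^\gamma$ is a bounded nonnegative measurable function, Fubini applies without issue.

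First I would start from $\gamma = \frac{1}{\mu(\X)}\int_{\xv \in \X} b^\gamma(\xv)\mu(d\xv)$, rewrite the Lebesgue measure on $\X$ as the product measure $\mu(d\xv) = \mu(dx_d)\mu(d\xv_{-d})$, and apply Fubini to obtain
\[
\gamma = \frac{1}{\mu(\X)}\int_{x_d \in \X_d}\biggl(\int_{\xv_{-d} \in \X_{-d}} b^\gamma(\xv|x_d)\mu(d\xv_{-d})\biggr)\mu(dx_d).
\]
Then I would recognize the inner integral as $V^\gamma_d(x_d)$ by Eq.~(\ref{appx:proofs:eq:def-of-local-volume}), which gives the second equality of the lemma directly.

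For the first equality, I would use the product-space identity $\mu(\X) = \mu(\X_d)\mu(\X_{-d})$, which holds because $\X$ is the Cartesian product $\X_d \times \X_{-d}$ and $\mu$ is the product Lebesgue measure. Substituting this decomposition into the denominator $\mu(\X)$ of the previous expression immediately yields
\[
\gamma = \int_{x_d \in \X_d}\frac{V^\gamma_d(x_d)}{\mu(\X_{-d})}\frac{\mu(dx_d)}{\mu(\X_d)},
\]
completing the chain of equalities.

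The proof is essentially a two-line application of Fubini plus the product-measure factorization, so there is no real obstacle; the only thing to be careful about is justifying the use of Fubini, which is guaranteed by the nonnegativity of $b^\gamma$ (Tonelli's theorem suffices) together with the assumption that $\X$ is a compact measurable subset so all the measures involved are finite. I would briefly note this justification rather than invoke any deeper machinery.
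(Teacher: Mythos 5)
Your proposal is correct and matches the paper's own proof in all essentials: both rest on the definition of $V^\gamma_d$ in Eq.~(41), the identity $\gamma = \int_{\xv\in\X} b^\gamma(\xv)\,\mu(d\xv)/\mu(\X)$, and an application of Fubini--Tonelli justified by the nonnegativity of $b^\gamma$ and the product structure of the Lebesgue measure, with the factorization $\mu(\X)=\mu(\X_d)\mu(\X_{-d})$ linking the two forms. The only difference is cosmetic: you start from $\gamma$ and unfold it into the iterated integral, whereas the paper starts from the iterated integral and collapses it to $\gamma$.
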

\begin{proof}
  By definition, the following equality holds$\mathrm{:}$
  \begin{equation}
    \begin{aligned}
      \int_{x_d \in \X_d} & \frac{V^{\gamma}_d(x_d)}{\mu(\X_{-d})}
      \frac{\mu(dx_d)}{\mu(\X_d)} = \\                                                      
                           &\int_{x_d \in \X_d} \int_{\xv_{-d} \in \X_{-d}} b^\gamma(\xv | x_d)
      \frac{\mu(d\xv_{-d})}{\mu(\X_{-d})}
      \frac{\mu(dx_d)}{\mu(\X_d)}.
    \end{aligned}
  \end{equation}
  Since the Lebesgue measure is a product measure
  and $b^\gamma(\xv) \geq 0$,
  the Fubini's theorem holds, and thus we obtain the following$\mathrm{:}$
  \begin{equation}
    \begin{aligned}
      \mathrm{LHS} = \int_{\xv \in \X} b^\gamma(\xv) \frac{\mu(d\xv)}{\mu(\X)}
      = \gamma~(\because \mathrm{Eq.~}(\ref{appx:proofs:eq:binary-expectation-is-gamma})).
    \end{aligned}
  \end{equation}
  This completes the proof.
\end{proof}
Using this lemma, we prove Proposition~\ref{appx:theoretical-details:proposition:global-hpi}.
\begin{proof}
  From the assumption, $b^\gamma(\xv)$
  is proportional to the $\gamma$-set $\mathrm{PDF}$ $p(\xv | \Xg)$,
  the marginalization of $b^\gamma(\xv)$ is also proportional to that of the $\gamma$-set $\mathrm{PDF}$.
  From $\mathrm{Lemma~\ref{appx:proofs:lemma:integral-of-marginal-dist}}$,
  the scale to equalize this marginalization is $\gamma\mu(\X)$;
  therefore, the marginal $\gamma$-set $\mathrm{PDF}$
  is computed as $p_d(x_d | \Xg) = V^{\gamma}_d(x_d) / (\gamma \mu(\X))$ and
  we obtain the following marginal variance$\mathrm{:}$
  \begin{equation}
    \begin{aligned}
      v_d & =
      \int_{x_d \in \X_d} \biggl(
        \underbrace{\frac{V^{\gamma}_d(x_d)}{\mu(\X_{-d})}}_{
          f_d(x_d | \Pcal_d)
        } - \underbrace{\gamma}_{
          m
        }
        \biggr)^2 \frac{\mu(dx_d)}{\mu(\X_d)} 
        ~(\because \mathrm{Eq.}~(\ref{main:background:eq:marginal-var})) \\
                  & = \mathbb{E}\biggl[
                    \biggl(\frac{V^{\gamma}_d(x_d)}{\mu(\X_{-d})} - \gamma
                    \biggr)^2\biggr]                                                       \\
                  & = \mathbb{E}[(\gamma \mu(\X_d)p_d(x_d | \Xg) - \gamma)^2]                                       \\
                  & = \gamma^2 \mathbb{E}\Biggl[
        \Biggl(\frac{p_d(x_d | \Xg)}{u(\X_d)} - 1
        \Biggr)^2
        \Biggr]
    \end{aligned}
  \end{equation}
  where we used $u(\X_d) = 1 / \mu(\X_d)$
  and the expectation is taken with respect to
  $u(\X_d)$.
  This completes the proof.
\end{proof}
Using Eqs.~(\ref{appx:background:eq:general-marginal-mean})--(\ref{appx:background:eq:general-marginal-var})
and considering the same procedure as in 
Proposition~\ref{appx:theoretical-details:proposition:global-hpi},
higher orders of HPI for an arbitrary
combination of dimensions 
$s \subseteq [D]$
can be computed as:
\begin{equation}
\begin{aligned}
  v_s
  &= \mathbb{E}\biggl[
    \biggl(
      \frac{V^{\gamma}_s(\xv_s)}{\mu(\X_{-s})}
       -
      \sum_{s^\prime \in \Pcal_{-s}}
      \frac{V^{\gamma}_{s^\prime}(\xv_{s^\prime})}{\mu(\X_{-s^\prime})}
    \biggr)^2
  \biggr] \\
  &= \mathbb{E}\biggl[
    \biggl(
      \frac{\gamma p_s(\xv_s | \Xg)}{u(\X_s)} -
      \sum_{s^\prime \in \Pcal_{-s}}
      \frac{\gamma p_{s^\prime}(\xv_{s^\prime} | \Xg)}{u(\X_{s^\prime})}
    \biggr)^2
  \biggr] \\
  &~~~~~~\biggl(
    \because p_s(\xv_s | \Xg) = \frac{u(\X) V^{\gamma}_s(\xv_s)}{\gamma}
  \biggr)\\
  &= \gamma^2 \mathbb{E}\biggl[
    \biggl(
      \frac{p_s(\xv_s | \Xg)}{u(\X_s)} -
      \sum_{s^\prime \in \Pcal_{-s}}
      \frac{p_{s^\prime}(\xv_{s^\prime} | \Xg)}{u(\X_{s^\prime})}
    \biggr)^2
  \biggr]
\end{aligned}
\label{appx:proofs:eq:general-global-hpi}
\end{equation} 
where the expectation is taken with respect to $u(\X_s)$
and we defined $p_{\emptyset} = 1$.
Note that $\Pcal_{-s}$ includes an empty set $\emptyset$.
For example, when $s = [2]$,
the second term becomes:
\begin{equation}
\begin{aligned}
  \sum_{s^\prime \in \{\Scal_1, \Scal_2, \Scal_{\emptyset}\}}
  \frac{p_{s^\prime}(\xv_{s^\prime} | \Xg)}{u(\X_{s^\prime})}
  = \frac{p_1(x_1 | \Xg)}{u(\X_1)}
  + \frac{p_2(x_2 | \Xg)}{u(\X_2)}
  + 1.
\end{aligned}
\end{equation}
Eq.~(\ref{appx:proofs:eq:general-global-hpi}) falls back to Eq.~(\ref{appx:theoretical-details:eq:global-hpi}) when $s = \Scal_d$.

\subsection{Proof of Theorem \ref{main:methods:theorem:local-importance}}
\label{appx:proofs:section:proof-of-local-hpi}
\begin{proof}
  Based on $\mathrm{Eq.~(\ref{main:methods:eq:marginal-local-mean})}$,
  we first compute the marginal mean of the binary function $b^{\gamma^\prime}(\xv)$
  using $V^{\gamma}_d(x_d) = \gamma \mu(\X) p_d(x_d|\Xg)\mathrm{:}$
  \begin{equation}
    \begin{aligned}
      f_d^\gamma(x_d | \Pcal_d)
       & = \int_{\xv_{-d} \in \X_{-d}}
      b^{\gamma^\prime}(\xv | x_d) \frac{b^\gamma(\xv|x_d)\mu(d\xv_{-d})}{
        V^{\gamma}_d(x_d)
      }                                                                          \\
       & = \int_{\xv_{-d} \in \X_{-d}}
      b^{\gamma^\prime}(\xv | x_d) \frac{\mu(d\xv_{-d})}{
        V^{\gamma}_d(x_d)
      }\\
       &(\because \Xgp \subseteq \Xg \Rightarrow b^\gamma(\xv)b^{\gamma^\prime}(\xv) = b^{\gamma^\prime}(\xv))                                           \\
       & = \frac{V^{\gamma^\prime}_d(x_d)}{V^{\gamma}_d(x_d)}
      = \frac{\gamma^\prime \mu(\X) p_d(x_d|\Xgp) }{\gamma \mu(\X) p_d(x_d|\Xg)} \\
       & = \frac{\gamma^\prime}{\gamma}\frac{p_d(x_d|\Xgp)}{p_d(x_d|\Xg)}.
    \end{aligned}
    \label{appx:proofs:eq:marginal-mean-is-density-ratio}
  \end{equation}
  where we define $f_d^\gamma(x_d | \Pcal_d) = 1$
  if $V^{\gamma}_d(x_d) = 0$.
  By definition of the binary function, the following holds$\mathrm{:}$
  \begin{equation}
    \begin{aligned}
      m^\gamma = \frac{\gamma^\prime}{\gamma} =
      \int_{\xv \in \Xg} b^{\gamma^\prime}(\xv) \frac{\mu(d\xv)}{\mu(\Xg)}.
    \end{aligned}
    \label{appx:proofs:eq:binary-expectation-is-gamma-ratio}
  \end{equation}
  Using $\mathrm{Eqs.~(\ref{main:methods:eq:marginal-local-var}),(\ref{appx:proofs:eq:marginal-mean-is-density-ratio}),(\ref{appx:proofs:eq:binary-expectation-is-gamma-ratio})}$,
  the marginal variance is computed as follows:
  \begin{equation}
    \begin{aligned}
      v^\gamma_d 
      & =
      \int_{x_d \in \X_d} \biggl(
      \underbrace{\frac{V^{\gamma^\prime}_d(x_d)}{V^{\gamma}_d(x_d)}}_{
          f_d^\gamma(x_d | \Pcal_d)
        }
      -
      \underbrace{\frac{\gamma^\prime}{\gamma}}_{
          m^\gamma
        }
      \biggr)^2 \\
      &~~~~~~~~~~~~~~~~~~~~~~~~ \times \underbrace{
        \frac{
          \int_{\xv_{-d} \in \X_{-d}} b^\gamma(\xv|x_d)\mu(d\xv_{-d})
        }{
          \int_{\xv \in \X} b^\gamma(\xv|x_d)\mu(d\xv)
        }
      }_{p_d(x_d|\Xg) = \frac{V^{\gamma}_d(x_d)}{\gamma \mu(\X)}}
      \mu(dx_d)                                                     \\
                   & = \biggl(\frac{\gamma^\prime}{\gamma}\biggr)^2
      \int_{x_d \in \X_d} \biggl(
      \frac{p_d(x_d|\Xgp)}{p_d(x_d|\Xg)}
      - 1\biggr)^2
      p_d(x_d|\Xg)\mu(dx_d)                                         \\
                   & = \biggl(
      \frac{\gamma^\prime}{\gamma}
      \biggr)^2
      \mathbb{E}\biggl[
        \biggl(
        \frac{p_d(x_d | \Xgp)}{p_d(x_d | \Xg)} - 1
        \biggr)^2
        \biggr]
    \end{aligned}
  \end{equation}
  where the expectation is taken over
  $p_d(x_d|\Xg)$.
  This completes the proof.
\end{proof}

Using Eqs.~(\ref{main:background:eq:global-mean})--(\ref{main:background:eq:marginal-var})
and considering the same procedure as in
Theorem~\ref{main:methods:theorem:local-importance},
higher orders of HPI for an arbitrary
combination of dimensions
$s \subseteq [D]$
can be computed as:
\begin{equation}
  \begin{aligned}
    v^\gamma_s
    & = \mathbb{E}\biggl[
      \biggl(
      \frac{V^{\gamma^\prime}_s(\xv_s)}{V^{\gamma}_s(\xv_s)}
      -
      \sum_{s^\prime \in \Pcal_{-s}}
      \frac{V^{\gamma^\prime}_{s^\prime}(\xv_{s^\prime})}{V^{\gamma}_{s^\prime}(\xv_{s^\prime})}
      \biggr)^2
      \biggr]                                                       \\
               & = \mathbb{E}\biggl[
      \biggl(
      \frac{\gamma^\prime p_s(\xv_s | \Xgp)}{\gamma p_s(\xv_s | \Xg)} -
      \sum_{s^\prime \in \Pcal_{-s}}
      \frac{\gamma^\prime p_{s^\prime}(\xv_{s^\prime} | \Xgp)}{\gamma p_{s^\prime}(\xv_{s^\prime} | \Xg)}
      \biggr)^2
      \biggr]                                                       \\
               & = \biggl(\frac{\gamma^\prime}{\gamma}\biggr)^2
    \mathbb{E}\biggl[
      \biggl(
      \frac{p_s(\xv_s | \Xgp)}{p_s(\xv_s | \Xg)} -
      \sum_{s^\prime \in \Pcal_{-s}}
      \frac{ p_{s^\prime}(\xv_{s^\prime} | \Xgp)}{ p_{s^\prime}(\xv_{s^\prime} | \Xg)}
      \biggr)^2
      \biggr]
  \end{aligned}
  \label{appendix:proofs:eq:general-local-importance}
\end{equation}
where the expectation is taken with respect to $p_s(\xv_s | \Xg)$.
Recall that we defined $p_{\emptyset} = 1$ and $\Pcal_{-s}$ includes an empty set $\emptyset$.
For example, when $s = [2]$,
the second term becomes:
\begin{equation}
\begin{aligned}
  \sum_{s^\prime \in \{\Scal_1, \Scal_2, \Scal_{\emptyset}\}}
  \frac{p_{s^\prime}(\xv_{s^\prime} | \Xgp)}{p_{s^\prime}(\xv_{s^\prime} | \Xg)}
  = \frac{p_1(x_1 | \Xgp)}{p_1(x_1 | \Xg)}
  + \frac{p_2(x_2 | \Xgp)}{p_2(x_2 | \Xg)}
  + 1.
\end{aligned}
\end{equation}
Eq.~(\ref{appendix:proofs:eq:general-local-importance}) falls back to Eq.~(\ref{appx:theoretical-details:eq:local-hpi}) when $s = \Scal_d$.

\subsection{Proof about Discretization Error}
\label{appx:proofs:section:discretization-error}
In this section, we prove the maximum discretization error
of PED and the statement is the following proposition:
\begin{proposition}
  Suppose
  the function $h(x_d) \coloneqq (p_d(x_d|\Xgp) / p_d(x_d|\Xg) - 1)^2$
  is integrable and Lipschitz continuous with a Lipschitz constant of $C \in \mathbb{R}_{\geq 0}$
  and we discretize the domain of this function in $n_d$ grids at the even interval,
  then the discretization error of the local marginal variance is bounded by $O(\frac{C}{n_d})$.
  \label{appx:proofs:proposition:discretization-error}
\end{proposition}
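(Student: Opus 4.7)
The plan is to recognize the discretization in Eq.~(\ref{main:methods:eq:compressed-ped}) as a Riemann-sum approximation of the exact expression for $v_d^\gamma$ given in Theorem~\ref{appx:theoretical-details:theorem:local-hpi}, and then exploit the Lipschitz hypothesis on $h$ together with the fact that $p_d(\cdot|\Xg)$ integrates to one. Concretely, up to the prefactor $(\gamma^\prime/\gamma)^2$ (which only rescales errors), I need to bound the gap between
\[
I \coloneqq \int_{\X_d} h(x_d)\, p_d(x_d|\Xg)\, \mu(dx_d)
\quad \text{and} \quad
\hat I \coloneqq \sum_{n=1}^{n_d} h(x_d^{(n)}) \frac{p_d(x_d^{(n)}|\Xg)}{Z},
\]
where $Z = \sum_n p_d(x_d^{(n)}|\Xg)$ and the grid $\{x_d^{(n)}\}$ is uniform on $[L,R]$ with spacing $\Delta = (R-L)/(n_d - 1)$.

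First I would partition $\X_d = \bigsqcup_{n} I_n$ into subintervals of width $\Delta$ centered at $x_d^{(n)}$, rewrite $I = \sum_n \int_{I_n} h(x_d)\, p_d(x_d|\Xg)\, \mu(dx_d)$, and apply the Lipschitz hypothesis pointwise: $|h(x_d) - h(x_d^{(n)})| \leq C \Delta$ for $x_d \in I_n$. Pulling the constant $h(x_d^{(n)})$ outside and summing gives
\[
\Bigl| I - \sum_n h(x_d^{(n)}) \int_{I_n} p_d(x_d|\Xg)\,\mu(dx_d) \Bigr|
\leq C\Delta \int_{\X_d} p_d(x_d|\Xg)\,\mu(dx_d) = C\Delta = O(C/n_d),
\]
which is the main Lipschitz contribution. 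This is the standard midpoint-rule error, but weighted by the PDF.

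Next I would relate the inner integrals $\int_{I_n} p_d(x_d|\Xg)\,\mu(dx_d)$ to the discrete weights $p_d(x_d^{(n)}|\Xg)/Z$. Because $p_d(\cdot|\Xg)$ built by the KDE in Eq.~(\ref{main:methods:eq:compressed-kde}) is itself Lipschitz for standard smooth kernels, a second midpoint-rule bound gives $\int_{I_n} p_d(x_d|\Xg)\,\mu(dx_d) = \Delta\, p_d(x_d^{(n)}|\Xg) + O(\Delta^2)$ on each cell and, summing, $\Delta Z = 1 + O(1/n_d)$. Substituting $1/Z = \Delta(1 + O(1/n_d))$ into $\hat I$ and combining with the bound on $I$ from the previous step shows $|I - \hat I| = O(C/n_d) + O(\|h\|_\infty/n_d)$; since $h$ is Lipschitz on the compact domain $[L,R]$ and integrable, $\|h\|_\infty$ is absorbed into the constant, giving $O(C/n_d)$ overall.

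The main obstacle will be the bookkeeping in the second step: cleanly separating the Lipschitz error in $h$ from the normalization mismatch $\Delta Z \neq 1$ and ensuring that their combination through the ratio $p_d(x_d^{(n)}|\Xg)/Z$ does not produce a cross term that worsens the rate. A first-order expansion of $1/Z$ around $1/\Delta$ handles this, provided we have (i) Lipschitz/boundedness regularity of $p_d(\cdot|\Xg)$ inherited from the kernel, and (ii) a uniform bound on $|h|$ on $[L,R]$ from the Lipschitz assumption. Once these are in place, multiplying through by $(\gamma^\prime/\gamma)^2$ preserves the rate and yields the claimed $O(C/n_d)$ discretization error for $v_d^\gamma$.
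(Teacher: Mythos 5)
Your proof is correct and its core step coincides with the paper's: on each grid cell the Lipschitz hypothesis gives $|h(x)-h(x_d^{(n)})|\leq C\Delta$, and integrating this deviation against the weight $p_d(x_d|\Xg)\,\mu(dx_d)$, which has total mass one, yields the $C\Delta = O(C/n_d)$ bound. The paper's proof stops there: it only bounds the error of replacing $h(x)$ by its grid value under the \emph{exact} continuous weights $\int_{g_n}^{g_{n+1}} p_d(x|\Xg)\,\mu(dx)$, and does not address the fact that Eq.~(\ref{main:methods:eq:compressed-ped}) actually uses the discrete weights $p_d(x_d^{(n)}|\D^\gamma)/Z$. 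Your second step, which quantifies the mismatch $\Delta Z = 1 + O(1/n_d)$ via Lipschitz regularity of the KDE, is therefore a genuine addition that more faithfully analyzes what the algorithm computes; the price is an extra regularity assumption on $p_d(\cdot|\Xg)$ (inherited from the kernel) that is not in the proposition statement. One small imperfection: your claim that the resulting $O(\|h\|_\infty/n_d)$ term ``is absorbed into the constant, giving $O(C/n_d)$'' is not literally right, since $\|h\|_\infty$ need not scale with $C$ (take $h$ a large constant, so $C=0$ but the term survives). The clean fix is to note that both weight systems sum to one, so with $w_n \coloneqq \int_{I_n} p_d(x|\Xg)\,\mu(dx)$ and $q_n \coloneqq p_d(x_d^{(n)}|\Xg)/Z$ one has $\sum_n (w_n - q_n) = 0$ and hence
\begin{equation}
\begin{aligned}
\Bigl|\sum_n h(x_d^{(n)})(w_n - q_n)\Bigr|
= \Bigl|\sum_n \bigl(h(x_d^{(n)}) - h(L)\bigr)(w_n - q_n)\Bigr|
\leq C(R-L)\sum_n |w_n - q_n|,
\end{aligned}
\end{equation}
which restores the linear dependence on $C$ and shows in particular that the normalization error vanishes exactly when $h$ is constant. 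With that adjustment your argument gives the claimed $O(C/n_d)$ rate and is, if anything, a more complete account than the paper's.
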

\begin{proof}
  We define the domain $\X_d$ as $[L, R] (L, R \in \mathbb{R}, L < R)$ and the grid points as $\{g_n\}_{n=0}^{n_d - 1} \coloneqq \{L + ns\}_{n=0}^{n_d - 1}$ where $s \coloneqq (R - L) / (n_d - 1)$ is a step size.
  Furthermore, we use the notation $p_d^\gamma(x_d) \coloneqq p_d(x_d | \X^\gamma)$.
  Since the discretization error is maximized when
  $h(x_d)$ is monotonically increasing or decreasing
  with the maximum possible slope of $C$,
  the maximum possible discretization error $\epsilon_{\max}$
  is computed as follows$\mathrm{:}$
  \begin{equation}
    \begin{aligned}
       & \frac{\epsilon_{\max}}{\gamma^{\prime ~2}}      \\
       & \coloneqq
      \sum_{n=0}^{n_d - 2}
      \int_{g_n}^{g_{n+1}}
      p_d^\gamma(x)\bigl(
      \bigl(
      h(x) + C(x - g_n)
      \bigr) - h(x)\bigr) dx                             \\
       & =
      \sum_{n=0}^{n_d - 2}
      \int_{g_n}^{g_{n+1}}
      p_d^\gamma(x) C (x - g_n) dx                       \\
       & \leq
      \sum_{n=0}^{n_d - 2}
      \int_{g_n}^{g_{n+1}} sC p_d^\gamma(x) dx
      = sC \sum_{n=0}^{n_d - 2}
      \int_{g_n}^{g_{n+1}} p_d^\gamma(x) dx              \\
       & = sC \int_{L}^{R} p_d^\gamma(x) dx = sC~\biggl(
      \because \int_{L}^{R}p_d^\gamma(x)dx = 1
      \biggr)                                            \\
       & = \frac{C(R - L)}{n_d - 1}.                     \\
    \end{aligned}
  \end{equation}
  From the fact that the maximum discretization error
  $\epsilon_{\max} = \frac{\gamma^{\prime 2} C (R - L)}{n_d - 1}$,
  it is obvious that the discretization error
  is bounded by $O(\frac{C}{n_d})$
  and this completes the proof.
\end{proof}
Note that although our result has a term of the domain size $R - L$, the Lipschitz constant $C$ is inversely proportional to $R - L$ for $h$, and thus the overall order does not change largely.

\begin{table*}
  \begin{center}
    \caption{
      The search space of JAHS-Bench-201
      and the grid space used in our experiments.
      \ta~\protect\citeappx{muller2021trivialaugment}
      is a data augmentation method
      and Edge 1 -- 6 follows the NAS-Bench-201~\protect\citeappx{dong2020bench}
      search space.
      This grid search space has
      $7 \times 7 \times 3 \times 3 \times 3 \times 2 \times 5^6 \simeq~$41M
      configurations in total.
      \textbf{Hyperparameter}: the names of each HP.
      \textbf{Parameter type}: the type of each HP.
      Type is either \texttt{continuous}, \texttt{discrete}, or \texttt{categorical}. 
      \textbf{Choices used in our experiments}: the values
      of each grid used in our experiments.
      While categorical and discrete parameters take exactly the same grids as in the original benchmark dataset,
      continuous take 7 grids for each HP to make
      the dataset size finite.
    }
    \label{appx:experiments:tab:jahs-search-space}
    \begin{tabular}{lll}
      \toprule
      Hyperparameter                                     & Parameter type & Choices used in our experiments                                                                                  \\
      \midrule
      Learning rate                                      & Continuous     & \{1e-3, 3e-3, 1e-2, 3e-2, 1e-1, 3e-1, 1e-0\}                                              \\
      Weight decay                                       & Continuous     & \{1e-5, 3e-5, 1e-4, 3e-4, 1e-3, 3e-3, 1e-2\}                                              \\
      Activation function                                & Categorical    & \{\texttt{relu}, \texttt{hardswish}, \texttt{mish}\}                                                                 \\
      TrivialAugment & Categorical    & \{\texttt{True}, \texttt{False}\}                                                                           \\
      \midrule
      Depth multiplier                                   & Discrete       & \{1, 3, 5\}                                                                               \\
      Width multiplier                                   & Discrete       & \{4, 8, 16\}                                                                              \\
      \midrule
      Operation 1 -- 6                                        & Categorical    & \{\texttt{skip-connection}, \texttt{none}, \texttt{bn-conv3x3}, \texttt{bn-conv1x1}, \texttt{avgpool3x3}\} \\
      \bottomrule
    \end{tabular}
  \end{center}
\end{table*}

\begin{figure*}[t]
  \centering
  \includegraphics[width=0.98\textwidth]{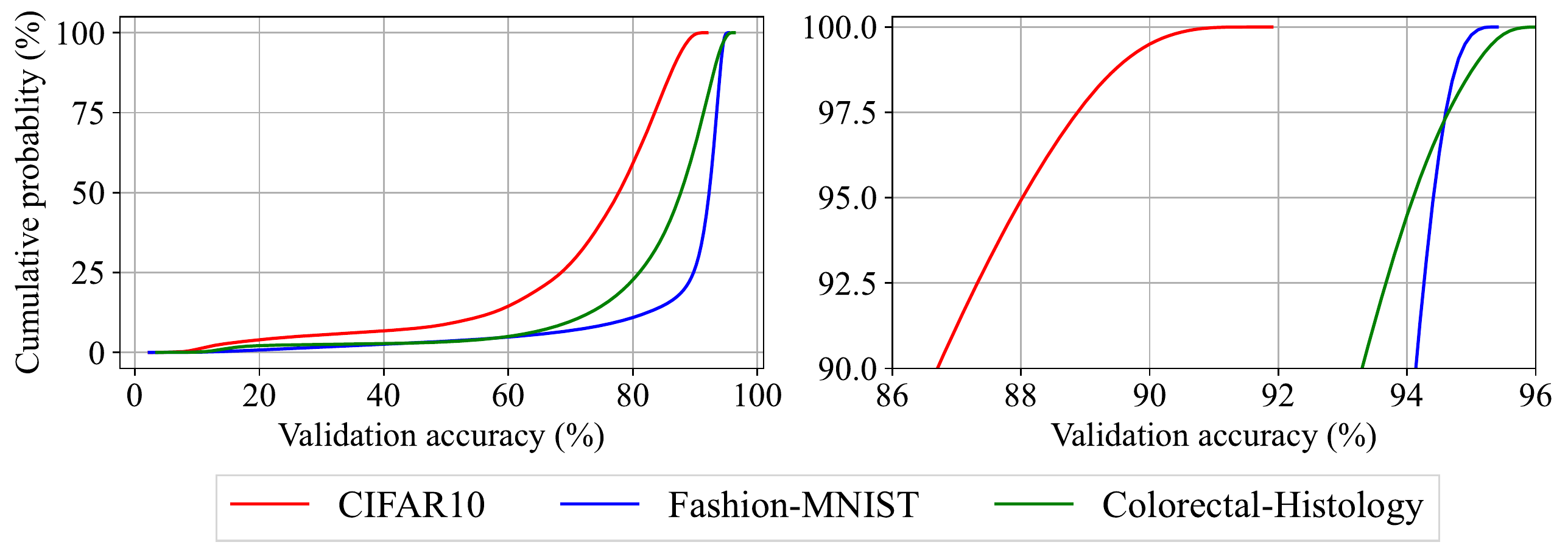}
  \caption{
    The cumulative distributions of the validation accuracy in each dataset
    of JAHS-Bench-201.
    As discussed in Appendix~\ref{appx:theoretical-details:section:scale-ignorance}, the distributions exhibit $\Delta f_1 \ll \Delta f_2$.
    \textbf{Left}: 
    the whole range of the distributions.
    The validation accuracy slowly increases in the high cumulative probability domain,
    and thus the variation in the high cumulative probability domain is relatively less important
    as in the blue line of Figure~\ref{appx:theoretical-details:fig:scale-ignorance}.
    \textbf{Right}: the magnified figure of the left figure.
    This figure tells us that each dataset still has a room for improvement from the top $10\%$.
  }
  \label{appx:experiments:fig:datasets-performance-dist}
\end{figure*}

\begin{figure}
  \centering
  \includegraphics[width=0.49\textwidth]{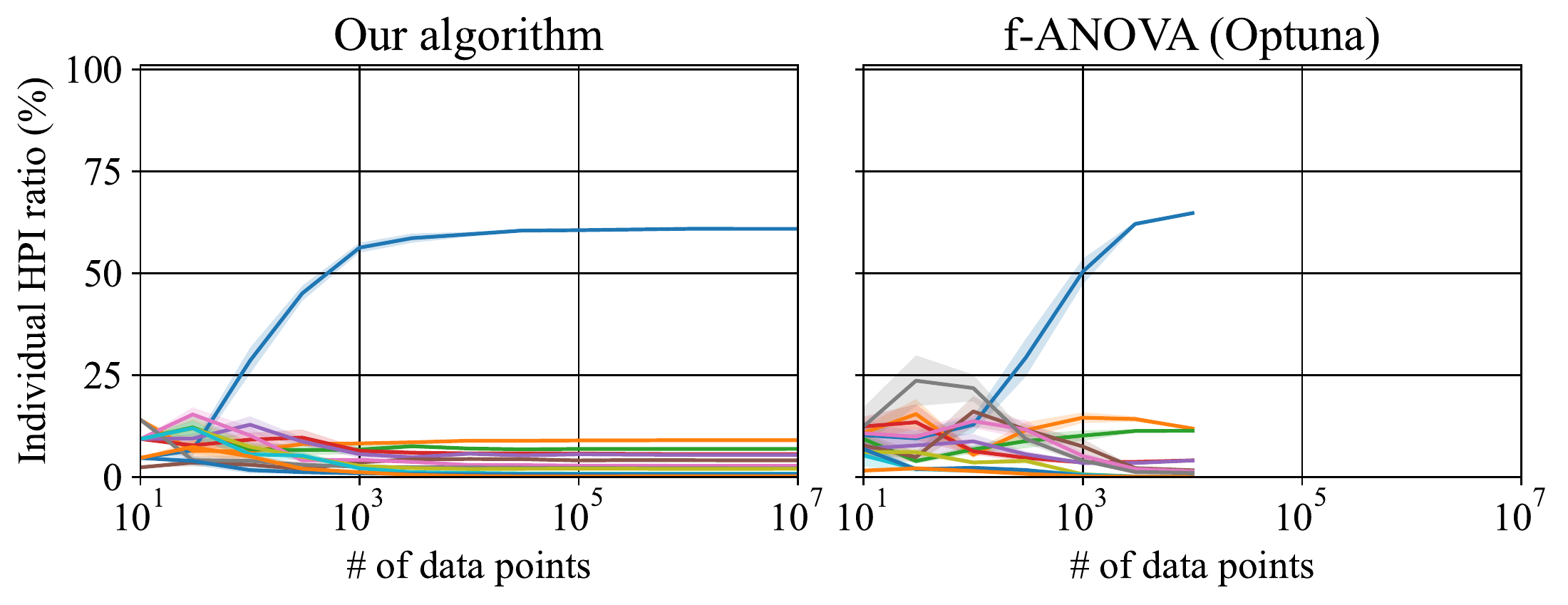}
  \caption{
    The plot for the sample efficiency of our algorithm
    using \texttt{CIFAR10} in JAHS-Bench-201.
    The horizontal axis is the number of data points to be used
    in the training of our algorithm (PED-ANOVA) and f-ANOVA,
    and the vertical axis is the HPI ratio of each HP.
    The colors of plots were determined by the ranking of HPI
    in each setting
    and the weak color band shows the standard error of
    each HPI ratio over 10 independent runs.
    \textbf{Left}:
    the results for our algorithm.
    The HPI ratio starts to converge from $10^4$ data points.
    \textbf{Right}:
    the results for f-ANOVA by Optuna.
    Due to the expensive computation,
    we could run only until $10^4$ data points
    and it did not exhibit convergence.
  }
  \label{appx:experiments:fig:sample-efficiency}
\end{figure}

\section{Additional Experiments for Real-World Usecase by JAHS-Bench-201}
\label{appx:experiments:section}

\subsection{Details of JAHS-Bench-201}
\label{appx:experiments:section:dataset-detail}
JAHS-Bench-201~\citeappx{bansal2022jahs} is a collection of surrogate benchmarks for HPO which has one of the largest search spaces in extant literature.
In a surrogate benchmark, we provide HP configurations to a surrogate model,
e.g. Random Forest, or XGBoost,
and then the surrogate model returns the predicted performance metric values
for the corresponding HP configurations.
Those surrogates are trained on a set of observations, which are pairs of HP configurations
and the performance metric.
This is in contrast to tabular benchmarks,
which query pre-recorded performance metric values
from a static table and, thus,
cannot handle continuous parameters,
whereas surrogate benchmarks can.
In our experiments, we used the grid space in Table~\ref{appx:experiments:tab:jahs-search-space}
with 12 dimensions, resulting in 41M HP configurations.
Note that we fixed the fidelity parameters
``Resolution Multiplier'' and ``Training Epochs''
in the original paper~\cite{bansal2022jahs} to 1.0 and 200, respectively.
Here, ``Choices used in our experiments'' shows the lattice points ($N = 41{,}343{,}750$) used in our experiments,
which includes our discretization of the continuous parameters \texttt{Learning Rate} and \texttt{Weight Decay} from JAHS-Bench-201.
This benchmark provides XGBoost surrogate models
that predict the validation accuracy (and many other metrics)
of neural networks on
three different image classification datasets
(\texttt{CIFAR10}, \texttt{Fashion-MNIST}, and \texttt{Colorectal-Histology})
with each HP configuration. 
To train the surrogate models,
the authors trained deep neural networks
with 161M different HP configurations.
In Figure~\ref{appx:experiments:fig:datasets-performance-dist},
we show the distributions of the validation accuracy on each dataset
and those distributions show a slow evolution at
the high cumulative probability domain as the blue line in Figure~\ref{appx:theoretical-details:fig:scale-ignorance}.

\subsection{Sample Efficiency}
\label{appx:experiments:section:sample-efficiency}
In this experiment, we would like to show
that it is important to use as many data points as possible
to obtain precise interpretation.
Figure~\ref{appx:experiments:fig:sample-efficiency}
presents the HPI ratios by each algorithm and
their standard error over $10$ independent runs.
Based on the result of our algorithm,
the HPI ratios start to converge from $10^4$ training data points.
On the other hand, Figure~\ref{appx:experiments:fig:sample-efficiency}
shows that f-ANOVA does not converge with $10^4$ data points.
It implies that we should use as many data points as possible
to analyze benchmark datasets,
and thus the scalability of our algorithm is desirable.

\begin{table*}[t]
  \begin{center}
    \caption{
      HPI of \texttt{Fashion-MNIST} and \texttt{Colorectal-Histology}.
      The ratio of HPI by percentage (\textbf{HPI ratio}) computed by $v_d/\sum_{d^\prime=1}^{D} v_{d^\prime}$.
      The top-$3$ HPs are bolded.
      \textbf{Odd cols.} (Original):
      HPI by original f-ANOVA on $g(\xv) \coloneqq \min(f(\xv), f^{\gamma^\prime})$.
      \textbf{Even cols.} (Ours):
      HPI by PED-ANOVA.
    }
    \vspace{-4mm}
    \label{appx:experiments:tab:jahs-hpi-list}
    \makebox[0.98 \textwidth][c]{       
      \resizebox{0.98 \textwidth}{!}{
        \begin{tabular}{lc|cc|cc|c!{\vrule width 1pt}c|cc|cc|c}
                              & \multicolumn{12}{c}{HPI ratio (\%)}                                                                                                                                                                                                                                                                                           \\
          \toprule
          Dataset             & \multicolumn{6}{c}{\texttt{Fashion-MNIST}} & \multicolumn{6}{c}{\texttt{Colorectal-Histology}}                                                                                                                                                                                                                                \\
          \toprule
          Hyperparameter      & Normal                                     & \multicolumn{2}{c|}{Global 0.1}                   & \multicolumn{2}{c|}{Global 0.01} & Local          & Normal         & \multicolumn{2}{c|}{Global 0.1} & \multicolumn{2}{c|}{Global 0.01} & Local                                                                              \\

                              & Original                                   & Ours                                              & Original                         & Ours           & Original       & Ours                            & Original                         & Ours           & Original       & Ours           & Original       & Ours           \\
          \midrule
          Learning rate       & 5.66                                       & \textbf{12.39}                                    & 10.04                            & \textbf{11.81} & 15.12          & \textbf{13.30}                  & 1.64                             & \textbf{16.31} & \textbf{33.66} & \textbf{15.38} & \textbf{31.92} & \textbf{14.14} \\
          Weight decay        & 6.81                                       & 6.60                                              & 10.37                            & 4.76           & \textbf{15.55} & 3.15                            & 0.68                             & 2.96           & 5.07           & 2.42           & 3.72           & 1.58           \\
          Activation function & 0.76                                       & 0.34                                              & 0.14                             & 1.19           & 0.15           & 2.66                            & 0.01                             & 0.08           & 0.08           & 0.20           & 0.09           & 0.44           \\
          TrivialAugment      & 0.05                                       & 1.59                                              & 0.03                             & 9.87           & 0.06           & \textbf{26.73}                  & 0.11                             & \textbf{19.70} & 8.63           & \textbf{20.09} & \textbf{11.50} & \textbf{23.24} \\
          \midrule
          Depth multiplier    & 0.05                                       & 1.33                                              & 0.07                             & 0.47           & 0.16           & 0.01                            & 0.09                             & 2.05           & 1.88           & 4.09           & 1.65           & 7.47           \\
          Width multiplier    & 0.81                                       & \textbf{40.65}                                    & 3.84                             & \textbf{38.67} & 7.10           & \textbf{29.98}                  & 0.73                             & \textbf{45.33} & \textbf{36.04} & \textbf{39.72} & \textbf{40.46} & \textbf{29.58} \\
          \midrule
          Operation 1         & \textbf{15.85}                             & \textbf{11.54}                                    & \textbf{13.60}                   & \textbf{14.52} & 10.59          & 11.50                           & \textbf{11.33}                   & 3.25           & 2.24           & 3.96           & 1.66           & 4.54           \\
          Operation 2         & 1.62                                       & 3.93                                              & 1.56                             & 6.54           & 1.73           & 7.43                            & 4.85                             & 1.36           & 0.90           & 1.96           & 0.75           & 2.44           \\
          Operation 3         & \textbf{42.41}                             & 10.34                                             & \textbf{35.74}                   & 4.61           & \textbf{26.48} & 0.72                            & \textbf{64.26}                   & 5.16           & \textbf{8.85}  & 6.60           & 6.03           & 7.92           \\
          Operation 4         & 0.36                                       & 1.68                                              & 0.21                             & 2.26           & 0.31           & 3.07                            & 0.12                             & 0.47           & 0.25           & 0.39           & 0.30           & 0.30           \\
          Operation 5         & 3.11                                       & 0.91                                              & 1.97                             & 0.42           & 1.29           & 0.12                            & 4.72                             & 0.72           & 0.57           & 1.41           & 0.55           & 2.78           \\
          Operation 6         & \textbf{22.51}                             & 8.69                                              & \textbf{22.41}                   & 4.88           & \textbf{21.46} & 1.32                            & \textbf{11.47}                   & 2.61           & 1.82           & 3.78           & 1.36           & 5.57           \\
          \bottomrule
        \end{tabular}
      }
    }
  \end{center}
  \vspace{-3mm}
\end{table*}

\begin{figure*}
  \centering
  \includegraphics[width=0.98\textwidth]{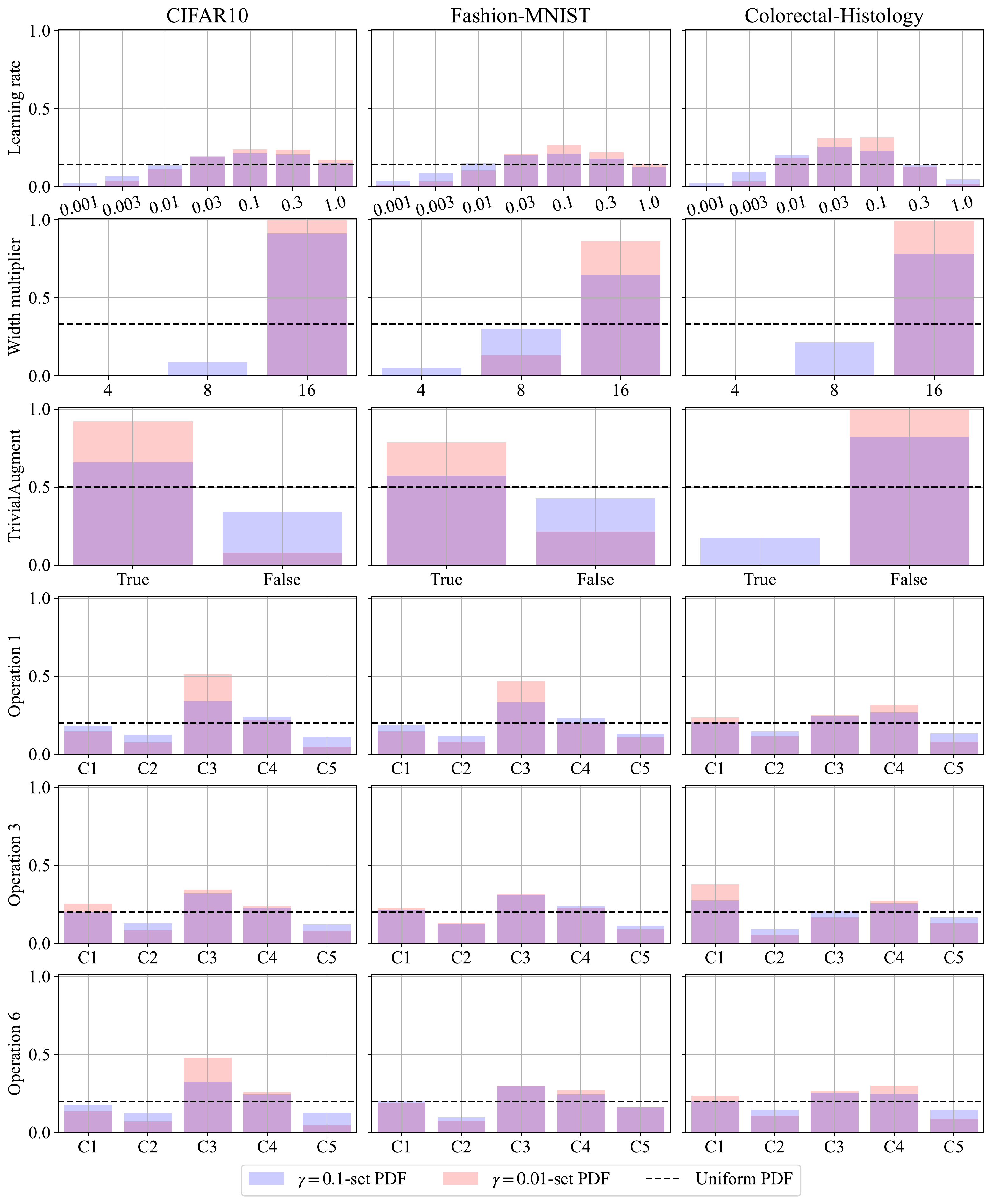}
  \caption{
    The distributions of important HPs in JAHS-Bench-201.
    The left, center, and right columns
    correspond to the results on
    \texttt{CIFAR10}, \texttt{Fashion-MNIST}, and \texttt{Colorectal-Histology}.
    The red shadows show the $\gamma=0.01$-set PDFs,
    the blue shadows show the $\gamma=0.1$-set PDFs, and
    the black dashed lines show the uniform PDFs.
    PED between a black line and a blue shadow
    is \texttt{Global 0.1},
    PED between a black line and a red shadow
    is \texttt{Global 0.01},
    and PED between a red shadow and a blue shadow
    is \texttt{Local}
    in Table~\ref{appx:experiments:tab:jahs-hpi-list}.
    Notice that C1 -- C5 correspond to the order of Table~\ref{appx:experiments:tab:jahs-search-space}
    and the overlap between the red and the blue shadows looks purple although they are separated shadows.
  }
  \label{appx:experiments:fig:jahs-histogram}
\end{figure*}

\subsection{Additional Results of Section~\ref{main:experiments:section}}
\label{appx:experiments:section:additional-results}
In this section, we show the additional results of the analysis on JAHS-Bench-201.
Table~\ref{appx:experiments:tab:jahs-hpi-list}
presents the HPI ratios of each HP on each dataset
and Figure~\ref{appx:experiments:fig:jahs-histogram}
visualizes the $\gamma$-set PDFs for each HP.
As in Section~\ref{main:experiments:section},
we check the RQs based on the results for each dataset
except \texttt{CIFAR10}.

For RQ1, we compare the column \texttt{(Global 0.1, Ours)} to \texttt{(Global 0.1, Original)} and
the column \texttt{(Global 0.01, Ours)} to \texttt{(Global 0.01, Original)} in Table~\ref{appx:experiments:tab:jahs-hpi-list}.
Although both PED-ANOVA and f-ANOVA provide similar sets of top HPs in \texttt{Colorectal-Histology},
we yielded different patterns in \texttt{Fashion-MNIST}.
Since the $\gamma$-set PDFs for \wm\ are peaked at $16$ in Figure~\ref{appx:experiments:fig:jahs-histogram},
\wm\ should be large in f-ANOVA;
however, f-ANOVA did not provide such interpretation.
This was probably due to the fact that the performance on \texttt{Fashion-MNIST} is already saturated
at the top-$10\%$ as seen in Figure~\ref{appx:experiments:fig:datasets-performance-dist},
and thus our method can provide similar sets of important HPs unless the performance is already saturated at a given quantile $\gamma^\prime$.

For RQ2, we compare the column \texttt{(Global 0.1, Ours)} and \texttt{(Global 0.01, Ours)} to \texttt{(Normal, Original)}.
As discussed in Section~\ref{main:experiments:section},
we found the misclassification of \texttt{Op.3} as the most important and
of \ta\ as the least important HP on \texttt{CIFAR10} and the same phenomenon happens to both datasets.
Therefore, scale invariance indeed helps to successfully identify HPI for HPs that would have been misclassified by the \texttt{(Normal, Original)} setting.

For RQ3, we compare the column \texttt{(Global 0.01, Ours)} to \texttt{(Local, Ours)}.
We observe that the HPI of \wm\ drops from the \texttt{Global 0.01} setting to the \texttt{Local} setting.
Simultaneously, the HPI of \ta\ increases sharply across the same.
This suggests that optimizing \wm\ is no longer important when moving from the top-$10\%$ to the top-$1\%$ performance
but optimizing \ta\ is very important.
The reason behind this change becomes clear when we observe the change in $\gamma$-set PDFs of the two HPs in Figure~\ref{appx:experiments:fig:jahs-histogram}.
Both the $\gamma$-set PDFs for \wm\ are sharply peaked at $16$,
indicating that no further optimization is needed on \wm.
However, the $\gamma$-set PDFs for \ta\ only start peaking at the value \texttt{True} for the $\gamma=0.01$-set PDF especially in \texttt{Fashion-MNIST}.
This clearly demonstrates that local HPI is necessary for deriving the correct interpretation in the top-$\gamma^\prime$ quantiles,
since \texttt{(Local, Ours)} successfully identifies the relative importance of optimizing the two HPs.
Last but not least, if both global and local HPI with wished quantiles $\gamma, \gamma^\prime$ exhibits low values,
removing such HPs, e.g. \texttt{Activation function}, is expected to have a less negative impact although it is insecure to remove HPs, e.g. \ta\ in \texttt{Fashion-MNIST}, only by looking at global HPI.

\section{Practical Usecases and Limitations}
\label{appx:use-cases:section}
In this section, we first discuss the advantages
and the limitations of our method
and then describe the usecases of our method.

\subsection{Advantages and Limitations}
We list the advantages
and limitations, which are not discussed in the main paper.
The advantages of our method are that:
\begin{enumerate}
  \item our method can handle multi-output functions and
  classification problems because
  we can measure HPI as long as the objective function
  can be divided by a specific threshold,
  \item the meaning of HPI, which is how important each HP is to achieve the top-$\gamma^\prime$
  quantile, is more clear compared to global f-ANOVA~\citeappx{hutter2014efficient},
  \item we can handle infinity and missing values without
  preprocessing thanks to the scale ignorance nature, and
  \item the implementation is very simple.
\end{enumerate}
On the other hand, the limitations of our method are that:
\begin{enumerate}
  \item since the computation of higher orders of HPI requires
  exponential amounts of time complexity $O(\prod_{d \in s} n_d^2)$~\footnote{
    We can reduce the complexity with some efforts or Monte-Carlo sampling.
  } if we analyze the interaction in $\X_s$,
  our method is not suitable for the approximation of higher order HPI with many grids $n_d$,
  \item the exact percentage, i.e. $v_s/v_0$, is not available
  in high dimensions because it is hard to obtain 
  the precise variance $v_0$, i.e. the highest order HPI, and
  \item the marginal objective function (see Figure 1 in \citewithname{hutter2014efficient})
  is not available (instead, we yield distributional visualizations as in Figure~\ref{main:validation:fig:local-validation}).
\end{enumerate}
Note that even when we do not have $v_0$,
the ratio between two marginal variances
$v_d / v_{d^\prime}$
have the same meaning as the ratio of percentages.
$(v_d / v_0) / (v_{d^\prime} / v_0) = v_d / v_{d^\prime}$.

\subsection{Post-Hoc Analysis of HPO}
\label{appx:use-cases:section:posthoc-hpo}
While post-hoc analysis of HPO is an obvious application for our method,
we would like to note that
the appropriate usage of our method
is to apply local HPI with $\gamma = 1$ rather than to
apply global HPI in Eq.~(\ref{appx:theoretical-details:eq:global-hpi}).
More specifically, we should use $p_d(x_d | \D)$, which is a KDE built by the whole observations $\D$,
instead of $u(\X_d)$.
This relates to the sampling bias of (non-random) HPO methods~\citeappx{moosbauer2021explaining}.
For example, since Bayesian optimization
tries to exploit knowledge, the samples would not be generated from the uniform distribution
and they concentrate in local spaces.
Therefore, $p_d(x_d | \D)$ is often dissimilar to the uniform PDF $u(\X_d)$
and global HPI in Eq.~(\ref{appx:theoretical-details:eq:global-hpi}) is biased.
When applying PED-ANOVA to samples obtained by Bayesian optimization,
high HPI in global HPI indicates that the corresponding HP was easy to optimize even with random search,
and high HPI in local HPI with $\gamma = 1$ indicates that the corresponding HP was particularly paid attention to by the sampler.
If the goal of the analysis is to identify easy-to-find important HPs,
global HPI is appropriate,
but if the goal of the analysis is to identify HPs that should be intensively searched,
local HPI with $\gamma = 1$ is more appropriate.

In the same vein, we can apply our method to the results of constrained optimization.
Typically, we would like to know the HPI in feasible domains.
In such a case,
we first build the PDF for feasible domains $p_d(x_d | \X_{d,\mathrm{feasible}})$
as local space
and we define another PDF $p_d(x_d | \X_{d,\mathrm{feasible}}^{\gamma})$, e.g.
the PDF of the top-$\gamma$-quantile configurations in feasible domains.
Then we can compute HPI via PED.

\subsection{Search Space Reduction}
\label{appx:use-cases:section:space-reduction}
Our method is useful for search space designs
and we would like to discuss the possibility
using Figure~\ref{main:validation:fig:local-validation}
in Section~\ref{main:validation:section:performance-validation}.
In the figures,
we presented the uniform PDF,
$\gamma$-set PDFs, and
$\gamma^\prime$-set PDFs
built using the random samples
where $\gamma = 0.1, \gamma^\prime = 0.01$.
We could approach search space reduction from two perspectives:
(1) domain reduction of each HP using global HPI
and (2) HP selection using local HPI.

For the domain reduction,
we need to pay attention to HPs with higher global HPI.
In this example, $x_1$ and $x_2$ have higher global HPI,
and both $\gamma$-set PDFs have a peak.
Practitioners could sample these HPs only from
the domains where the $\gamma$-set PDFs exhibit a larger value than the uniform PDF, i.e. $p_d(x_d | \D^\gamma) \geq u_d$.

On the other hand for the HP selection,
we need to pay attention to HPs with low HPI both in the local and global spaces.
For example, while we can conclude that $x_3, x_4$ are not important
if we rely only on global HPI,
$x_3$ is more important than $x_1$ in the local space as seen in the figure.
For this reason, we should not discard $x_3$;
however, as the local HPI of $x_4$ is small and close to zero,
it makes sense to remove $x_4$ from the search space
and fixes $x_4$ to a default value.
Note that it is also not appropriate to discard HPs with low HPI only in the local space
because low local HPI just implies that the HPs have a similar trend
both in the global and local spaces and the HPs are likely to have more interaction effects compared to the other HPs.

\subsection{Exploratory Data Analysis}
When we have a dataset $\{(\xv_n, y_n)\}_{n=1}^N$,
we can first define a KPI $y^\star \coloneqq y^\gamma$
and pick the data points such that $y_n \leq y^\star$
and define it as $\D^\gamma$.
Then we obtain the marginal $\gamma$-set PDFs
$p_d(x_d|\D^\gamma)$ for each feature in the global scale.
By plotting those PDFs independently,
we can know what features 
might lead to a good KPI in which ranges
and we can also know what features might not
affect the KPI.

For the usage of local HPI,
first recall that $\Xg, \Xgp$ only require $\Xgp \subseteq \Xg$,
and thus we can define local spaces such that
$\D^\gamma = \{\xv_n | \xv_n \in \D, y^{\gamma_1} \leq y_n \leq y^{\gamma_2}\}$
and $\D^{\gamma^\prime} = \{\xv_n | \xv_n \in \D, y^{\gamma^\prime_1} \leq y  \leq y^{\gamma^\prime_2}\}$
where $y^{\gamma_1} \leq y^{\gamma^\prime_1} \leq y^{\gamma^\prime_2} \leq y^{\gamma_2}$.
This analysis allows practitioners to know what HPs might be
the key to improving the KPI in the local space.

\bibliographystyleappx{bib-style}
\bibliographyappx{ref}

\else

\customlabel{appx:background:section:general-anova}{A.3}
\customlabel{appx:background:eq:general-marginal-mean}{19}
\customlabel{appx:background:eq:general-zero-centered-marginal-mean}{20}
\customlabel{appx:background:eq:general-marginal-var}{21}
\customlabel{appx:theoretical-details:section}{B}
\customlabel{appx:theoretical-details:fig:ped-conceptual}{9}
\customlabel{appx:global-hpi:section}{B.2}
\customlabel{appx:theoretical-details:eq:global-hpi}{27}
\customlabel{appx:theoretical-details:proposition:global-hpi}{1}
\customlabel{appx:theoretical-details:section:local-hpi}{B.3}
\customlabel{appx:theoretical-details:eq:local-hpi}{28}
\customlabel{appx:theoretical-details:theorem:local-hpi}{2}
\customlabel{appx:theoretical-details:fig:toy-example-viz}{10}
\customlabel{appx:theoretical-details:fig:length-vs-hpi}{11}
\customlabel{appx:theoretical-details:section:difference-from-prior-works}{B.4}
\customlabel{appx:theoretical-details:section:analysis-clipped}{B.4.1}
\customlabel{appx:theoretical-details:eq:local-hpi-by-hutter}{29}
\customlabel{appx:theoretical-details:fig:global-eg-on-toy}{12a}
\customlabel{sub@appx:theoretical-details:fig:global-eg-on-toy}{(a)}
\customlabel{appx:theoretical-details:fig:local-eg-on-toy}{12b}
\customlabel{sub@appx:theoretical-details:fig:local-eg-on-toy}{(b)}
\customlabel{appx:theoretical-details:fig:global-vs-local}{12}
\customlabel{appx:theoretical-details:section:analysis-on-func}{B.4.2}
\customlabel{appx:theoretical-details:eq:toy-func}{31}
\customlabel{appx:theoretical-details:section:splits}{B.5}
\customlabel{appx:theoretical-details:section:scale-ignorance}{B.6}
\customlabel{appx:theoretical-details:fig:scale-ignorance}{13}
\customlabel{appx:proofs:section}{C}
\customlabel{appx:proofs:section:assumptions}{C.1}
\customlabel{appx:proofs:section:proof-of-global-hpi}{C.2}
\customlabel{appx:proofs:eq:def-of-local-volume}{41}
\customlabel{appx:proofs:eq:binary-expectation-is-gamma}{42}
\customlabel{appx:proofs:lemma:integral-of-marginal-dist}{1}
\customlabel{appx:proofs:eq:general-global-hpi}{47}
\customlabel{appx:proofs:section:proof-of-local-hpi}{C.3}
\customlabel{appx:proofs:eq:marginal-mean-is-density-ratio}{49}
\customlabel{appx:proofs:eq:binary-expectation-is-gamma-ratio}{50}
\customlabel{appendix:proofs:eq:general-local-importance}{52}
\customlabel{appx:proofs:section:discretization-error}{C.4}
\customlabel{appx:proofs:proposition:discretization-error}{2}
\customlabel{appx:experiments:section}{D}
\customlabel{appx:experiments:section:dataset-detail}{D.1}
\customlabel{appx:experiments:section:sample-efficiency}{D.2}
\customlabel{appx:experiments:section:additional-results}{D.3}
\customlabel{appx:experiments:tab:jahs-search-space}{2}
\customlabel{appx:experiments:fig:datasets-performance-dist}{14}
\customlabel{appx:use-cases:section}{E}
\customlabel{appx:experiments:tab:jahs-hpi-list}{3}
\customlabel{appx:experiments:fig:sample-efficiency}{15}
\customlabel{appx:use-cases:section:posthoc-hpo}{E.2}
\customlabel{appx:experiments:fig:jahs-histogram}{16}
\customlabel{appx:use-cases:section:space-reduction}{E.3}

\fi

\end{document}